\title{OD Transactions Paper}
\documentclass[twoside, 11pt]{article}


\usepackage{jmlr2e}




\firstpageno{1}


\usepackage{url}

\usepackage{cite}
\usepackage{graphicx}

\usepackage{amsmath}
\usepackage{amssymb}
\usepackage{yfonts}
\usepackage{bm}


\usepackage[T1]{fontenc}
\usepackage[libertine,cmintegrals,cmbraces,vvarbb]{newtxmath}
\usepackage[scaled=0.95]{inconsolata}
\usepackage{fix-cm}

\usepackage{color}
\usepackage{soul}
\usepackage{enumitem}
\usepackage[us]{datetime}
\usepackage{array, booktabs, makecell}  

\usepackage[table]{xcolor} 

\newcommand{\argmin}{\operatornamewithlimits{arg\ min}}

\newcommand{\beq} {\begin{equation}}
\newcommand{\eeq} {\end{equation}}
\newcommand{\beqs} {\begin{equation*}}
\newcommand{\eeqs} {\end{equation*}}
\newcommand{\f}[2]{\frac{#1}{#2}}

\newcommand{\ck}{\ensuremath{\mathfrak{K}}}
\newcommand{\sB}{\ensuremath{\mathcal{B}}}
\newcommand{\sE}{\ensuremath{\mathcal{E}}}
\newcommand{\sF}{\ensuremath{\mathcal{F}}}

\newcommand{\sH}{\ensuremath{\mathcal{H}}}
\newcommand{\sJ}{\ensuremath{\mathcal{J}}}
\newcommand{\tsJ}{\ensuremath{{\tilde{\mathcal{J}}}}}
\newcommand{\sL}{\ensuremath{\mathcal{L}}}
\newcommand{\sP}{\ensuremath{\mathcal{P}}}
\newcommand{\sR}{\ensuremath{\mathcal{R}}}
\newcommand{\sS}{\ensuremath{\mathcal{S}}}
\newcommand{\sT}{\ensuremath{\mathcal{T}}}
\newcommand{\sX}{\ensuremath{\mathcal{X}}}
\newcommand{\sY}{\ensuremath{\mathcal{Y}}}
\newcommand{\sZ}{\ensuremath{\mathcal{Z}}}
\newcommand{\bF}{\ensuremath{\mathfrak{F}}}
\newcommand{\bL}{\ensuremath{\mathfrak{L}}}
\newcommand{\Mm}{\ensuremath{\mathfrak{m}}}     
\newcommand{\E}{\mathbb{E}}
\newcommand{\Rbb}{\mathbb{R}}

\newcommand{\n}[1]{\|#1\|}
\newcommand{\nbig}[1]{\ensuremath{\bigg \| #1 \bigg\|}}
\newcommand{\set}[1]{\{#1\}}
\newcommand{\ip}[1]{\langle #1 \rangle}
\newcommand{\onev}{\ensuremath{\textbf{1}}}
\newcommand{\BgB}[1]{\ensuremath{\big(#1\big)}}
\newcommand{\BggB}[1]{\ensuremath{\bigg(#1\bigg)}}

\DeclareMathOperator{\sgn}{sgn}

\newcommand{\qed}{\nobreak \ifvmode \relax \else
    \ifdim\lastskip<1.5em \hskip-\lastskip
    \hskip1.5em plus0em minus0.5em \fi \nobreak
    \vrule height0.75em width0.5em depth0.25em\fi}




\begin{document}
        
\title{Kernel Embedding Approaches to Orbit Determination of Spacecraft Clusters}
    
\author{\name Srinagesh Sharma \email srinag@umich.edu \\
       \addr Electrical Engineering and Computer Science\\
       University of Michigan
       \AND
       \name James W. Cutler \email jwcutler@umich.edu \\
       \addr Aerospace Engineering\\
       University of Michigan}

    
    \maketitle
    
    \begin{abstract}
        This paper presents a novel formulation and solution of orbit determination over finite time horizons as a learning problem. We present an approach to orbit determination under very broad conditions that are satisfied for n-body problems. These weak conditions allow us to perform orbit determination with noisy and highly non-linear observations such as those presented by range-rate only (Doppler only) observations. We show that domain generalization and distribution regression techniques can learn to estimate orbits of a group of satellites and identify individual satellites especially with prior understanding of correlations between orbits and provide asymptotic convergence conditions. The approach presented requires only visibility and observability of the underlying state from observations and is particularly useful for autonomous spacecraft operations using low-cost ground stations or sensors. We validate the orbit determination approach using observations of two spacecraft (GRIFEX and MCubed-2) along with synthetic datasets of multiple spacecraft deployments and lunar orbits. We also provide a comparison with the standard techniques (EKF) under highly noisy conditions.
    \end{abstract}
    \begin{keywords}
        Orbit Determination, Distribution Regression, Domain Generalization, Marginal Transfer Learning, Kernel Methods
    \end{keywords}

    \section{Introduction}

    The last decade has seen a rapid growth in space scientific endeavors and exploration. Increased access to space has resulted in an exponential growth in the number and types of space objects \citep{richardson2015small, berthoud2016set} . Due to mission requirements, communication, debris avoidance, exploration etc., there is an inherent need to track and catalogue space objects. Information for tracking is traditionally cooperative through transponders or GPS-based system, or uncooperatively through radars and telescopes. Orbit state estimation and estimate refinement from these different types of observations is computed through initial orbit determination approaches and Kalman filters \citep{vetter2007pfifty, wright2013odtk}.

    Low-cost access to space has also resulted in large numbers of high-risk, low-cost spacecraft systems such as nanosatellites and CubeSats, which can be deployed near simultaneously in large numbers \citep{bandyopadhyay2016review}. These deployments can be over widely varying orbit ranges from low Earth to deep space \citep{schoolcraft2017marco}. Requirements for successful and efficient mission operations for such growing numbers of spacecrafts have lead to the development of ground station networks with widely varying communications and costs \citep{cutler2006framework, minelli2012mobile, cheung2015next}. 
The increasing number of clustered spacecraft deployments have also lead to an increased nead in autonomy within these networks where scheduling, operations, and tracking are performed autonomously \citep{cutler2006framework, cheung2015next, leveque2007global, colton2016supporting}. 
    
    Achieving mission autonomy for tracking requires consistent access to general orbit determination techniques that can be applied over very broad sets of scenarios. From an operational perspective, it is of great importance in the development of autonomy in ground station network operations if such networks could be used to perform orbit determination and identification of CubeSat constellation deployments. In current spacecraft operational scenarios, such autonomous orbit determination is performed through the use of transponders or GPS receivers on spacecraft. However, transponder-based state estimation can only be performed when spacecraft position uncertainty is small enough to overcome link budget constraints and GPS receivers are applicable only for near Earth orbit satellites. This necessarily implies low initial uncertainty in spacecraft position. In addition, a number of Earth orbiting satellites lack direct navigation capabilities due to absence of GPS systems or transponders \citep{martin2016interplanetary}. For true mission autonomy, it is critical to have methods of orbit determination that could be performed without link budget constraints and transponder uncertainty and timing constraints, where visibility of transmissions and observability of the state through such transmissions are the only criteria that need to be met. 
    
    In addition to autonomous ground station network operations, there are two other areas where a general orbit determination technique is of critical importance. One is in tracking of orbital debris and the other in general tracking of celestial objects. Large numbers of short lifecycle deployments have also resulted in slow decaying space debris whose tracking is critical for avoidance and mission survival \citep{klinkrad2010space, rossi2006collision, tommei2007orbit, schildknecht2007optical, lewis2011space}. However, the approaches to perform debris identification and tracking vary vastly from identification of functional spacecraft \citep{tommei2007orbit, farnocchia2010innovative}. Due to large numbers of space objects, there is a need for autonomous spacecraft identification and tracking using varied, statistically independent and spatially distributed sets of observations. Celestial object tracking, such as the tracking of asteroid trajectories, have different characteristics in observations. The observations are available only over short sections of the trajectory, and as a consequence, short arc methods have become popular \citep{milani2004orbit, ansalone2013genetic, vallado1998accurate}.

    The evolution of trajectories around celestial objects can be modeled using dynamical systems theory \citep{vallado2001fundamentals}. These trajectories cannot be described in closed form except in the most simplified of scenarios (Keplerian orbits). Traditionally, the evolution of observations of some characteristic features of these dynamical systems are modeled as evolution of noisy processes connected to an underlying observable variation. The goal of orbit determination is to estimate an observable state vector to facilitate trajectory prediction. 
    
    Traditionally, the orbit determination problem is treated as a non-linear filtering problem. When the observations allow for good initial estimates, it is possible to use successive prediction-correction to estimate state vectors \citep{vallado2001fundamentals, milani2010theory, wright2013odtk, lee2005nonlinear}. The standard technique for precision orbit determination is the extended Kalman filter (EKF) \citep{vallado2001fundamentals, vetter2007pfifty, milani2010theory}. The EKF is a suboptimal approximation of the Kalman filter for non-linear systems, which has been shown to converge asymptotically when the initial state of the system is in the linear region \citep{krener2003convergence}. Batch processing with equivalent Gauss-Newton methods are also used \citep{crassidis2011optimal}. A second popular approach is using Bayesian and particle filtering approaches for orbit determination \citep{lee2005nonlinear}, where a likelihood-conjugate prior distribution assumption is made regarding the filter parameters. In some approaches proposed by \citep{lee2005nonlinear}, kernel methods are also used in particle filtering. However, the dynamical models are still linearized and EKF based. There has been recent interest in developing methods for initial orbit determination using Gaussian mixture models \citep{psiaki2017gaussian, demars2013probabilistic}, where the distribution generated can be used as a prior distribution in developing a Gaussian mixture approximation of the batch least squares approach \citep{psiaki2017gaussian}.
        
    When the observations have significant noise variances or the system is highly non-linear, successful initialization of non-linear filters may not be possible. An example of this is Doppler only orbit determination. To the best of our knowledge, there exists no method to initialize filters through Doppler only observations (\citet{wright2013odtk} concurs). 


    In machine learning research, the areas of domain generalization \citep{baxter2000jair} and distribution regression \citep{poczos2013distribution} have received increasing attention in the recent years. In the domain generalization setting, the learning system is given unlabeled data to be classified, and must do so by learning to generalize from labeled datasets that represent similar yet distinct classification problems. This can be done through a variety of approaches such as adaptive complexity regularization \citep{baxter2000jair, maurer2009transfer, pentina2015multi}, mapping to common feature spaces \citep{pan2011domain, maurer2013sparse, muandet2013domain} and transfer learning through marginal distributions \citep{blanchard2011generalizing}. In distribution regression, the learning system needs to learn a map from a set of distributions to a separable Hilbert space where the access to the distribution is available only through its samples \citep{poczos2013distribution, szabo2016learning}. We apply the techniques in domain generalization and distribution regression for orbit determination and spacecraft identification.

    We present a novel and general approach to orbit determination of spacecraft and spacecraft constellations. It is a batch method which trades off computational complexity for significantly weaker requirements for tracking. The only requirements imposed are regularity of the observation and output spaces, observability over finite time and availability of observations sufficient to guarantee observability. We differ from traditional approaches in two crucial ways. 
    First, we consider algorithmic convergence in finite time periods with increasing density in observations. In the traditional treatment as non-linear filtering, observations are obtained at some rate (rate may be stochastic) as the dynamic system evolves through time and one is interested in convergence of the state estimates as \(T \rightarrow \infty\). If initial conditions for the filter are outside the linear range and observations are noisy, increased density in observations will not necessarily lead to convergence of estimates. However, we consider algorithms operating over compact (closed and bounded) subsets of time and observations, where convergence is guaranteed as number of observations and training data over this compact set is guaranteed without the need for initial conditions. This achieves estimation under a weaker set of assumptions than proposed in non-linear filtering. Second we shall treat the set of observations over the finite time period as i.i.d samples of time and observed features in the topological space (of finite time and the space of features) under consideration. Noisy versions of these samples can be embedded in Hilbert function spaces. When the stochastic behavior of noise and the observations are known, they can be used to generate example orbit observations. These can be used to train learning algorithms that can learn mappings from orbital parameters to observations. When the mapping satisfies observability and some regularity conditions, the observations can be used on the learning system to perform orbit determination. For spacecraft constellations, the learning system will also provide, as a by product, a label associated with spacecraft. The implementation of the learning system is general and can be integrated into existing ground station network architectures and its implementation does not change with the nature of observations collected. In this paper, we present its theoretical underpinnings, overview and results of implementations. We leave discussions of implementation architecture for future work.

    The learning system is based on two recent developments in machine learning by Blanchard et. al \citep{blanchard2011generalizing} and Szabo et. al \citep{szabo2016learning}. Distribution regression \citep{szabo2016learning} is used for estimating orbital parameters. Marginal prediction or transfer learning \citep{blanchard2011generalizing} is used for selecting embeddings for orbit determination with spacecraft constellations (to improve computational requirements).
     
    The contributions of this paper are as follows:
    \begin{enumerate}
        \item We present a novel model and method using techniques recently developed in machine learning to perform orbit determination of spacecraft and spacecraft constellations.
        \item We provide conditions under which such a system can be applied.
        \item We present consistency analysis for the concatenated application of marginal transfer learning and distribution regression.
        \item We present experimental results of orbit determination and classification of the GRIFEX and MCubed-2 spacecraft using such a system.
        \item We compare the performance of this system with existing EKF based orbit determination systems in the presence of noise.
        \item We present a synthetic orbit determination scenario of estimation of the orbit of a lunar spacecraft (a chaotic system) from one observation station with direction of arrival and range measurements.
    \end{enumerate}

    This paper is organized as follows. Section \ref{sec:background} provides definitions and a short overview of the learning techniques used. Section \ref{sec:probsetup} provides the detailed problem setting. We present mathematical modeling and the proposed approach in section \ref{sec:mlod}. The consistency analysis of the learning system for spacecraft constellations, connections to convergence bounds derived for the learning algorithms used are presented in section \ref{learningtheory}. We present a brief overview of the sampling and estimation architecture, experimental and synthetic data results in section \ref{sec:results}. Conclusions are given in \ref{sec:conclusions}.
    
    \section{Background} \label{sec:background}
    
    In this section we introduce our mathematical notation, and some theoretical concepts in probability and set theory that will be useful for analysis of the system.
    
    \paragraph*{Notation}
    Upper-case symbols are used to denote random variables or sets. Scalars or vectors are differentiated by context. Lower-case symbols are used to denote either instances of the random variable or known/observed constants. Script letters such as \(\sX, \sY\) etc. are used to denote a measurable space with \(\bF_{\sX}, \bF_{\sY}\) etc., denoting the corresponding Borel \(\sigma\)-algebra and \(P_X, P_Y\) denoting the probability distributions, respectively. The symbol \(\widehat{y}\) is used to denote an estimate of the corresponding true value \(y\).  Subscript \(T\) refers to the test system.
    
    \paragraph{Set distance} We shall define the distance between two measurable sets \(A\) and \(B\) to be \(d_S(A,B) = {\mathfrak{m}}(A \Delta B)\), where \(\mathfrak{m}\) is the Lebesgue measure and \(A \Delta B = A \setminus B \cup B \setminus A\).
    
    \paragraph{Probability kernels and the Prokhorov metric}
    The space of probability distributions on a compact metric space \(({\sX},d_{\sX})\) with Borel \(\sigma\)-algebra \(\bF_{\sX}\), is a metric space \(({\sB}_{\sX},d_P)\) (weak topology), where \(d_P\), the Prokhorov metric, is defined as
    \beq
    d_P(P_1,P_2) = \inf \{a: P_1(A) \leq P_2(A^a) + a \;\;\forall A \in \bF_{\sX} \text{ and vice versa}\}
    \eeq
    where \(A^a = \{s \in {\sX}: d(s,A) < a \}\) and \(d(s,A) = \inf\{d(s,s_A), s_A \in A\}\) For details see Chapter~2 of \citep{billingsley2013convergence}. Also, for any two random variables \(X,Y\) defined on \(\sX\) and \(\sY\), the conditional probability \(P(X | Y=y)\) is associated with a function \(\mu: \sY \rightarrow \sB_{\sX}\) (see lemma 1.37 and Chapter~5 in \citet{kallenberg2002foundations}). We shall call this function a probability kernel function.

    \subsection{Recent Techniques from Machine Learning}
     
    Now we present a brief review of two machine learning techniques recently proposed in literature that we have applied to our problem: distribution regression \citep{szabo2016learning} and transfer learning \citep{blanchard2011generalizing}.
    
    \paragraph{Distribution regression}
    Distribution regression \citep{szabo2016learning} is a technique to estimate the mappings from the space of distributions on a compact space \(\mathcal{X}\), \(\sB_{\sX}\) to  \(\sS\), a separable Hilbert space when the only access to the distribution is through samples drawn from it \citep{szabo2016learning}. Say we are given \(N\) training samples \(\set{\set{x_j^{(i)}}_{j=1}^{n_i}, s_i}_{i=1}^N\), drawn from a meta-distribution over \(\sB_{\sX} \times \sS\).
    The objective is to estimate a function \(r: \Phi(\sB_{\sX}) \rightarrow \sS\), where \(\Phi(\sB_{\sX})\) is the image of \(\sB_{\sX}\) under the mean embedding, such that
    \begin{equation*} 
    r^* = \argmin_{r \in {\sH}} \;\;{\E}[\n{r(\Phi(P_X)) - S}_\sS^2],
\end{equation*}
    where the mean embedding is defined as \(\Phi(P_X) = \int_{\sX} k(\cdot,x)dP_X\) for a kernel \(k\). The resulting regularized optimization and predictor is
    \begin{equation*}
        \begin{aligned}
            & \widehat{r}_{\xi_2} = \argmin \f{1}{N} \sum_{i=1}^N \n{r(\Phi(\widehat{P}_X^{(i)})) - s_i}_\sS^2 + \xi_2 \n{r}_{\sH}^2 \\
            \Rightarrow & \widehat{r}_{\xi_2} (\Phi(\widehat{P}_X)) = k_r (K + N \xi_2 I)^{-1}[s_1, s_2, \cdots, s_N]
\end{aligned}
\end{equation*}
    where \(K\) is the kernel matrix, \(k_r\) is the kernel vector of \(\Phi(\widehat{P}_X)\) with respect to the training distribution embeddings and \(\xi_2\) is the regularization variable \citep{szabo2016learning}. It has been shown that when the embedding is H\"{o}lder continuous with exponent h, this estimator is consistent and upper bounds for convergence can be obtained \citep{szabo2016learning}. Lastly, when space \(\sX\) is a Polish space, universal kernels that are dense in the space of continuous functions over compact metric spaces \citep{christmann2010universal,steinwart2002influence} can be used. Distribution regression will be used to perform estimation of orbital parameters.

    \paragraph{Marginal Prediction}
    Consider a Polish space \(\sX\), a binary classification space \({\sY} = \{-1,+1\}\), and a loss function \(L: \Rbb \times \sY \rightarrow \Rbb_{+}\). Let the space of distributions over \(\sX \times \sY\) be \(\sB_{\sX \times \sY}\). 
    Assume that there exists a distribution \(\lambda\) over \(\sB_{\sX \times \sY}\) such that \(\lambda = \lambda_{Y|X} \lambda_X\) and \(\lambda_{Y|X} = \delta_D\) almost everywhere, where \(\delta_D\) is the Dirac-Delta function. 
    For such functions, \(\exists h:\sB_{\sX} \times \sX \rightarrow \sY\) such that \(y=h(P_X,x)\). Let \({\sH}_{k}\) be the reproducing kernel Hilbert space (RKHS) associated with kernel \(k_1: \sX \times \sX \rightarrow \Rbb\). For \(\Phi(\sB_{\sX})\), the set of mean embeddings associated with \(\sB_{\sX}\), let \({\sH}_{k_P}\) be the RKHS associated with the kernel \(k_P: \Phi({\sB}_{\sX}) \times \Phi({\sB}_{\sX}) \rightarrow \Rbb\).  We seek an estimate \(h_{\sH}\) of \(h\) such that the following criterion is satisfied:
    \begin{equation*}
    h_{\sH} = \argmin_{h \in {\sH}_{\bar{k}}} {\E}_{P_{XY} \sim \lambda, (X,Y) \sim P_{XY}} [L(h(\Phi(P_X),X),Y)],
    \end{equation*}
    where \(\bar{k}: (\Phi({\sB}_{\sX}) \times {\sX}) \times (\Phi(\sB_{\sX}) \times \sX) \rightarrow \Rbb\) and \(\bar{k}((P_X,X)(P_{X^\prime},X^\prime)) = k_P(\Phi(P_X),\Phi(P_{X^\prime}))k_1(X,X^\prime)\).
    The resulting regularized problem has the formulation \citep{blanchard2011generalizing}
    \begin{equation*}
    \widehat{h}_{\xi_1} = \argmin_{h \in {\sH}_{\bar{k}}} \f{1}{N}\sum_{i=1}^N \f{1}{n_i}\sum_{j=1}^{n_i} L(h(\Phi(\widehat{P}_X^{(i)}),X_{ij}),Y_{ij}) + \xi_1 \n{h}_{{\sH}_{\bar{k}}}^2.
    \end{equation*}
    We shall use marginal prediction (or marginal transfer learning) to perform classification of feature vectors of multiple spacecraft.


    \section{Problem Setup} \label{sec:probsetup}
In its most generic case, the orbit determination problem of spacecraft using networked ground stations is essentially tracking an object with a network of sensors. Consider \(n_S\) objects with orbit parameters \(\{\Gamma_i\}_{i=1}^{n_S}\) drawn from the space \(\tilde{\sJ}^{n_S} = \sJ\) according to a probability distribution \(P_\Gamma\), which is known a priori and has compact support. 
    There are \(n_G\) ground stations acting as sensors.
The spacecraft system produces vectors \(F = \begin{bmatrix} F_1 & F_2 & \cdots & F_{n_G} & T_S \end{bmatrix}^T = \begin{bmatrix} \tilde{F} & T_S \end{bmatrix}^T\) over a set \(\sF = \tilde{\sF}\times \tilde{\sT}\) (datapoints in \(\tilde{\sF}\) generated over a time interval \(\tilde{\sT}\)). The connection between samples in the space \(\tilde{\sF}\) and the time random variable \(T_S\) is governed by a parameter \(z\) specific to the spacecraft and modulated by two related dynamic systems \(U\) and \(V\). Dynamic system \(V\) governs visibility of \(F\): for \(V(\gamma,t) = \begin{bmatrix} V_1(\gamma,t), V_2(\gamma,t), \cdots, V_{n_G}(\gamma, t) \end{bmatrix}\), if \(V_i(\gamma,t) \notin O_i, i \in \set{1,2, \cdots, n_G}\) then \(\tilde{F} = 0\). The dynamic system  \(U = [U_1, U_2, \cdots U_{n_G}]\) described by
    \begin{equation} \label{dsod0}
    \begin{aligned}
    \dot{\tilde{\gamma}} & = g_0(\tilde{\gamma}) \\
    \tilde{f}_i &= h_0(\tilde{\gamma},z) \\
    \tilde{\gamma}(0) &= \gamma,
    \end{aligned}
    \end{equation}
    governs the value of \(\tilde{F}_i\) when \(V_i(\gamma,t) \in O_i\). These observations are produced according to \(T_S \sim P(T_S|\Gamma = \gamma, Z=z)\) defined as
    \begin{equation*}
        P(T_S \in B | \Gamma = \gamma, Z=z) = P(T_S \in B | T_S \in \bigcup_{i=1}^{n_G} \set{t \in \tilde{\sT}: V_i(\gamma, t) \in O_i}, Z=z)
    \end{equation*}
    
    The unconditional distribution \(P(T_S | Z=z)\) is a characteristic of the system which is known. It is generally the case that the dynamic systems \(U\) and \(V\) have only partial differential equation based descriptions and no closed form descriptions. The PDE based equation can be used to draw samples of the evolution of \(U\). The sensors (ground stations) produce observations \(X = \begin{bmatrix} \tilde{X}_1 & \tilde{X}_2 & \cdots & \tilde{X}_{n_G} & T_{n_G} \end{bmatrix} \) of \(F\) distributed as \(P(X|F=f)\).
     
    For example, in a direction of arrival and range (DOAR) based orbit determination scenario for one spacecraft, \(\tilde{F}\) represents the theoretical noiseless vector of direction of arrival and range seen at the ground station at time \(T_S\), and \(\tilde{X}\) represents the observed Azimuth, Elevation and Range measurements at the ground station with timestamps \(T_G\).   U is the dynamic system of the DOAR, and \(z\) represents the parameters of the measurement system which are essential to draw samples of \(X\), such as noise characteristics and probability of measurement over \([0,T_{max}]\). The dynamic system \(V\) describes the elevation of the spacecraft with respect to the ground station such that we can perform measurements of U only if the elevation of the spacecraft at time t: \(V(\gamma)(t) \in [0,\pi/2]\). 
    
    With this scenario, the orbit determination problem can be stated as follows. Given \(P_\Gamma, P(T_S|z), U, V\) and \(P_{X|F}\) over the time interval \(\tilde{\sT}\) in a form sufficient enough to perform sampling, and observations \(\set{x_1,x_2,\cdots, x_{n_T}}\), which are samples of \(X\), we would like to estimate \(\{\gamma_i\}_{i=1}^{n_S}\), the orbital parameters. 
    
    \begin{remark}
    \begin{itemize}
        \item Note that in all but the most simplified case, closed form expressions for \(U\) and \(V\) are not known and are known only through differential equations and perturbation equations. Samples for \(U\) and \(V\) are drawn through propagators (which may be purely analytical or simplified). 
        \item We will present the theory for the orbit determination and classification with \(n_S = 2\) and point to techniques in literature which can be used to extend the algorithm to general \(n_S\).
        \item Here we assume that the feature vectors \(\set{x_1,x_2,\cdots, x_{n_T}}\) (including the time stamps of those feature vectors) are independent and identically distributed (i.i.d.) from a probability distribution known prior to generation of observations, even though the observations may be generated sequentially in time. While traditional treatment of dynamic system observations are as sample paths of random processes, we differ in two aspects: we consider only finite time treatment with random samples in time and we allow for multiple independent sets of sensors to produce \(P(X|F)\).
    \end{itemize}
    \end{remark}
   
    \section{Non-Parametric Orbit Estimation and Classification} \label{sec:mlod}
    We now present a mathematical framework and analysis of the above system.  
    
    \subsection{Framework} \label{prob_formulation}
    Consider the orbit determination scenario from Section \ref{sec:probsetup}. Given system parameters \(Z=z\) and the distribution of timestamps of observations \(P(T_S|z)\),  a probability distribution on the spacecraft output vectors, \(F\), is induced by the set of spacecraft orbit parameters, \(\Gamma = [\Gamma_1, \Gamma_2, \cdots, \Gamma_{n_S}]\). Samples of \(F\) generate samples of measurements, \(X\), at the sensor network. Note that the timestamps, \(T_S\) and \(T_G\), are not necessarily the same, especially when accounting for propagation delays through the channel.  This results in the graphical model (of the probability dependences) of the system as shown in Figure~\ref{fig:1}. 
    \begin{figure}[htbp]
        \centering
        \includegraphics[width=0.3\textwidth]{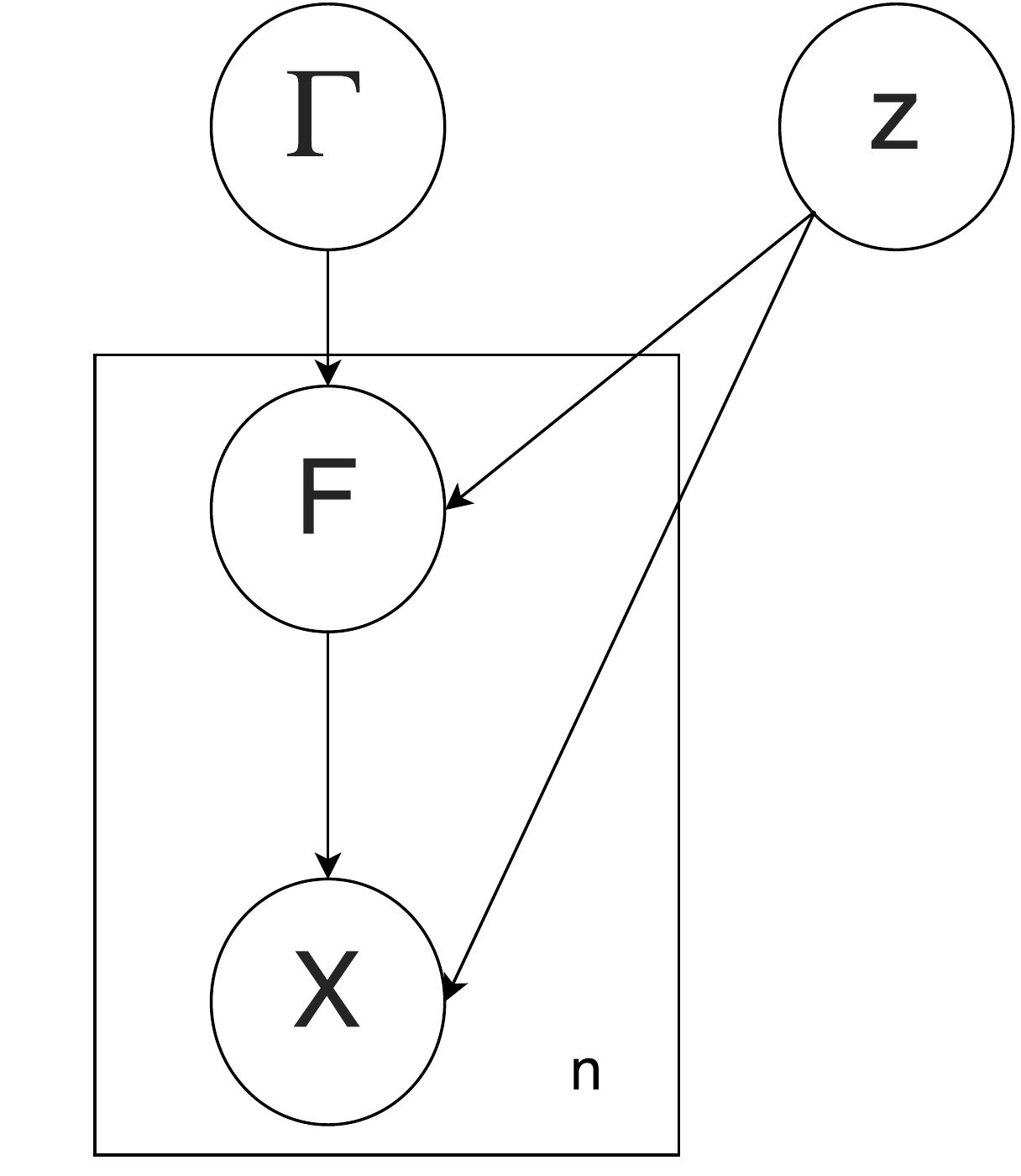}
            \caption{Graphical model of the orbit determination system.}
            \label{fig:1}
    \end{figure}
    
    Based on this, the probability distribution is split as
    
    \beq \label{pgmpdf1}
    P(\Gamma,F,X|z) = P(\Gamma)P(F|\Gamma,z)P(X|F,z).
    \eeq
    
    The conditional probability distribution \(P(F|\Gamma,z)\) is shaped by a deterministic non-linear dynamic model describing the system and operating on \(T_S\). 
    
    We define the observation function of the dynamic system as \(W: \sJ \rightarrow (\tilde{\sT} \rightarrow \tilde{\sF})\), which maps the initial conditions \(\sJ\) to the space of functions that in turn map from \(\tilde{\sT}\) to \(\tilde{\sF}\) such that \(W(\gamma)(t) = \onev_{V(\gamma,t) \in O} \circ U(\gamma,t)\) where \(\circ\) is the Hadamard or element-wise product. 
In the case of orbit determination, such a function is available through equation \eqref{dsod0} and models 
the orbit propagation due to gravity and other disturbance forces on a particular orbit, \(\gamma\).   For specific example forms of these 
astrodynamic equations, see \citet[chap.~9]{vallado2001fundamentals}.
Defining \(C\) and \(B\) as sets in the corresponding \(\sigma\)-fields of \(\tilde{F}\) and \(T_S\), and \(W(\gamma)^{-1}\) as the pre-image of \(W(\gamma)\), the conditional distribution \(P(F|\Gamma,z)\) of a system can now be written as
    \beq \label{pdfsystem}
    P(F \in C \times B | \Gamma = \gamma, z) = P(\tilde{F} \in C, T_S \in B|\Gamma=\gamma,z) = P(T_S \in B \bigcap W(\gamma)^{-1}(C)|\Gamma=\gamma,z),
    \eeq
    
    for \(\gamma \in \sJ\). The measure \(P(X|F)\) then produces noisy observations. 
 
    Given this system, we next present a mathematical analysis of it to provide insight into consequences of observability on the system. The distribution \(P(\Gamma)\) induces a probability measure \(\rho\) on \(R_\sX\), a set of probability measures on \(\sX\)(the set is \(\set{P(X|\Gamma=\gamma), \gamma \in \sJ}\)). \(R_\sX\) is a subset of \(\sB_\sX\), the set of all borel probability measures on \(\sX\). We show that under certain conditions, there exists a continuous map from \(R_\sX \subseteq \sB_\sX\) to \(\sJ\) which can describe the orbital parameters. This continuous map can be learnt from random samples of the probability distributions drawn as random samples from the space of probability distribution \(\sB_\sX\) using machine learning techniques and can then be used to estimate initial conditions from test datasets generated by spacecraft.

    \subsection{Mathematical Analysis} \label{subsec_mathanalysis}
    For the system defined in Equations \eqref{pgmpdf1} and \eqref{pdfsystem}, we make the following assumptions on the spaces \(\sJ, \sF, \sX, \tilde{\sT}\) and the probability distributions associated with them in order to characterize the effect of \(U\) on the probabilistic system:
    
    \begin{enumerate}[align=left, leftmargin=*, label=\textbf{A \Roman*}]
        \item \label{Pone} \((\tilde{\sF},d_{\tilde{F}}), (\tilde{\sX},d_{\tilde{X}})\) are compact metric spaces, \(\sJ\) is a compact subset of a real separable Hilbert space and \(\tilde{T}\) is a compact subset of \(\Rbb_+\) endowed with the regular Borel measure \(\Mm\).
        \item \label{Ptwo} \(P_{T_S}\) is absolutely continuous over support \(\sT \subseteq \tilde{\sT}\).
        \item \label{Aone} The probability kernel function from \(\sF \rightarrow \sP(\sX)\), for \(\sP(\sX) \subseteq \sB_{\sX}\), is a homeomorphism in the weak topology induced by the Prokhorov metric \((\sB_\sX,d_P)\).
    \end{enumerate}
    Let \(\bF_\sX\) be the \(\sigma\)-algebra induced by open (or closed) balls on \(\sX\) (Borel \(\sigma\)-algebra).

    Assumption \ref{Pone} limits the analysis of the system to those that are most suitable to characterization in terms of probability measures, which is most systems of interest. Assumption \ref{Ptwo} requires the probability distribution of measurements over the observation time admits a density.  Assumption \ref{Aone} is required for noise characteristics of the system where we assume that if the underlying noiseless parameters (such as directional of arrival, Doppler change or radar measurements) change, so does the probability distribution of the measurement system and this change is continuous. As a working example for assumption \ref{Aone}, when one measures range rate with narrowband communication systems, it has been shown that the correlation function \(S(f)\) from which the feature vector \(X\) is obtained can be written as \(S(f) = Q(f_c+f_d)+\text{residual}\) \citep{gardner1987spectrala,gardner1987spectrald}, where \(f_c\) is the center frequency of the RF transmission and \(f_d\) is the Doppler. 
    
     We now present two theorems that detail the consequences of the assumptions on the framework. We define distance metrics on the set of functions defined on \(\sT\) to \(\tilde{F}\) as
    \begin{equation}
        d_{U,p}(g_1, g_2) = \bigg[ \int_{\sT} (d_{\tilde{F}}(g_1(t),g_2(t)))^p d\Mm \bigg]^{\frac{1}{p}}
     \end{equation}
     for \(1 \leq p < \infty\) and 
     \begin{equation}
         d_{U,\infty}(g_1, g_2) = \sup_{t \in \sT} d_{\tilde{F}}(g_1(t),g_2(t))
     \end{equation}
     when they exist. 
     The dynamic system \(U\) is said to be observable in \(\sT\) if there exists an inverse \(U^{-1}:(\sT \rightarrow \tilde{F}) \rightarrow \sJ\) unique almost everywhere. This definition of observability is more in line with identifiability and subsumes the definition of non-linear observability used in traditional settings.

     For a first step analysis, we ignore the effect of visibility \(V\) and consider the case where points in \(U(\gamma,t)\) for some \(\gamma \in \sJ\) and \(t \in \sT\) will always be observed.i.e., the observations aren't modulated by line of sight and ground station specific horizon considerations and samples are produced through out \(\sT\) i.e., \(P(T_S | \gamma, z) = P(T_S | z)\). In doing so, we can analyze the probability distributions \(\set{P(X|\Gamma = \gamma): \gamma \in \sJ}\) resulting from simple restrictions on the continuity of \(U\). 

    \begin{theorem} \label{thmfcn}
        For the system defined by equations \eqref{pgmpdf1} - \eqref{pdfsystem} with assumptions \ref{Pone} and \ref{Aone}, 
        if \(U\) is observable in \(\sT\) and Lipschitz continuous in \(d_{U,\infty}\) with respect to \(\gamma\) in \(\sJ\) then there exists  a continuous inverse mapping \(\lambda: R_{\sF} \rightarrow \sJ\) for a compact set \(R_{\sF} \subseteq \sB_{\sX}\) on the topology \(({\sB}_{\sX},d_P)\).
    \end{theorem}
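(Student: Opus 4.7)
My overall strategy is to factor the candidate map $\gamma \mapsto P(X \mid \Gamma = \gamma) \in \sB_\sX$ through the intermediate distribution on $\sF$, show each factor is continuous and injective on the relevant compact sets, and then invoke the standard compact-to-Hausdorff inversion principle. Concretely, I would first define $\Psi_1 : \sJ \to \sB_\sF$ by sending $\gamma$ to the law of $F = (U(\gamma, T_S), T_S)$ with $T_S \sim P_{T_S}$, and $\Psi_2 : \sB_\sF \to \sB_\sX$ by sending a measure $\mu$ to its mixture $\int P(X \mid F = f)\, d\mu(f)$ under the probability kernel $f \mapsto P(X \mid F = f)$. The object of interest is then the composition $\Psi = \Psi_2 \circ \Psi_1$, and $\lambda$ will be its inverse on the image.

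The core analytic step is continuity of $\Psi_1$, where the Lipschitz hypothesis $\sup_{t \in \sT} d_{\tilde{F}}(U(\gamma_1, t), U(\gamma_2, t)) \le L\, \|\gamma_1 - \gamma_2\|_\sJ$ directly produces a coupling of the two pushforwards: sampling a single $T_S \sim P_{T_S}$ and applying $\gamma_1$ and $\gamma_2$ to it yields a joint law on $\sF \times \sF$ under which the two marginals differ pointwise by at most $L\|\gamma_1 - \gamma_2\|_\sJ$. A Strassen-type estimate then gives $d_P(\Psi_1(\gamma_1), \Psi_1(\gamma_2)) \le L\|\gamma_1 - \gamma_2\|_\sJ$, so $\Psi_1$ is Lipschitz. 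Continuity of $\Psi_2$ I would obtain from assumption~\ref{Aone} by lifting the homeomorphism $f \mapsto P(X\mid F = f)$ on the base space to the space of probability measures: since $\sF$ is compact and the kernel is weakly continuous, any bounded continuous test function on $\sX$ pulls back to a bounded continuous function on $\sF$, and the mixing operation $\mu \mapsto \int P(X \mid F = f)\, d\mu(f)$ is therefore continuous in the Prokhorov topology.

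For injectivity, I use observability of $U$: if $\Psi_1(\gamma_1) = \Psi_1(\gamma_2)$, then disintegrating along $T_S$ forces the Dirac measures $\delta_{U(\gamma_1, t)}$ and $\delta_{U(\gamma_2, t)}$ to agree for $P_{T_S}$-almost every $t$, and assumption~\ref{Ptwo} upgrades this to equality $\Mm$-almost everywhere on $\sT$, so observability yields $\gamma_1 = \gamma_2$. Injectivity of $\Psi_2$ follows because assumption~\ref{Aone} makes the kernel itself an embedding, and distinct mixing measures on $\sF$ can then be separated by pulling back test functions from $\sX$. With $\Psi$ continuous and injective on the compact space $\sJ$ and $(\sB_\sX, d_P)$ Hausdorff, $\Psi$ is a homeomorphism onto its compact image $R_\sF := \Psi(\sJ) \subseteq \sB_\sX$, and $\lambda := \Psi^{-1} : R_\sF \to \sJ$ is the desired continuous inverse.

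The step I expect to be the main obstacle is pinning down continuity of $\Psi_2$ cleanly, because assumption~\ref{Aone} is stated pointwise on $\sF$ (a homeomorphism into $\sB_\sX$) rather than on distributions over $\sF$; I will need to verify that compactness of $\sF$ together with weak continuity of this kernel is sufficient to make the mixture map continuous in the Prokhorov metric, probably by a Portmanteau-style argument on pulled-back test functions. Once that technicality is settled, the Lipschitz bound for $\Psi_1$, the two injectivity arguments, and the final compact-Hausdorff topological closing are all routine.
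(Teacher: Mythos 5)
Your proposal is correct and follows essentially the same route as the paper's proof: factor $\gamma \mapsto P(X\mid\Gamma=\gamma)$ through the law of $F$ on $\sB_\sF$, get Prokhorov-continuity of that first stage from the $d_{U,\infty}$-Lipschitz hypothesis, get injectivity from observability, transfer to $\sB_\sX$ via assumption \ref{Aone}, and close with the continuous-injection-from-a-compact-space inversion. The only real difference is cosmetic but welcome: where the paper establishes $d_P(\mu(\gamma_1),\mu(\gamma_2))<\delta$ by hand via the $D^\delta$ blow-up sets, you obtain the same bound from the identity coupling on $T_S$ plus Strassen's theorem, and you are rightly more explicit than the paper about why the pointwise kernel homeomorphism of \ref{Aone} lifts to a continuous map on mixtures (the paper dispatches this in one sentence); your passing appeal to \ref{Ptwo} in the injectivity step is not among the theorem's stated hypotheses, but it plays the same role as the ``unique almost everywhere'' clause already built into the paper's definition of observability, so it is not a substantive gap.
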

    \begin{proof}
        
        If \(U\) is observable and continuous then there exists a continuous bijective mapping  from \(\sJ\) onto \(U(\sJ)\). For such a bijective mapping \(U\), let the probability kernel function associated with \(P(F \in \cdot|\Gamma=\gamma,z) = \mu(\gamma)\). Then, for a given \(\delta\) we have \(\epsilon\) such that: \(\gamma_1, \gamma_2 \in \sJ\) with \(\n{\gamma_1 - \gamma_2}_{\sJ} < \epsilon\) implies \(d_{\tilde{F}}(U(\gamma_1)(t), U(\gamma_2)(t)) < \delta\), \(\forall t \in \tilde{\sT}\). 
        
        Consider sets \(C \in {\bF}_{\tilde{\sF}}, B \in {\bF}_{\sT_S}\). For any \(t \in U(\gamma_1)^{-1}(C)\) we can find a point \(f \in U(\gamma_2)(C^\delta)\) such that \(d_{\tilde{F}}(U(\gamma_1)(t),f) <\delta\). This implies for every set \(D = C \times B\), we have \(\tilde{D} = C^{\delta} \times B \subseteq D^\delta\) such that \(\mu(\gamma_1)(D) = \mu(\gamma_2)(\tilde{D})\) and \(\mu(\gamma_1)(D) \leq \mu(\gamma_2)(D^\delta) + \delta\). Similarly \(\mu(\gamma_2)(D) \leq \mu(\gamma_1)(D^\delta)+\delta\). This implies that
        \begin{equation*}
        \begin{aligned}
        d_P(\mu(\gamma_1),\mu(\gamma_2)) &= \inf\{\alpha: \mu(\gamma_1)(D) \leq \mu(\gamma_2)(D^\alpha) + \alpha \text{ and } \mu(\gamma_2) \leq \mu(\gamma_1)(D^\alpha) + \alpha, \; \forall D \in \bF_{\sF} \} \\
        & < \delta
        \end{aligned}
        \end{equation*}
        
        Also, as \(U\) is observable over \(\sJ\), if \(\gamma_1 \neq \gamma_2\), then there exists \(D \in \bF_{\sF}\) such that \(\mu(\gamma_1)(D) \neq \mu(\gamma_2)(D)\). Therefore, there exists a continuous function from \(\tilde{R}_{\sF}\) to \(\sJ\) for \(\tilde{R}_{\sF} \subseteq \sB_{\sF}\). Since the kernel function from \(\sF\) to \(\sB_{\sX}\) is bijective and continuous, the hypothesis holds.
        
        
    \end{proof}

    Next, we introduce the effect of visibility only on the dynamics. We capitalize on the continuous and differentiable behavior of \(V\) seen in most astrodynamic systems. If \(V\) is differentiable and continuous then the corresponding indicator function \(\onev_{V(\gamma, t) \in O}\) will be continuous in \(d_{U,p}\) for some \(p\) when it exists. We consider the behavior of functions continuous in \(d_{U,p}\) to study the effect of \(V\) on \(W\). 
    \begin{theorem} \label{thmfcnlp}
        For the system defined by equations \eqref{pgmpdf1} - \eqref{pdfsystem} with assumptions \ref{Pone}, \ref{Ptwo} and \ref{Aone}, W is observable in \(\sT\) and continuous in \(d_{U,p}\) for \(1 \leq p < \infty\) if and only if there exists a continuous inverse map \(\lambda: R_{\sF} \rightarrow \sJ\) for a compact set \(R_{\sF} \subseteq \sB_{\sX}\) on the topology \(({\sB}_{\sX},d_P)\)
    \end{theorem}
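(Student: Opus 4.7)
The plan is to prove the biconditional by handling the two directions separately, with the forward implication following the same broad strategy as Theorem \ref{thmfcn} but with Markov's inequality replacing the uniform pointwise bound, and the backward implication exploiting weak convergence via a carefully chosen test function.

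For the forward direction, assume $W$ is observable and continuous in $d_{U,p}$. Fix $\gamma_1,\gamma_2 \in \sJ$ and $\delta > 0$, and apply Markov's inequality to the pointwise distance $t \mapsto d_{\tilde{F}}(W(\gamma_1)(t), W(\gamma_2)(t))$ to bound the Lebesgue measure of the bad set $E_\delta = \{t \in \sT : d_{\tilde{F}}(W(\gamma_1)(t), W(\gamma_2)(t)) > \delta\}$ by $\delta^{-p} d_{U,p}(W(\gamma_1),W(\gamma_2))^p$. For any measurable rectangle $D = C \times B$, a point $t \in B \setminus E_\delta$ with $W(\gamma_1)(t) \in C$ satisfies $W(\gamma_2)(t) \in C^\delta$ and therefore lies in the preimage defining $D^\delta$, so $\mu(\gamma_1)(D) - \mu(\gamma_2)(D^\delta) \leq P_{T_S}(E_\delta)$. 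By Assumption \ref{Ptwo}, absolute continuity of $P_{T_S}$ with respect to $\Mm$ guarantees $P_{T_S}(E_\delta) \to 0$ as $\Mm(E_\delta) \to 0$; choosing $\delta$ small and $d_{U,p}(W(\gamma_1),W(\gamma_2))$ much smaller yields $d_P(\mu(\gamma_1),\mu(\gamma_2)) < \varepsilon$ for any prescribed $\varepsilon$. This is continuity of $\gamma \mapsto \mu(\gamma)$ into $(\sB_{\sF},d_P)$, which transfers to continuity into $(\sB_{\sX},d_P)$ via the homeomorphism in Assumption \ref{Aone}.

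Observability then gives injectivity of $\gamma \mapsto \mu(\gamma)$, so we have a continuous bijection from the compact $\sJ$ onto $R_{\sF} \subseteq \sB_{\sX}$. Since $(\sB_{\sX},d_P)$ is Hausdorff, the standard fact that a continuous bijection out of a compact space is a homeomorphism onto its (compact) image supplies the desired continuous inverse $\lambda: R_{\sF} \to \sJ$.

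For the backward direction, observability is immediate, since $\lambda$ being a well-defined map forces $\gamma \mapsto \mu(\gamma)$ to be injective. To recover $d_{U,p}$-continuity of $W$, I note that $\lambda$ is a continuous bijection between compact Hausdorff spaces, so $\mu = \lambda^{-1}$ is continuous, meaning $\gamma_n \to \gamma$ implies $\mu(\gamma_n) \to \mu(\gamma)$ weakly. I would then apply this weak convergence to the bounded test function $\phi(f,t) = d_{\tilde{F}}(f, W(\gamma)(t))^p$ on $\tilde{\sF} \times \sT$; since $\int \phi\, d\mu(\gamma) = 0$ and the set of $t$ at which $\phi$ is discontinuous (coming from jumps of the visibility indicator in $W(\gamma)$) is $\Mm$-null, hence $P_{T_S}$-null by Assumption \ref{Ptwo}, Portmanteau yields $\int \phi\, d\mu(\gamma_n) \to 0$. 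Under a lower bound on the density of $P_{T_S}$ on $\sT$ (a strengthening of Assumption \ref{Ptwo}), this $L^p(P_{T_S})$-convergence is equivalent to $d_{U,p}(W(\gamma_n),W(\gamma)) \to 0$.

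The main obstacle is this backward direction: reversing the Markov-type inequality from pointwise distance to Prokhorov distance is the hard direction of the argument and requires both an appropriate bounded test function that is continuous $\mu(\gamma)$-almost everywhere and a careful reconciliation of $\Mm$ (governing $d_{U,p}$) with $P_{T_S}$ (governing weak convergence of $\mu$). The forward direction, by contrast, is essentially a routine upgrade of the argument in Theorem \ref{thmfcn} via Markov's inequality and the absolute-continuity characterization of Assumption \ref{Ptwo}.
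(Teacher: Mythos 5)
Your forward direction is essentially the paper's own argument: the paper also bounds $\Mm(\sT_\delta)$ via the Chebyshev/Markov-type inequality $d_{U,p}(W(\gamma_1),W(\gamma_2)) \geq \delta\,\Mm(\sT_\delta)^{1/p}$, splits each rectangle $D = B\times C$ into its intersection with $\sT\setminus\sT_\delta$ (where preimages nest into $\delta$-enlargements) and with $\sT_\delta$ (whose probability is small), and then uses compactness of $\sJ$ to upgrade the continuous injection $\mu$ to a homeomorphism onto its image. The only real difference is that you control $P_{T_S}(E_\delta)$ through the $\epsilon$--$\delta$ characterization of absolute continuity, while the paper multiplies $\Mm(\sT_\delta)$ by $\sup_{\gamma,t}\nu(\gamma)(t)$; your version is marginally cleaner since it does not implicitly require the density to be bounded above.

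Your backward direction is genuinely different from the paper's, and more substantive. The paper argues by contradiction, asserting that discontinuity of $W$ at some $\gamma_1$ forces discontinuity of $\mu$ in $d_P$ ``following similar arguments as in the direct case''; but the direct case proves the opposite implication, so this step is not actually justified by symmetry and the paper's treatment of this direction is really a sketch. Your route --- continuity of $\mu=\lambda^{-1}$ from the compact-to-Hausdorff homeomorphism, then Portmanteau applied to the bounded test function $d_{\tilde{F}}(f,W(\gamma)(t))^p$, whose integral against $\mu(\gamma)$ vanishes, yielding $L^p(P_{T_S})$ convergence of $W(\gamma_n)$ to $W(\gamma)$ --- proves something concrete, and it correctly surfaces two hypotheses the paper leaves implicit: the discontinuity set of $t\mapsto W(\gamma)(t)$ must be $\Mm$-null for the Portmanteau step, and, more importantly, passing from $L^p(P_{T_S})$ to the $\Mm$-based quantity $d_{U,p}$ requires the density of $P_{T_S}$ to be bounded below on $\sT$. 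Without that strengthening of \ref{Ptwo} the backward implication as literally stated is doubtful, since $\mu(\gamma)$ carries no information about $W(\gamma)$ on any set where the density vanishes; flagging this is the right call, and the alternative fix is to weaken the conclusion to observability and continuity $P_{T_S}$-almost everywhere.
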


    \begin{proof} 
        %
        %

        (\(\Rightarrow\)) Let \(\mu(\gamma)\) be the kernel function associated with \(P(F \in \cdot | \Gamma = \gamma, z)\) such that \(\mu(\gamma)(D) = P(F \in D | \Gamma = \gamma, z)\) for \(D \in \bF_{\sF}\). Let \(\nu(\gamma)\) be the density function of the probability measure \(\mu(\gamma)\) (This exists by assumption \ref{Ptwo}). Fix \(\epsilon^\prime >0\). For \(\epsilon^\prime\), pick \(\delta\) and \(\epsilon\) such that \(\epsilon^\prime > [\sup_{\gamma \in \sJ, t \in \sT} \nu(\gamma)(t)] \epsilon > \delta > 0\). By continuity of \(W(\gamma)\) in \(d_{U,p}\), for the given \(\delta, \epsilon\), \(\exists \delta^\prime\) such that \(\n{\gamma_1 - \gamma_2} < \delta^\prime \Rightarrow d_{U,p}(W(\gamma_1),W(\gamma_2)) < \delta \epsilon^{1/p}\). 
        
        Let \(\sT_\delta = \{t \in \sT: d_{\tilde{F}}(W(\gamma_1)(t),W(\gamma_2)(t))>\delta\}\). We have, for any such \(\gamma_1, \gamma_2\) defined above
        \begin{equation*}
            \begin{aligned}
                d_{U,p}(W(\gamma_1),W(\gamma_2)) &= \bigg[ \int_{\sT} d_{\tilde{F}}(W(\gamma_1),W(\gamma_2)) d\Mm \bigg]^{\frac{1}{p}} \geq \\
                                                 &\geq \bigg[ \int_{\sT_\delta} d_{\tilde{F}}(W(\gamma_1),W(\gamma_2)) d\Mm \bigg]^{\frac{1}{p}} \geq \delta \Mm(\sT_\delta)^{1/p}
            \end{aligned}
        \end{equation*}
        and \(d_{U,p} (W(\gamma_1),W(\gamma_2)) < \delta \epsilon^{1/p}\) implies \(\Mm(\sT_\delta)<\epsilon\) (similar to convergence in \(L_p\) implies convergence in measure). 

        Now, consider any set \(D = B \times C \in \bF_{\sF}\) (\(B \in \bF_{\sT}\) and \(C \in \bF_{\tilde{\sF}}\) since they are all Borel sigma algebras). 
        When \(\Mm(\sT_\delta)<\epsilon\), it follows from the definition of \(\sT_\delta\) that \(W(\gamma_1)^{-1}(C) \cap (\sT \setminus \sT_\delta) \subseteq W(\gamma_2)^{-1}(C^\delta) \cap (\sT \setminus \sT_\delta)\) and vice versa, where \(W(\gamma)^{-1}\) is the preimage of \(W(\gamma)\). 
        
        Additionally, we also have that \( P_T (B \cap W(\gamma_1)^{-1}(C) \cap \sT_\delta) \leq P_T (B \cap \sT_\delta) \leq P_T(\sT_\delta) = \tilde{\epsilon} \leq \tilde{\epsilon}+P_T (B \cap W(\gamma_2)^{-1}(C^\delta) \cap \sT_\delta)\). 
        Using the above arguments,
      \begin{equation*}
          \begin{aligned}
              \mu(\gamma_1)(D) &= P_T(B \cap W(\gamma_1)^{-1}(C)) \\
                               &= P_T(B \cap W(\gamma_1)^{-1}(C) \cap \sT \setminus \sT_\delta) + P_T(B \cap W(\gamma_1)^{-1}(C) \cap \sT_\delta) \leq \\
                               &\leq P_T(B \cap W(\gamma_2)^{-1}(C^\delta) \cap \sT \setminus \sT_\delta) + P_T(B \cap W(\gamma_2)^{-1}(C) \cap \sT_\delta) + \tilde{\epsilon} \\
                               &\leq P_T(B \cap W(\gamma_2)^{-1}(C^\delta) \cap \sT \setminus \sT_\delta) + P_T(B \cap W(\gamma_2)^{-1}(C) \cap \sT_\delta) + [\sup_{\gamma \in \sJ, t \in \sT} \nu(\gamma)(t)] \epsilon \\
                               &\leq P_T(B \cap W(\gamma_1)^{-1}(C^\delta)) + \epsilon^\prime \\
                               &\leq \mu(\gamma_2)(D^{\epsilon^\prime}) + \epsilon^\prime
          \end{aligned}
      \end{equation*}  
        and vice versa.
        Therefore \(d_P(\mu(\gamma_1), \mu(\gamma_2)) < \epsilon^\prime\).
        
        Also, as \(U\) is observable over \(\sJ\), if \(\gamma_1 \neq \gamma_2\), then there exists \(D \in \bF_{\sF}\) such that \(\mu(\gamma_1)(D) \neq \mu(\gamma_2)(D)\) (injective map).
        Since \(\sJ\) is compact, \(\mu\) forms a continuous and injective map to \(R_\sF \subseteq \sB_\sF\) and \((\sB_\sF, d_P)\) is a compact metric space (Prokhorov's theorem), we have that the image \(R_\sF = \mu(\sJ)\) is compact. Additionally, that there exists a continuous map \(\lambda: R_\sF \rightarrow \sJ\) (see \citet{rudin1964principles}) .

        (\(\Leftarrow\)) Proof by contradition. Assume there exists a homeomorphic map \(\lambda: R_\sF \rightarrow \sJ\) from a compact metric space \((R_\sF, d_P)\). Additionally, assume that equations \eqref{pgmpdf1} - \eqref{pdfsystem} hold and \(\lambda^{-1} = \mu\) almost everywhere, but \(U\) is not continuous for some particular \(\gamma_1\),i.e, \(\exists \epsilon >0\) such that for any ball \(B_\delta(\gamma_1, \n{\cdot}_2)\), \(W(B_\delta(\gamma_1, \n{\cdot}_2)) \nsubseteq B_\epsilon(W(\gamma_1),d_{U,p})\). This also implies the resulting mapping is not continuous in measure at \(\gamma_1\) and for some \(\epsilon^\prime > 0\), for any \(\delta > 0\), \(\mu(B_\delta(\gamma_1, d_{U,p}) \nsubseteq B_{\epsilon^\prime}(\mu(\gamma_1),d_P)\) (Following similar arguments as in the direct case). This, however, is a contradiction as \(\lambda^{-1}\) is continuous.
    \end{proof}

    \begin{remark}

        \begin{itemize}
            \item We are not assuming that the dynamic system itself be Borel measurable or in a metric topology. We are assuming that the resulting observation function have these characteristics. This is especially true in the case of a generating system based on Hamiltonian dynamics (using Poincar\'{e} elements for U) where the topology is locally lebesgue, but observations such as position and Doppler over \(n_G\) ground stations are mapped to a metric space.
            \item When set \(\sJ\) is upper bounded by \(C_\gamma\) as a consequence of assumption \ref{Pone}, the norm \(\n{\lambda}_{\rho}^2 \leq C_\gamma^2\), where \(\rho = \mu \circ P_\Gamma\) is the probability distribution induced on \(\sB_\sX\). This will be useful in section \ref{learningtheory}, for convergence rate analysis.
        \end{itemize}
    \end{remark}

    The above two theorems state that in a system where the time intervals of observation are independent of the initial conditions, when the system is observable and is continuous in certain metric spaces they are also continuous in the space of probability distributions seen by the observations \(B_\sX\). This allows us to work with probability distributions instead of the observation function of the dynamic system.

    However, in practical implementation of this system, unless in a very constrained setting, is it generally the case that generation of observations is dependent on the orbit, as the region of observations is limited by the horizon of the ground stations or their sensitivity in parts of the horizon. We address this scenario next.

    Now, consider scenario where the presence of observations is also governed by the state of the dynamic system \(V\). Let \(\sT(\gamma)\) denote the preimage of \(O\) with respect to a particular gamma:
    \beq \label{tgammadef}
    {\sT}(\gamma) = \bigcup\limits_{i=1}^{n_G} \set{t \in \tilde{\sT}: V_i(\gamma)(t) \in O_i}.
    \eeq
    It is to be noted that in this scenario all ground stations may not be able to generate observations, only the ground stations l for which \(V_l(t) \in O_l\) will produce observations. An example of this scenario would be when the elevation of the object with respect to the ground station is in \([0,\pi/2]\), for \(n_G\) ground stations. We will assume that in the event that at least one of the ground stations generate observations, the rest will generate an observation of zero and we shall work with the observation function \(W(\gamma) = \onev_{V(\gamma) \in O} \cdot U(\gamma)\) (Element-wise operation).  We shall model the conditional distribution \(P(F|\Gamma = \gamma, z)\) as
    
    \beq \label{pdfsystemV}
    \mu(\gamma)(D):= P(\tilde{F} \in C, T_S \in B|\Gamma=\gamma,z) = P(T_S \in B \bigcap W(\gamma)^{-1}(C)|T_S \in \sT(\gamma),\Gamma=\gamma,z),
    \eeq
    
    Using equation \eqref{pdfsystemV}, we can extend the above theorem to work with a system where the compact set of observations are a set of intervals (multiple satellite passes).  

    We modify the assumption on continuous distributions as follows
    \begin{enumerate}[align=left, leftmargin=*, label=\textbf{A \Roman*-A}]
        \setcounter{enumi}{1}
    \item \label{Atwo} \(P_{T_S | \Gamma=\gamma}\) is absolutely continuous over the support \(\sT(\gamma) \subseteq \sT, \forall \gamma \in \sJ\) 
    \end{enumerate} 
    
    \begin{corollary} \label{corollaryfcn}
        For the system defined by equations \eqref{pgmpdf1} - \eqref{pdfsystem}, \eqref{tgammadef} - \eqref{pdfsystemV} and assumptions \ref{Pone}, \ref{Atwo} and \ref{Aone}, if \(V\) continuous in \(\gamma\), differentiable in \(t\) and \({\mathfrak{m}}(\bigcup\limits_{i \in \{1,2,\cdots, n_G\}} \set{t: \f{\partial V_i}{\partial t} = 0})=0\) then the following are equivalent
        \begin{itemize}
            \item \(W\) is observable and continuous in \(d_{U,p}, 1 \leq p < \infty\) over \(\gamma \in \sJ\).
         \item there exists a continuous inverse map \(\lambda: R_{\sF} \rightarrow \sJ\) for compact \(R_{\sF} \subseteq \sB_{\sX}\) on the topology \(({\sB}_{\sX},d_P)\).
        \end{itemize}
    \end{corollary}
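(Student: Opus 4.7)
The plan is to reduce Corollary \ref{corollaryfcn} to Theorem \ref{thmfcnlp} by first establishing that the conditioning set $\sT(\gamma)$ from \eqref{tgammadef} varies continuously with $\gamma$ in the set-distance $d_S$, and then bootstrapping Prokhorov continuity of the conditional measure $\mu(\gamma)$ defined in \eqref{pdfsystemV} from the $d_{U,p}$ continuity of $W$ combined with this set-distance continuity. The reverse direction will follow by a contradiction argument mirroring the corresponding part of Theorem \ref{thmfcnlp}.

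\textbf{Step 1 (continuity of the conditioning set).} Fix $\gamma_1 \in \sJ$. The boundary $\partial \sT(\gamma_1)$ consists of times at which some $V_i(\gamma_1,t)$ crosses $\partial O_i$. Since $\mathfrak{m}(\{t:\,\partial V_i/\partial t=0\}) = 0$ for every $i$, almost every such crossing is transversal in $t$. A standard implicit-function / tubular-neighborhood argument, using continuity of $V$ in $\gamma$, then gives $\mathfrak{m}(\sT(\gamma_1)\,\Delta\,\sT(\gamma_2))\to 0$ as $\gamma_2\to\gamma_1$. Compactness of $\sJ$ together with observability of $W$ ensures that $P_T(\sT(\gamma))$ is bounded away from zero uniformly in $\gamma$ (otherwise some orbit would generate a null observation set and fail to be distinguishable).

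\textbf{Step 2 (forward direction).} Assuming $W$ is observable and continuous in $d_{U,p}$, rewrite \eqref{pdfsystemV} as $\mu(\gamma)(D) = P_T(B\cap W(\gamma)^{-1}(C)\cap \sT(\gamma))/P_T(\sT(\gamma))$. The Markov-style inequality from Theorem \ref{thmfcnlp} still applies and yields $\mathfrak{m}(\sT_\delta)<\epsilon$ for $\sT_\delta=\{t:\,d_{\tilde F}(W(\gamma_1)(t),W(\gamma_2)(t))>\delta\}$. Decomposing the numerator over $E=\sT_\delta\cup(\sT(\gamma_1)\,\Delta\,\sT(\gamma_2))$ and its complement, the containment argument of Theorem \ref{thmfcnlp} applies on $\sT\setminus E$ to give $W(\gamma_1)^{-1}(C)\cap(\sT\setminus E) \subseteq W(\gamma_2)^{-1}(C^\delta)$, while the contribution from $E$ is at most $[\sup_{\gamma}\nu(\gamma)]\,\mathfrak{m}(E)$, which is arbitrarily small by Step 1 and assumption \ref{Atwo}. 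Dividing by the (uniformly lower-bounded) denominator preserves the Prokhorov bound and yields $d_P(\mu(\gamma_1),\mu(\gamma_2))<\epsilon'$. Observability of $W$ gives injectivity of $\mu$, so $\mu$ is a continuous injection from compact $\sJ$ to $R_\sF=\mu(\sJ)\subseteq\sB_\sF$, hence a homeomorphism onto its image; assumption \ref{Aone} promotes $\lambda=\mu^{-1}$ from $R_\sF\subseteq\sB_\sF$ to the required $R_\sF\subseteq\sB_\sX$.

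\textbf{Step 3 (reverse direction and main obstacle).} If a continuous $\lambda$ exists but $W$ is not continuous in $d_{U,p}$ at some $\gamma_1$, pick $\gamma_n\to\gamma_1$ with $d_{U,p}(W(\gamma_n),W(\gamma_1))$ bounded below; reversing the $L_p$-to-measure step gives $\mathfrak{m}(\sT_\delta^{(n)})$ bounded below for some fixed $\delta$, from which a Prokhorov-separating set $D$ can be constructed, contradicting continuity of $\mu=\lambda^{-1}$. The chief technical obstacle is the bookkeeping in Step 2 when the conditioning set itself moves with $\gamma$: Theorem \ref{thmfcnlp} operates on a fixed domain $\sT$, whereas here both the integrand (through $W(\gamma)^{-1}$) and the conditioning set $\sT(\gamma)$ shift simultaneously, and both the numerator and the denominator of the conditional probability must be controlled in tandem. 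The differentiability of $V$ in $t$ and the zero-measure critical-set condition are precisely what is needed to control $\mathfrak{m}(\sT(\gamma_1)\,\Delta\,\sT(\gamma_2))$ via transversality; without them, tangential crossings of $\partial O_i$ could make $\sT(\gamma)$ jump discontinuously and the whole reduction would collapse.
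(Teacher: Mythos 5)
Your proposal follows essentially the same route as the paper: the transversality condition \(\mathfrak{m}(\set{t: \partial V_i/\partial t = 0})=0\) is used, via an implicit-function-theorem covering argument near the crossings of \(\partial O_i\), to show \(\mathfrak{m}(\sT(\gamma_1)\,\Delta\,\sT(\gamma_2))\to 0\) as \(\gamma_2\to\gamma_1\), after which the statement is reduced to Theorem \ref{thmfcnlp}. Your Step 2 is in fact more explicit than the paper's own proof, which simply asserts that the rest follows from Theorem \ref{thmfcnlp}; your exceptional-set decomposition and the observation that the normalizing factor \(P_T(\sT(\gamma))\) must be bounded away from zero (which does follow from the set-distance continuity, absolute continuity of \(P_T\), and compactness of \(\sJ\)) are details the paper leaves implicit.
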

    
    \begin{proof}
        \(P(T_S | z)\) is absolutely continuous with respect to the Lebesgue measure (limited to the Borel \(\sigma\)-algebra). If \(\sT(\gamma)\) is continuous in \(\gamma\) and can be expressed as a union of intervals, we have for a given \(\delta\), \(\exists \epsilon_1, \epsilon_2\) such that \(\n{\gamma_1 - \gamma_2} < \epsilon_1 \Rightarrow d_S({\sT}(\gamma_1),{\sT}(\gamma_2)) < \epsilon_2 \Rightarrow d_P(\mu(\gamma_1), \mu(\gamma_2)) < \delta\). 
        
        Define \(\sT_i(\gamma) = \set{t \in \tilde{\sT}: V_i(\gamma)(t) \in O_i}\) for \(i \in \set{1,2, \cdots, n_G}\). Let \({\sT}_i^\epsilon(\gamma) = \set{t \in \tilde{\sT}: V_i(\gamma)(t) \in O_i^\epsilon}\) and \({\sT}_i^{-\epsilon}(\gamma) = \set{t \in \tilde{\sT}: V_i(\gamma)(t) \in O_i^{-\epsilon}}\) where \(O_i^\epsilon = \set{o \in {\Rbb}^n | d(o,O_i)< \epsilon}\) and \(O_i^{-\epsilon} = ((O_i^c)^\epsilon)^c\). Since \(V\) is continuous in \(\gamma\), we have \({\sT}_i^{-2\epsilon}(\gamma_1) \subseteq {\sT}_i^{-\epsilon}(\gamma_2) \subseteq {\sT}_i(\gamma_1) \subseteq {\sT}_i^{\epsilon}(\gamma_2) \subseteq {\sT}_i^{2\epsilon}(\gamma_1)\). For a given \(\gamma\), by definition, \({\sT}_i(\gamma) \subseteq {\sT}_i^\epsilon(\gamma)\). If the two sets \({\sT}_i(\gamma)\) and \(\sT_i^\epsilon (\gamma)\) are equal for all \(\gamma\), then the continuity condition is satisfied trivially and therefore we only need to consider the case when \(\sT_i(\gamma) \subset \sT_i^\epsilon(\gamma)\). For a given \(\epsilon\) consider \({\sT}_i^{-\epsilon}(\gamma) \Delta {\sT}_i^\epsilon (\gamma)\) i.e., the pre-image of \(O_i^{-\epsilon} \Delta O_i^{\epsilon} = \bigcup_{p \in Bd(O_i)} B_\epsilon(p)\). We have, from the definition of the Lebesgue measure, \(d_S( {\sT}_i^{-\epsilon}(\gamma), {\sT}_i^\epsilon (\gamma) ) \leq {\mathfrak{m}}(R_{O_i, \epsilon}) + {\mathfrak{m}} ( \bigcup_j C_j \cap V_i(\gamma)^{-1}(O_i^{-\epsilon} \Delta O_i^{\epsilon})) \), where \(R_{O_i, \epsilon}\) is the set where the derivative of \(V_i(\gamma)\) with respect to \(t\) is zero in \(O_i^{-\epsilon} \Delta O_i^\epsilon\) and \(C_j\) is a countable covering of the set \(\tilde{\sT} \setminus R_{O_i, \epsilon} \) over the neighborhoods of points where the implicit function theorem can be applied.
        
        Therefore, we have that when \({\mathfrak{m}}(R_{O_i, \epsilon}) = 0\), for any \(\epsilon_2 > 0\), \(\exists\) an \(\epsilon_3\) such that \(d_S(O_i^{-\alpha}, O_i^\alpha) < \epsilon_3\) implies \(d_S( {\sT}_i^{-\alpha}(\gamma) , {\sT}_i^{\alpha}(\gamma) ) < \epsilon_2\), which implies continuity of \(\sT_i(\gamma)\) and \(\sT(\gamma)\) with respect to \(\gamma\). The rest of the proof follows from theorem \ref{thmfcnlp}.
    \end{proof}
   
    
    Note that the condition of observability over \(\sT(\gamma)\) is significantly stronger than observability over \(\sT\). It is, however, a weaker assumption compared to observability at every \(t \in \sT\). A simple example for this assumption in low Earth orbits occurs when estimating orbits with Doppler. In cases when the right ascension of the ascending nodes differ by a small amount with all other parameters being identical including the ground stations, there will exist regions where the Doppler shifts are identical for significant sections of the two passes. They will, however, be observable as the point of zero Doppler will differ in time.

    The corollary essentially states that in the scenario with \(n_G\) ground stations producing observations, (observations which are generated from an i.i.d process over a time interval) a continuous map to the initial conditions exist from a compact subset of the space of probability distributions of the observation random variable \(X\) exists if two conditions are satisfied. First the observation function of the dynamic system is observable over the times when the probability of observations being generated are non-zero. Second, it is required that the rate of change of the visibility system \(V\) is non-zero almost everywhere. For the scenario where \(O\) represents the horizon and \(V(\gamma,t)\) represents the elevation of the spacecraft with respect to the ground station, corollary \ref{corollaryfcn} requires that for scenarios where the times of observation is modulated by the elevation, the rate of change of elevation with respect to the ground station is non-zero almost everywhere. This is guaranteed by Newton's laws of gravitation except for Geostationary orbits. However, for geostationary orbits \(\sT(\gamma) = \sT\) and the continuous map still exists according to theorem \ref{thmfcnlp}.
    In an orbit determination scenario with direction of arrival estimates, this implies that if the observability and continuity conditions in theorem \ref{thmfcn}, \ref{thmfcnlp} and corollary \ref{corollaryfcn} are satisfied (which is necessary for any estimator to be consistent), then there exists a continuous mapping from the probability distributions of the direction of arrival measurements observed to the orbital parameters. This continuous mapping also exists even when observations are spread across multiple ground stations in time \(\sT\).
    
    
    
    \subsection{Observability and Probability Distribution Sampling} 
    
    We shall now discuss how to non-parametrically estimate the continuous mapping from the space of probability distributions. While direct characterization of these probability distributions is prohibitive due to its complex nature, it is possible to draw samples from these distributions and estimate the kernel embedding associated with the observed probability distributions and perform regression over the kernel embeddings as detailed in \citet{szabo2016learning}. Since the mapping exists and is continuous over a complete topological space over a compact set, it is possible to estimate the function mapping the probability distribution of the RF transmissions to orbital parameters \(\gamma\) arbitrarily close to its estimate in the corresponding RKHS. When universal kernels are used, the RKHS will be dense in the space of continuous functions from \({R}_{\sX}\) to \(\sJ\).
    
    
    For the mathematical model described in Sections \ref{prob_formulation} and \ref{subsec_mathanalysis}, we are given \(n_T\) observations \(\set{x_j^T}_{j=1}^{n_T}\) and we would like to estimate 
    the orbital parameters \(\gamma\). 
    Generally, for probabilistic graphical models of this nature, Bayesian inference is applied, either through direct computation of the posterior probabilities, the Expectation Maximization algorithm, or through techniques such as Markov chain Monte Carlo inference methods \citep{lee2005nonlinear}. 
    However, in general astrodynamic cases with weak observability such as Doppler only orbit determination, there are two challenges. 
    First, there exists no closed-form expression of the conditional distributions. 
    Second, while it is possible to sample from the prior conditional distribution, direct parametric description of the posterior conditional is non-trivial, time variant and has significant noise. 
    However, it is possible to perform forward sampling, and from Theorem~\ref{thmfcn} there exists a continuous function from the space of probability distributions of \(X\) onto \(\Gamma\). 
    A recent technique proposed in machine learning literature, the two stage sampled distribution regression \citep{szabo2016learning} allows us to perform regression over the space of probability distribution in a consistent fashion to estimate the orbits. 
    Such a technique can be trained by generation of training data through information present on \(U, V, P_\Gamma\) and \(P_{X|F}\). 
    
    In the orbit determination scenario of estimating the orbits of \(n_S > 1\) spacecraft using \(n_G\) ground stations, the probability distribution obtained from the observations will be a mixture of the observations from the different spacecraft. 
    While direct estimation of the orbits is possible from the mixture, the noise due to the mixture can be significant which can slowdown rate of convergence. It also results in significant overhead in computation. This is because the embedding for the orbit of each satellite has to be computed from the full mixture distribution instead of points that are generated specifically for each spacecraft. Additionally, this projection from the mixture distribution to individual distributions can be highly non-trivial. To overcome this, we propose first classifying the embeddings of the mixture using generalized classification techniques such as \citep{blanchard2011generalizing}, obtaining the labels and then use the resulting embeddings for orbit determination of each individual satellite.
    
    Based on this, we propose the following methodology. Perform sampling to generate training data \(\set{\set{x_j^{(i)},y_j^{(i)}}_{j=1}^{n_i}, \gamma_{1,i}, \gamma_{2,i}, \cdots, \gamma_{n_S,i}}_{i=1}^N\). The resulting system satisfies the underlying assumptions of transfer learning \citep{blanchard2011generalizing} and distribution regression \citep{szabo2016learning}. Perform transfer learning to obtain an estimate of \(\set{y_j^T}_{j=1}^{n_T}\) and distribution regression to obtain estimates of \(\gamma\). The work by Blanchard et. al \citep{blanchard2011generalizing} provides results for classification in the scenario of \(n_S = 2\). This can be extended to the multiclass case with \(n_S > 2\) (See \citet{deshmukh2017multiclass}). We show in section \ref{learningtheory} that, when the transfer learning technique is universally consistent, this entire system will also consistently estimate the initial conditions of each object.
    
    The sampling can be performed from the probabilistic graphical model described in the mathematical formulation described in section \ref{subsec_mathanalysis}. It is crucial, however, that the probability distributions, especially with regards to bias, are samples as expected to be seen in the experimental data. This is not an impossibility as known sources can generally be characterized. It is also crucial that the orbit dynamical models are accurate in the limit for consistency requirements to hold. For Additional details, see Section~\ref{sec:results}.

    \section{Learning Theory} \label{learningtheory}
    
    We present consistency analysis of the proposed system consisting of the combination of the marginal predictor and the distribution regression system. We also present some remarks on the consistency behaviours when \(n_S=1\).

    \paragraph{Preliminaries} For the learning sytem proposed, there are three spaces of interest: \(\sX\) the space of observations, \(\sY\) the space of labels and \(\tilde{\sJ}^{n_S}:=\sJ\) the space of initial conditions. The marginal predictor has 3 kernels \(k: \sX \times \sX \rightarrow \Rbb, K:\phi(\sB_\sX) \times \phi(\sB_\sX) \rightarrow \Rbb\) and \(k^\prime:\sX \times \sX \rightarrow \Rbb\)(RKHS \(\sH_k, \sH_K\) and \(\sH_{k^\prime}\) with kernel \(k\) used for the embedding, \(K\) operates on the embedding produced by \(k\), \(\phi(\sB_\sX)\) and \(k^\prime\) operates on datapoints. The regressor has two kernels \(\bar{k}:\sX \times \sX \rightarrow \Rbb, \ck:\xi(\sB_\sX) \times \xi(\sB_\sX) \rightarrow \sL(\tilde{\sJ})\) (RKHS \(\sH_{\bar{k}}, \sH_{\ck}\)) with kernel \(\bar{k}\) used for the embedding and \(\ck\) operates on the embedding produced by \(\bar{k}, \xi(\sB_\sX)\). We denote by \(\Psi_K\) and \(\Psi_\ck\), the cannonical feature maps associated with \(K\) and \(\ck\). Similar to the notation in \citep{caponnetto2007optimal, de2005risk}, we shall denote the kernel operator associated with a kernel \(\ck\) at point \(\xi_P\), as \(\ck_{\xi_P}\). This corresponds to the mapping \(\ck_{\xi_P}: \tsJ \rightarrow \sH_\ck\) such that for \(\gamma \in \tsJ\)
    \begin{equation*}
        \ck_{\xi_P}\gamma = \ck(\cdot, \xi_P)\gamma
    \end{equation*}
    
    \(\rho(\phi(P_X),\Gamma)\) is the probability measure on the product space \(\sB_\sX \times \sJ\) induced by \(P_\Gamma\). \(\sL(L^2)\) is the space of linear operators on to \(L^2(\phi(\sB_\sX) \times \tsJ,\rho,\tsJ)\) (linear operators from \(\phi(\sB_\sX) \times \tsJ\) onto \(\tsJ\) that are square integrable in the measure \(\rho\)). Let \(C_\gamma\) be the upper bound on the norm of values in \(\tilde{\sJ}\). We denote by \(\widehat{f}\) the empirical estimate of any operator \(f\).

    In addition to \ref{Pone}, \ref{Atwo} (or \ref{Ptwo}) and \ref{Aone}, we make the following assumptions on the learning system. Note that some of these assumptions are more general than the assumptions in the previous section and bounds derived here should apply to section \ref{subsec_mathanalysis}. 
    \begin{enumerate} [align=left, leftmargin=*, label=\textbf{L \Roman*}]
        \item \label{Lone} \(\tsJ\) is a compact subset of a real separable hilbert space and \(\sX\) is a compact metric space.
        \item \label{Ltwo} Kernels \(k, k^\prime, \bar{k}, K\) and \(\ck\) are universal and bounded by constants \(B_k^2, B_{k^\prime}^2, B_{\bar{k}}^2, B_K, B_\ck\) respectively. In addition, the cannonical feature vectors associated with kernels \(K\) and \(\ck\), \(\Psi_K: \sH_k \rightarrow \sH_K\) and \(\Psi_\ck: \sH_{\bar{k}} \rightarrow \sH_\ck\) are H{\"o}lder continuous with constants \(\alpha\) and \(\beta\) and scaling factors \(\bL_K, \bL_\ck\) . i.e.,
            \begin{equation*}
                \forall v,w \in \sH_k \qquad \n{\Psi_K(v) - \Psi_K(w)}_{\sH_K} \leq \bL_K \n{v - w}_{\sH_k}^\alpha
            \end{equation*}
            and
            \begin{equation*}
            \forall v,w \in \sH_{\bar{k}} \qquad \n{\Psi_\ck(v) - \Psi_\ck(w)}_{\sH_\ck} \leq \bL_\ck \n{v - w}_{\sH_k}^\beta
            \end{equation*}
        \item \label{Lfour} The loss function \(l\) is classification calibrated \citep{bartlett2006convexity} and is \(L_l\)-Lipschitz in its first variable and bounded by \(B_l\).

    \end{enumerate}

    \begin{remark}
         We first present some observations regarding the convergence bounds of the single spacecraft scenario \(n_S = 1\). In this case, the system only consists of a two stage sampled regression system minimizing the objective function
            \begin{equation*}
                \sE(r) = \int_{\xi(B_\sX) \times \sJ} \n{r(\xi_{P_X}) - \gamma}_{\tsJ}^2 \; d\rho(\xi_{P_X},\gamma)),
            \end{equation*}
            to obtain the orbital parameters. These remarks tie into the analysis and convergence bounds in \citet{szabo2016learning}.
        
        \begin{itemize}
            \item The boundedness condition on \(\gamma\) results in the corresponding model error bound (see \citet{caponnetto2007optimal} and section 8 of \citet{szabo2016learning})
            \begin{equation*}
               \int_{\sJ} e^{\f{\n{\gamma - r_\sH(\xi_p)}_\tsJ}{M}} -  \f{\n{\gamma - r_\sH(\xi_p)}_\tsJ}{M} - 1 d\rho(\gamma | \xi_p) \leq \f{\Sigma^2}{2M^2}
            \end{equation*}
            The above bound will be non-zero when there are model errors such as errors due to inaccuracies in spherical harmonic coefficients of gravity perturbations, model errors introduced due to drag during tumbling and solar radiation pressure.

        \item Also, there exists \(r_\sH \in \sH_\ck\) such that 
           \begin{equation*}
               \sE(r_{\sH}) =\inf_{r \in \sH_{\ck}} \sE(r).
           \end{equation*}

        \item Additionally, if there exists a map \(\lambda:\sB(\sX) \rightarrow \gamma\) with \(\n{\lambda}_{\rho_P}^2 \leq C_\gamma^2\) (see remarks in section \ref{subsec_mathanalysis}) then 
        \begin{equation*}
            r_\sH(P) = \int_{\sB_\sX} \ck(\xi(P), \xi(P_X)) \; \lambda(P_X) \; d\rho_P(\xi(P_X)),
        \end{equation*}
         where \(\rho_P\) is the marginal of \(\rho\) with respect to \(P_X\). 

        \item If the following assumption on decay of the residuals of the spectral operator of kernel \(\ck\) defined as
        \begin{equation*}
            T = \int_{\xi(\sB_\sX)} \ck_{\xi_P} \ck_{\xi_P}^\ast d\rho(\xi(P))
        \end{equation*}
        holds, then the system belongs to the \(\sP(b,c)\) class of probability distributions and we can provide tighter upper bounds for \(\sE(r) - \sE(r_\sH)\): 

        \begin{enumerate} [align=left, leftmargin=*, label=\textbf{L \Roman*}]
            \setcounter{enumi}{3}
            \item Either \(\exists 1 \leq  b < \infty\) such that the eigen values of \(T\), \(t_i, i = 1,2, \cdots, N_{eig}\) behave such that 
            \begin{equation*}
                a_1 \leq i^b t_i \leq a_2 \qquad \forall i \geq 1
            \end{equation*}
            or \(N_{eig} < \infty\).
        \end{enumerate}

        \end{itemize}
    \end{remark}
 
    For \(n_S > 1\), the learning system consists of a marginal predictor followed by a two stage sampled regressor. The output of the marginal predictor is fed to the regressor for learning mapping for orbit determination of the corresponding labeled spacecraft (this is required for consistency as labels aren't available in test scenarios). The marginal predictor minimizes the average generalization error defined as \citep{blanchard2011generalizing}
    
    \begin{equation} \label{TLoptfcn}
        I(g,n_T) :=
            \E_{P_{XY}^T \sim \rho(P_{XY})}
                \E_{(X,Y)^T \sim (P_{XY}^T)^{\otimes n_T}} \bigg[
                \frac{1}{n_T} \sum_{i=1}^{n_T} l(g(\widehat{P}_X^T,X_i^T),Y_i)
                \bigg].
    \end{equation}
    We also define 
    \begin{equation*}
        g^\ast = \argmin I(g, \infty) = \argmin \E_{P_{XY}^T \sim \rho(P_{XY})}
                \E_{(X,Y)^T \sim (P_{XY}^T)^{\otimes n_T}} \bigg[ l(g(P_X^T,X^T),Y) \bigg],
    \end{equation*}

    Additionally, let \(\sR(g, P_{X,Y},l) = \E_{(X,Y) \sim (P_{XY})}[l(g(P_X,\cdot),Y)]\), the risk for a given probability distribution and loss \(l\). When the transfer learning system is classification calibrated \citep{bartlett2006convexity}, \(g^\ast\) will be equal the Bayes classifier in terms of risk. The points classified by the marginal predictor per dataset (or orbit) are then embedded into a second RKHS. We consider these reconstructed embeddings of the different classes at the output of the marginal predictor. Define

    \begin{equation} \label{TLfcnlabel}
        \bar{h} = [h_y]_{y \in \sY}
    \end{equation}
    such that
    \begin{equation} \label{TLfcndef}
        h_y(\phi_{P_X}) = \int_\sX \bar{k}(x,\cdot) \onev_{sgn(g(P_X,x))=y} dP_X,
    \end{equation}
    as the embedding of the classified data. 

    Consider one of the labels: \(h = h_y, \gamma = \tilde{\gamma}_y\) for any \(y \in \sY\), with \(h^\ast\) the embedding for \(g^\ast\). We perform two stage sampled regression on the outputs of \(h\) to minimize the error
    \begin{equation} \label{DRTLoptdef}
    \sE(r \circ h) = \int_{\phi(B_\sX) \times \sJ} \n{r \circ h(\phi_{P_X}) - \gamma}_{\tsJ}^2 \; d\rho(\phi_{P_X},\gamma))
    \end{equation}
 
    For the system described by equations \eqref{TLoptfcn} to \eqref{DRTLoptdef} we make the following assumptions
    \begin{enumerate}[align=left, leftmargin=*, label=\textbf{S \Roman*}]
       \item \label{Sone} There exists a function \(r_{\sH_\ck}\) such that
            \begin{equation*}
            \sE(r_{\sH} \circ h^\ast) =\inf_{r \in \sH_{\ck}} \sE(r \circ h^\ast)
            \end{equation*}
    \end{enumerate}
    
    %
    %

    Note that Assumptions \ref{Sone} and \ref{Atwo} interact with each other. This leads to either an implicit assumption regarding the Bayes classifier, or a modification of \(\sT(\gamma)\) when applied to the orbit determination system. When the Bayes classifier can classify data with zero error this implies that \(P_{Y|X}\) is \(\rho_{XY}\) almost surely a function of \(P_X\) and therefore \(\xi(P_{X | y}) = h_y^\ast(P_X)\) a.e. When the Bayes' error is non-zero, this implies that the underlying dynamic system is observable over the support posterior probability distribution (if it's not observable then \ref{Sone} will not hold). This will result in corollary \ref{corollaryfcn} holding with a modified support.
    
    We are interested in the convergence,
    \begin{equation*}
        \sE(\hat{r} \circ \hat{h}) - \sE(r_\sH \circ h^\ast) \rightarrow 0
    \end{equation*}
    
    We define the linear operator \(A_h: \sH_\ck \rightarrow L^2(\phi(\sB_\sX) \times \tsJ,\rho(\phi_{P_X},\gamma), \tsJ)\) such that for \(r \in \sH_\ck\)
    \begin{equation*}
        A_h r (\phi_p,\gamma) = \ck_{h(\phi_p)}^\ast r
    \end{equation*}
    This essentially implies that
    \begin{equation*}
        A_h r (\phi_p,\gamma) = r \circ h(\phi_p)
    \end{equation*}
    \(A_h\) is the canonical injection of \(\sH_\ck\) under the transformation \(h\). 
    
    The proof strategy is as follows: 
    \begin{enumerate}
        \item We first derive the conditions for consistency of \(A_{\hat{h}}\) the embedding of the classfied output under infinite sample settings in theorem \ref{th:lt1}. 
        \item Next, we extend on the conditions to derive rates for finite sample settings of the spectral operator \(T_h\) in its finite sample sense in lemma \ref{lem:lt2}. 
        \item We then use these arguments to provide high probability upper bounds for \(\sE(\hat{r} \circ \hat{h}) - \sE(r_\sH \circ h^\ast)\) in theorem \ref{th:ltconsistency}.
    \end{enumerate}

    Before we consider convergence properties of the system we present extensions of important theorems in \citet{caponnetto2007optimal}. 
    The proofs are straightforward and follow the same line of arguments as presented in \citet{caponnetto2007optimal, de2005risk}.

    \begin{theorem} \label{th:defAT}        
        If assumptions \ref{Lone} and \ref{Ltwo} hold for the given system, \(A_h\) is a bounded operator from \(\sH_\ck\) to \(L^2(\phi(\sB_\sX) \times \tsJ,\rho, \tsJ)\), the adjoint \(A^\ast: L^2(\phi(\sB_\sX) \times \tsJ,\rho, \tsJ) \rightarrow \sH_\ck\) is 
            \begin{equation*}
                A^\ast_h s = \int_{\phi(\sB_\sX) \times \sJ} \ck_{h(\phi_p)} s(\phi_p,\gamma) d\rho(\phi_p,\gamma)
            \end{equation*}
        where the integral converges in \(\sH_\ck\) and \(A_h^\ast A_h\) is the Hilbert-Schmidt operator on \(\sH_\ck\):
            \begin{equation*}
                T_h = \int_{\phi(\sB_\sX)} T_{h(\phi_p)} d\rho_P(\phi_p)
            \end{equation*}
        for \(T_{h(\phi_p)} = \ck_{h(\phi_p)} \ck_{h(\phi_p)}^\ast\)
    \end{theorem}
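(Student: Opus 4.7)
The plan is to mirror the operator-theoretic framework from \citet{caponnetto2007optimal, de2005risk}, treating $h$ as a fixed measurable map $\phi(\sB_\sX)\to \sH_{\bar k}$ and carrying through the same three steps: (i) boundedness of $A_h$, (ii) identification of the adjoint via the reproducing property, and (iii) computing $A_h^\ast A_h$ as a Bochner integral of rank-one operators and verifying the Hilbert--Schmidt property.

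First, I would establish boundedness. For $r\in \sH_\ck$, the reproducing property gives $\langle r, \ck_{h(\phi_p)}\gamma\rangle_{\sH_\ck}=\langle r(h(\phi_p)),\gamma\rangle_{\tsJ}$, so $\|A_h r(\phi_p,\gamma)\|_{\tsJ}\le \|\ck_{h(\phi_p)}\|_{\mathrm{op}}\|r\|_{\sH_\ck}\le \sqrt{B_\ck}\,\|r\|_{\sH_\ck}$. Integrating $\|A_h r\|^2$ against the probability measure $\rho$ then yields $\|A_h r\|_{L^2}\le \sqrt{B_\ck}\,\|r\|_{\sH_\ck}$, so $A_h$ is a bounded linear map with operator norm at most $\sqrt{B_\ck}$. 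The only thing to check carefully is measurability of $\phi_p\mapsto \ck_{h(\phi_p)}$, which follows from continuity of $\ck$ and measurability of $h$ as a Borel map into $\sH_{\bar k}$.

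Next, I would derive the adjoint. For $s\in L^2(\phi(\sB_\sX)\times\tsJ,\rho,\tsJ)$ and $r\in \sH_\ck$, write
\begin{equation*}
\langle A_h r, s\rangle_{L^2}=\int \langle r(h(\phi_p)),s(\phi_p,\gamma)\rangle_{\tsJ}\,d\rho
= \int \langle r, \ck_{h(\phi_p)} s(\phi_p,\gamma)\rangle_{\sH_\ck} d\rho.
\end{equation*}
Pulling $r$ out by Fubini and using that the integrand is bounded in $\sH_\ck$-norm by $\sqrt{B_\ck}\,\|s(\phi_p,\gamma)\|_{\tsJ}$, which is in $L^1(\rho)$ by Cauchy--Schwarz, justifies the existence of the Bochner integral and identifies $A_h^\ast s=\int \ck_{h(\phi_p)}s(\phi_p,\gamma)\,d\rho(\phi_p,\gamma)$.

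Finally, I would verify the Hilbert--Schmidt claim. Composing the two formulas and applying Fubini once more gives
\begin{equation*}
A_h^\ast A_h r = \int_{\phi(\sB_\sX)} \ck_{h(\phi_p)}\ck_{h(\phi_p)}^\ast r\, d\rho_P(\phi_p) = \int_{\phi(\sB_\sX)} T_{h(\phi_p)} r\, d\rho_P(\phi_p),
\end{equation*}
which is exactly $T_h$. For the Hilbert--Schmidt norm, each $T_{h(\phi_p)}=\ck_{h(\phi_p)}\ck_{h(\phi_p)}^\ast$ has HS-norm bounded by $\|\ck_{h(\phi_p)}\|_{\mathrm{HS}}^2\le B_\ck\,\dim(\tsJ\text{-block trace})$; more cleanly, $\mathrm{tr}(T_h)\le \int \mathrm{tr}(\ck_{h(\phi_p)}\ck_{h(\phi_p)}^\ast)d\rho_P \le B_\ck$, so $T_h$ is trace class and in particular Hilbert--Schmidt. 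The main obstacle I expect is purely technical: justifying that the Bochner integrals converge in the correct spaces (in $\sH_\ck$ for $A_h^\ast$, in the HS ideal for $T_h$), which requires a careful measurability argument that $\phi_p\mapsto \ck_{h(\phi_p)}$ is strongly measurable into the appropriate operator space. Boundedness of $\ck$ from \ref{Ltwo} plus separability of $\sH_\ck$ (consequence of $\sX$ compact metric and $\ck$ continuous) will dispose of this via Pettis's theorem.
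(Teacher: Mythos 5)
Your proposal is correct and follows exactly the route the paper intends: the paper does not write out a proof of Theorem \ref{th:defAT} but states that it ``follows the same line of arguments as presented in \citet{caponnetto2007optimal, de2005risk}'', and your three steps (boundedness of $A_h$ via the reproducing property and $\n{\ck_{h(\phi_p)}}_{\mathrm{op}}\le\sqrt{B_\ck}$, identification of the adjoint as a Bochner integral justified by Cauchy--Schwarz, and the Fubini computation collapsing $\rho$ to its marginal $\rho_P$ to obtain $T_h$) are precisely that argument. The one point to make explicit is that your trace bound $\mathrm{tr}(T_h)\le B_\ck$ tacitly requires $\ck(x,x)$ to be trace class with trace (not just operator norm) controlled by $B_\ck$; this is immediate here because $\tilde{\sJ}$ is finite-dimensional in the paper's application (the $6\times 6$ diagonal kernel of equation \eqref{eq:kernelmatrix}), but it is the correct additional hypothesis in the general separable-Hilbert-space setting of \ref{Lone}.
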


    \begin{theorem} \label{th:mincrit}      
        If assumptions \ref{Lone}, \ref{Ltwo} and \ref{Sone} hold, \(r_{\sH}\) is a minimizer of expected risk \(\sE( \cdot)\) under the composition map \(h^\ast\) iff it satisfies
            \begin{equation*}
                T_{h^\ast} r_{\sH} = A^\ast_{h^\ast} \gamma
            \end{equation*}
    \end{theorem}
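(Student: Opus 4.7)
The plan is to mirror the classical Caponnetto--De Vito derivation: rewrite the risk as a quadratic functional in $r$ using the operator $A_{h^\ast}$, and derive the normal equation by setting the Fr\'echet derivative to zero. The key observation is that under assumption \ref{Sone}, a minimizer exists in $\sH_\ck$, and the functional $\sE(\cdot \circ h^\ast)$ is convex and differentiable on $\sH_\ck$, so the first-order optimality condition is both necessary and sufficient.

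First, I would identify $\gamma$ on the right-hand side of the equation with the square-integrable projection map $(\phi_p,\gamma') \mapsto \gamma'$ viewed as an element of $L^2(\phi(\sB_\sX) \times \tsJ,\rho,\tsJ)$; assumption \ref{Lone} and the boundedness of $\tsJ$ by $C_\gamma$ ensure this map is in $L^2$, and hence $A_{h^\ast}^\ast \gamma \in \sH_\ck$ is well defined via theorem \ref{th:defAT}. Then, using the definition of $A_{h^\ast}$ in theorem \ref{th:defAT}, I rewrite
\begin{equation*}
\sE(r \circ h^\ast) = \n{A_{h^\ast} r - \gamma}_{L^2}^2 = \langle r, A_{h^\ast}^\ast A_{h^\ast} r\rangle_{\sH_\ck} - 2\langle r, A_{h^\ast}^\ast \gamma\rangle_{\sH_\ck} + \n{\gamma}_{L^2}^2,
\end{equation*}
and substitute $A_{h^\ast}^\ast A_{h^\ast} = T_{h^\ast}$ (again from theorem \ref{th:defAT}) to obtain a quadratic in $r$.

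For the necessity direction, I would fix an arbitrary $r \in \sH_\ck$ and $t \in \Rbb$, consider the one-variable function $\varphi(t) = \sE((r_\sH + t r) \circ h^\ast)$, expand using the quadratic form above, and compute $\varphi'(0) = 2\langle r, T_{h^\ast} r_\sH - A_{h^\ast}^\ast \gamma\rangle_{\sH_\ck}$. Minimality of $r_\sH$ forces $\varphi'(0) = 0$ for every $r$, and since $\sH_\ck$ is a Hilbert space this yields $T_{h^\ast} r_\sH = A_{h^\ast}^\ast \gamma$. For sufficiency, I would compute the difference
\begin{equation*}
\sE(r \circ h^\ast) - \sE(r_\sH \circ h^\ast) = \langle r - r_\sH, T_{h^\ast}(r - r_\sH)\rangle_{\sH_\ck} + 2\langle r - r_\sH,\, T_{h^\ast} r_\sH - A_{h^\ast}^\ast \gamma\rangle_{\sH_\ck},
\end{equation*}
and observe that the second term vanishes by hypothesis while the first is nonnegative because $T_{h^\ast} = A_{h^\ast}^\ast A_{h^\ast}$ is positive semidefinite, so $r_\sH$ is a minimizer.

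The main obstacle, such as it is, is not conceptual but rather one of bookkeeping: one must justify interchanging integration with the inner product to express $\langle A_{h^\ast} r, \gamma\rangle_{L^2}$ as $\langle r, A_{h^\ast}^\ast \gamma\rangle_{\sH_\ck}$ with $A_{h^\ast}^\ast \gamma$ living in $\sH_\ck$ (rather than just in a completion). This is handled by the boundedness of $\ck$ in assumption \ref{Ltwo} (so that $\Psi_\ck$ is uniformly bounded in norm) together with $\n{\gamma}_\tsJ \leq C_\gamma$ and the Bochner integrability argument used in theorem \ref{th:defAT}. Once these operator-theoretic details are in hand, the equivalence is immediate from the quadratic expansion.
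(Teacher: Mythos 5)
Your proposal is correct and is exactly the argument the paper has in mind: the paper dispenses with the proof in one line ("a consequence of the projection theorem," following \citet{caponnetto2007optimal, de2005risk}), and your quadratic expansion of $\n{A_{h^\ast}r-\gamma}_{L^2}^2$ together with the first-order condition and the positive semidefiniteness of $T_{h^\ast}=A_{h^\ast}^\ast A_{h^\ast}$ is precisely the standard normal-equations derivation behind that remark. No gaps; the bookkeeping point about $A_{h^\ast}^\ast\gamma\in\sH_\ck$ is indeed covered by the boundedness of $\ck$ and $C_\gamma$ as you say.
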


    The above theorem is well known in linear algebra and is a consequence of the projection theorem.

    We shall first present bounds on \(A_h\) for the transfer learning system before we move on to the bounds of the complete system. This following theorem states that convergence of \(A_{\hat{h}}\) depends on whether the Bayes' decision boundary \(\sgn(g^\ast)\) is well defined. If the regions where the Bayes' decision boundary is ill defined has a measure greater than zero then convergence is not guaranteed. This also implies that under those conditions, the map \(\lambda\) may not be continuous for the class under consideration. We directly provide arguments for universal consistency.

    \begin{theorem} \label{th:lt1}
        For the learning system defined in equations \eqref{TLoptfcn}-\eqref{TLfcndef} assume that the conditions \ref{Lone} - \ref{Lfour} are satisfied. Then, if there exists a \(c \in [0,0.5]\) and a sequence \(\epsilon_j \rightarrow c\) such that 
        \begin{equation*}
            \tilde{\epsilon} = \lim_{\epsilon_j \rightarrow c} \E_{P_X \sim \rho, X \sim P_X} \big[\onev_{\set{|\eta(X)-0.5|<\epsilon_j}}\big] = 0,
        \end{equation*}
        then,
        \begin{equation*}
            \n{(A_{\hat{h}} - A_{h^\ast})}_{\sL(L^2)} \rightarrow 0
        \end{equation*}
        as
        \begin{equation*}
            I(\hat{g},\infty) \rightarrow \inf_{g \in C(\sB_\sX \times \sX \rightarrow \Rbb)} I(g,\infty)
        \end{equation*}
    \end{theorem}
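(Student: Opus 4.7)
The plan is to reduce the operator-norm statement $\n{A_{\hat h} - A_{h^*}}_{\sL(L^2)} \to 0$ to a pointwise bound on the classified kernel mean embeddings $\hat h(\phi_p)$ versus $h^*(\phi_p)$, and then to reduce that in turn to a classification-calibration argument on $\hat g$. First I would use $A_h r(\phi_p,\gamma) = \ck_{h(\phi_p)}^* r = r(h(\phi_p))$ to write, for any $r \in \sH_\ck$ with $\n{r}_{\sH_\ck} \le 1$,
\[
\n{(A_{\hat h} - A_{h^*}) r}_{L^2}^2 = \int_{\phi(\sB_\sX)} \n{r(\hat h(\phi_p)) - r(h^*(\phi_p))}_{\tsJ}^2 \, d\rho_P(\phi_p),
\]
and then invoke the operator-valued reproducing property together with H\"{o}lder continuity of $\Psi_\ck$ (assumption~\ref{Ltwo}) to bound the integrand by $\bL_\ck^2 \n{\hat h(\phi_p) - h^*(\phi_p)}_{\sH_{\bar k}}^{2\beta}$. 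Taking the supremum over $r$ reduces the theorem to showing $\E_{P_X \sim \rho}[\n{\hat h(\phi(P_X)) - h^*(\phi(P_X))}_{\sH_{\bar k}}^{2\beta}] \to 0$.

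Next, for fixed $\phi_p = \phi(P_X)$ and label $y \in \sY$, the triangle inequality for the Bochner integral combined with $\n{\bar k(x,\cdot)}_{\sH_{\bar k}} \le B_{\bar k}$ yields
\[
\n{\hat h_y(\phi_p) - h^*_y(\phi_p)}_{\sH_{\bar k}} \le B_{\bar k} \int_\sX \bigl|\onev_{\sgn(\hat g(P_X,x)) = y} - \onev_{\sgn(g^*(P_X,x)) = y}\bigr| \, dP_X(x) = B_{\bar k} \, P_X\bigl(\sgn(\hat g) \neq \sgn(g^*)\bigr),
\]
using $|\onev_A - \onev_B| = \onev_{A \triangle B}$. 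So it suffices to show that the disagreement probability, averaged over $P_X \sim \rho$, tends to zero.

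The core step is translating the surrogate-loss convergence $I(\hat g,\infty) \to \inf_g I(g,\infty)$ into vanishing of this disagreement probability. By classification calibration (assumption~\ref{Lfour}) together with \citet{bartlett2006convexity}, the excess surrogate risk upper bounds the excess $0$-$1$ risk, giving
\[
\E_{P_X \sim \rho}\bigg[ \int_\sX |2\eta(X) - 1| \, \onev_{\sgn(\hat g) \neq \sgn(g^*)} \, dP_X \bigg] \longrightarrow 0.
\]
For each $\epsilon_j$ in the prescribed sequence, split along $\{|\eta - 0.5| \ge \epsilon_j\}$ and its complement: on the first set $|2\eta - 1| \ge 2\epsilon_j$, so a Markov-type bound controls the disagreement probability there by $(2\epsilon_j)^{-1}$ times the vanishing quantity above; on the second set the mass is at most $\E_{P_X \sim \rho}[\onev_{|\eta(X) - 0.5| < \epsilon_j}]$, whose limit along the sequence is $\tilde\epsilon = 0$ by hypothesis. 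A diagonal choice of $j$ then yields $\E_{P_X}[P_X(\sgn(\hat g) \ne \sgn(g^*))] \to 0$, and since $\beta \in (0,1]$ and the disagreement probability is bounded by $1$, Jensen's inequality transfers this to the $2\beta$-th moment required above.

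The main obstacle is this diagonal argument: coupling the rate at which $\epsilon_j \to c$ to the rate at which the excess surrogate risk vanishes, especially when $c = 0$ so that the margin on the ``good'' set shrinks with $j$. A secondary subtlety is the operator-valued reproducing computation in the first step --- verifying that H\"{o}lder continuity of the feature map $\Psi_\ck$ yields $\n{r(v) - r(w)}_\tsJ \le \n{r}_{\sH_\ck} \bL_\ck \n{v - w}_{\sH_{\bar k}}^\beta$ through the adjoint $\ck_v^*$, which is standard in the scalar case but needs a brief check in the $\sL(\tsJ)$-valued setting.
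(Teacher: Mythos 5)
Your proposal is correct and follows essentially the same route as the paper's proof: reduce the operator norm to $\E[\n{\hat h(\phi_p)-h^*(\phi_p)}^{2\beta}]$ via H\"{o}lder continuity of $\Psi_\ck$, bound the embedding gap by the disagreement probability using $\n{\bar k(x,\cdot)}\le B_{\bar k}$, split by the margin $\epsilon_j$ so that classification calibration (via $\varphi^{-1}$ of the excess surrogate risk) controls the large-margin part and the hypothesis $\tilde\epsilon=0$ controls the rest, then couple the sequences. The paper's Appendix~A carries out exactly this decomposition, including the diagonal construction you flag as the main obstacle (it sets $\xi_{1,j}=(\xi'_{1,j})^{0.5}$ and $\epsilon_j=(\xi'_{1,j})^{1/(4\beta)}$ when $c=0$).
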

    The proof is presented in Appendix \ref{proof:th:lt1}.
    
    The above theorem essentially states that as long as the bayes decision boundary is continuous and well defined over the space \((P_X, X)\), the embedding of the probability distributions with classification calibrated versions of transfer learning will converge to those of the bayes classifier. For example, this will hold for the system stated in section \ref{subsec_mathanalysis} when the bayes error is zero (in which case c=0.5 and \(\epsilon_j\) is the trivial sequence). When the bayes error is non-zero, as long as the regions where the bayes decision boundary is ill defined have measure zero, the embeddings should converge. Next we shall prove a similar result for the empirical spectral operator \(\hat{T}\) generated by data.

    \begin{lemma} \label{lem:lt2}
        For the learning system defined in equations \eqref{TLoptfcn}-\eqref{TLfcndef} assume that the conditions \ref{Lone} - \ref{Lfour} are satisfied. Then, if there exists a \(c \in [0,0.5]\) and a sequence \(\epsilon_j \rightarrow c\) such that 
        \begin{equation*}
            \tilde{\epsilon} = \lim_{\epsilon_j \rightarrow c} \E_{P_X \sim \rho, X \sim P_X} \big[\onev_{\set{|\eta(X)-0.5|<\epsilon_j}}\big] = 0,
        \end{equation*}
        and \(\epsilon_j^{2\beta} \sqrt{N_{dr}} \rightarrow \infty\) then,
        \begin{equation*}
            \n{(\hat{T}_{\hat{h}} - \hat{T}_{h^\ast})} \rightarrow 0
        \end{equation*}
        as
        \begin{equation*}
            I(\hat{g},\infty) \rightarrow \inf_{g \in C(\sB_\sX \times \sX \rightarrow \Rbb)} I(g,\infty)
        \end{equation*}
    \end{lemma}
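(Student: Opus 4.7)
The plan is to mirror the structure of Theorem~\ref{th:lt1} but work at the level of the empirical spectral operator rather than the canonical injection. First I would write the difference via a telescoping identity,
\begin{equation*}
    \hat{T}_{\hat{h}} - \hat{T}_{h^\ast}
    = \tfrac{1}{N_{dr}}\sum_{i=1}^{N_{dr}}
    \bigl(\ck_{\hat{h}(\phi_{p_i})} - \ck_{h^\ast(\phi_{p_i})}\bigr)\ck_{\hat{h}(\phi_{p_i})}^\ast
    + \tfrac{1}{N_{dr}}\sum_{i=1}^{N_{dr}}
    \ck_{h^\ast(\phi_{p_i})}\bigl(\ck_{\hat{h}(\phi_{p_i})}^\ast - \ck_{h^\ast(\phi_{p_i})}^\ast\bigr),
\end{equation*}
and then take operator norms on both sides. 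Since the canonical feature map $\Psi_\ck$ factors through $\ck_{\xi_P}$, the $\beta$-H\"{o}lder assumption in \ref{Ltwo} gives $\|\ck_{\hat h(\phi_{p_i})} - \ck_{h^\ast(\phi_{p_i})}\| \leq \bL_\ck \|\hat h(\phi_{p_i}) - h^\ast(\phi_{p_i})\|_{\sH_{\bar k}}^\beta$, while $\|\ck_{\cdot}\|$ is uniformly bounded by $\sqrt{B_\ck}$. Together this reduces the operator-norm bound to controlling
\begin{equation*}
    \tfrac{1}{N_{dr}}\sum_{i=1}^{N_{dr}} \bigl\|\hat{h}(\phi_{p_i}) - h^\ast(\phi_{p_i})\bigr\|_{\sH_{\bar k}}^\beta.
\end{equation*}

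Next I would bound the pointwise embedding discrepancy. By definition \eqref{TLfcndef}, the difference $\hat h(\phi_{P_X}) - h^\ast(\phi_{P_X})$ is the $\bar k$-mean embedding of $\mathbf{1}_{\sgn(\hat g(P_X,\cdot))\neq \sgn(g^\ast(P_X,\cdot))}\,dP_X$, so by boundedness of $\bar k$ we get $\|\hat h(\phi_{P_X}) - h^\ast(\phi_{P_X})\|_{\sH_{\bar k}} \leq 2 B_{\bar k} \,\mathbb{E}_{X\sim P_X}[\mathbf{1}_{\sgn \hat g\neq \sgn g^\ast}]$. Splitting this indicator into the margin band $\{|\eta(X)-0.5|<\epsilon_j\}$ and its complement, and applying classification calibration exactly as in the proof of Theorem~\ref{th:lt1}, the band contributes at most $\tilde\epsilon$ (which vanishes by assumption), while on the complement the surrogate excess risk $I(\hat g,\infty)-I(g^\ast,\infty)$ controls the discrepancy after dividing by $\epsilon_j^{2}$ (or $\epsilon_j^{2\beta}$ after the H\"{o}lder exponent is applied).

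After this pointwise bound, the sum becomes an empirical average of a bounded random variable whose expectation tends to zero, so the remaining step is a concentration argument: by Hoeffding (or a Bernstein bound, since the summands are bounded by $(2B_{\bar k})^\beta B_\ck$), the empirical average differs from its expectation by $O(1/\sqrt{N_{dr}})$ with high probability. Combined with the pointwise bound of order $\epsilon_j^{-2\beta}(I(\hat g,\infty)-I(g^\ast,\infty)) + \tilde\epsilon$, the rate $\epsilon_j^{2\beta}\sqrt{N_{dr}}\to\infty$ is exactly what is needed so that the stochastic fluctuation is dominated, after scaling through the $\beta$-H\"{o}lder exponent, by the vanishing margin term. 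Choosing $\epsilon_j\to c$ along the hypothesized sequence then yields $\|\hat T_{\hat h}-\hat T_{h^\ast}\|\to 0$.

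The main obstacle is the coupling between the margin threshold $\epsilon_j$, the H\"{o}lder exponent $\beta$, and the finite-sample error: the pointwise bound blows up as $\epsilon_j^{-2\beta}$ on the ``good'' side of the margin, while the margin probability $\tilde\epsilon$ shrinks on the ``bad'' side, and the empirical-process fluctuation is $O(N_{dr}^{-1/2})$. The assumption $\epsilon_j^{2\beta}\sqrt{N_{dr}}\to\infty$ is precisely the balance condition that makes all three pieces vanish simultaneously, and the delicate part of the write-up will be verifying that the same sequence $\epsilon_j$ can be used throughout (analogous to the $\epsilon_j\to c$ construction in Theorem~\ref{th:lt1}) while preserving integrability against the $\beta$-H\"{o}lder scaling introduced by $\Psi_\ck$.
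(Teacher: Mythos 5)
Your proposal matches the paper's proof in all essentials: the same telescoping decomposition of $\hat{T}_{\hat{h}} - \hat{T}_{h^\ast}$ through the cross term $\ck_{\hat{h}(\phi_i)}\ck_{h^\ast(\phi_i)}^\ast$, the same reduction via boundedness and H\"{o}lder continuity to the embedding discrepancies $\n{\hat{h}(\phi_i)-h^\ast(\phi_i)}^{2\beta}$, the same reuse of the margin-splitting and calibration argument from Theorem \ref{th:lt1}, and a concentration step (the paper uses Azuma--McDiarmid where you invoke Hoeffding/Bernstein, which is equivalent here) balanced by the condition $\epsilon_j^{2\beta}\sqrt{N_{dr}}\to\infty$. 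This is essentially the paper's argument.
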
 
    The proof is presented in Appendix \ref{proof:lem:lt2}.

    Next we use theorem \ref{th:lt1} and Lemma \ref{lem:lt2} to analyze the consistency behavior of the entire system.

    \begin{theorem} \label{th:ltconsistency}
        For the system defined in equations \eqref{TLoptfcn}-\eqref{DRTLoptdef} assume that the conditions \ref{Lone} - \ref{Lfour} are satisfied. If there exists a \(c \in [0,0.5]\) and a sequence \(\epsilon_j \rightarrow c\) such that 
        \begin{equation*}
            \tilde{\epsilon} = \lim_{\epsilon_j \rightarrow c} \E_{P_X \sim \rho, X \sim P_X} \big[\onev_{\set{|\eta(X)-0.5|<\epsilon_j}}\big] = 0,
        \end{equation*}
        then, \(\sE(\hat{r} \circ \hat{h}) \rightarrow \sE(r_\sH \circ h^\ast)\) as 
        \begin{equation*}
            I(\hat{g},\infty) \rightarrow \inf_{g \in C(\sB_\sX \times \sX \rightarrow \Rbb)} I(g,\infty)
        \end{equation*}
        and
        \begin{equation*}
            \sE(\hat{r} \circ h^\ast) \rightarrow \sE(r_\sH \circ h^\ast)
        \end{equation*}
    \end{theorem}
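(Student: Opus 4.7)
The plan is to split the excess risk by inserting $\sE(\hat{r}\circ h^{\ast})$ as an intermediate quantity,
\begin{equation*}
\sE(\hat{r}\circ\hat{h}) - \sE(r_{\sH}\circ h^{\ast}) = \bigl[\sE(\hat{r}\circ\hat{h}) - \sE(\hat{r}\circ h^{\ast})\bigr] + \bigl[\sE(\hat{r}\circ h^{\ast}) - \sE(r_{\sH}\circ h^{\ast})\bigr].
\end{equation*}
The second bracket is precisely one of the two stated convergence hypotheses, so it vanishes in the limit. The whole task is to show that the first bracket --- the error introduced by composing the learned regressor with the estimated embedding $\hat{h}$ rather than with the Bayes-optimal $h^{\ast}$ --- also vanishes.

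For the first bracket I would rewrite it in operator form using the identification $A_h r = r\circ h$ from Theorem \ref{th:defAT}, yielding $\n{A_{\hat{h}}\hat{r}-\gamma}_{L^2}^2 - \n{A_{h^{\ast}}\hat{r}-\gamma}_{L^2}^2$, and apply the polarization identity followed by Cauchy--Schwarz in $L^2(\rho)$ to obtain
\begin{equation*}
\bigl|\sE(\hat{r}\circ\hat{h}) - \sE(\hat{r}\circ h^{\ast})\bigr| \leq \n{(A_{\hat{h}}-A_{h^{\ast}})\hat{r}}_{L^2}\cdot\n{A_{\hat{h}}\hat{r}+A_{h^{\ast}}\hat{r}-2\gamma}_{L^2}.
\end{equation*}
The first factor is at most $\n{A_{\hat{h}}-A_{h^{\ast}}}_{\sL(L^2)}\,\n{\hat{r}}_{\sH_{\ck}}$, where the operator-norm factor tends to zero by Theorem \ref{th:lt1}. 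The second factor is dominated by $2B_{\ck}\n{\hat{r}}_{\sH_{\ck}} + 2C_{\gamma}$ using the kernel bound in Assumption \ref{Ltwo} together with $\n{\gamma}_{\tsJ}\leq C_{\gamma}$ from \ref{Pone}. It therefore suffices to show that $\n{\hat{r}}_{\sH_{\ck}}$ remains controlled along the limiting sequence.

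The control on $\n{\hat{r}}_{\sH_{\ck}}$ comes from the Tikhonov comparison against $r=0$ inside the regularized empirical risk defining $\hat{r}$, which yields $\xi_{2}\n{\hat{r}}_{\sH_{\ck}}^{2}\leq C_{\gamma}^{2}$ and hence $\n{\hat{r}}_{\sH_{\ck}}=O(\xi_{2}^{-1/2})$. The main obstacle, and the reason Lemma \ref{lem:lt2} enters the chain rather than only Theorem \ref{th:lt1}, is the coupling between the regularization schedule $\xi_{2}$ (which must be driven to zero to obtain the hypothesis on $\sE(\hat{r}\circ h^{\ast})$) and the rate at which $\n{A_{\hat{h}}-A_{h^{\ast}}}_{\sL(L^2)}\to 0$: one must arrange $\n{A_{\hat{h}}-A_{h^{\ast}}}_{\sL(L^2)}\xi_{2}^{-1/2}\to 0$ along a common index. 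Lemma \ref{lem:lt2} supplies the finite-sample control needed to replace the population operators $T_{\hat{h}}, T_{h^{\ast}}$ by the empirical versions $\hat{T}_{\hat{h}}, \hat{T}_{h^{\ast}}$ which actually define $\hat{r}$, so that the marginal-predictor convergence and the Szabo et al.\ distribution-regression convergence can be synchronized on the same sampling index. Once this synchronization is made rigorous, both brackets vanish and $\sE(\hat{r}\circ\hat{h})\to\sE(r_{\sH}\circ h^{\ast})$ follows.
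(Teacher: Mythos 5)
Your decomposition $\sE(\hat{r}\circ\hat{h}) - \sE(r_{\sH}\circ h^{\ast}) = [\sE(\hat{r}\circ\hat{h}) - \sE(\hat{r}\circ h^{\ast})] + [\sE(\hat{r}\circ h^{\ast}) - \sE(r_{\sH}\circ h^{\ast})]$ is algebraically clean, and your treatment of the first bracket (polarization, Cauchy--Schwarz, $\n{A_{\hat{h}}-A_{h^\ast}}_{\sL(L^2)}\to 0$ from Theorem \ref{th:lt1}, the Tikhonov bound $\n{\hat{r}}_{\sH_\ck}=O(\xi_2^{-1/2})$, and the need to couple the schedules) faithfully reproduces what the paper does for its terms $(I)$ and $(III)$ -- indeed your norm bound on $\hat{r}$ is tighter than the paper's. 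The gap is in the second bracket. You declare it to be ``precisely one of the two stated convergence hypotheses,'' but that reading assumes away the central difficulty the theorem is meant to address: the regressor $\hat{r}$ is trained on two-stage sampled data whose class labels are produced by $\hat{g}$, not by $g^\ast$, so even the quantity $\sE(\hat{r}\circ h^\ast)$ is contaminated by classification error and its convergence is not an off-the-shelf consequence of distribution-regression consistency. The paper's proof makes this explicit by writing $\hat{r}=\hat{r}_{\hat{h}}$, introducing $\hat{r}_{h^\ast}$ (the empirical regressor trained on correctly classified data), and splitting its term $(II)$ into $(IIa)=\n{\sqrt{T_{h^\ast}}(\hat{r}_{\hat{h}}-\hat{r}_{h^\ast})}^2_{\sH_\ck}$ and $(IIb)=\n{\sqrt{T_{h^\ast}}(\hat{r}_{h^\ast}-r_{\sH})}^2_{\sH_\ck}$; only $(IIb)$ is handed off to the hypothesis (via Theorem 2 of Szab\'o et al.), while $(IIa)$ is controlled through the resolvent identity $(\hat{T}_{\hat{h}}+\xi_2)^{-1}-(\hat{T}_{h^\ast}+\xi_2)^{-1}=(\hat{T}_{\hat{h}}+\xi_2)^{-1}(\hat{T}_{h^\ast}-\hat{T}_{\hat{h}})(\hat{T}_{h^\ast}+\xi_2)^{-1}$ together with bounds on $\n{\hat{s}_{\hat{h}}-\hat{s}_{h^\ast}}$ and $\n{\hat{T}_{\hat{h}}-\hat{T}_{h^\ast}}$.

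This also explains why your stated role for Lemma \ref{lem:lt2} is off: it is not there to ``replace population operators by empirical versions'' or to synchronize sampling indices, but precisely to show $\n{\hat{T}_{\hat{h}}-\hat{T}_{h^\ast}}\to 0$, i.e.\ that the classification errors propagating into the \emph{training set} of the regressor wash out. If you intend your proof to stand, you must either (a) strengthen the hypothesis to apply to $\hat{r}_{\hat{h}}$ composed with $h^\ast$, which makes the theorem nearly circular since that is the substantive claim, or (b) insert the $(IIa)$-type argument bridging $\hat{r}_{\hat{h}}$ to $\hat{r}_{h^\ast}$, at which point you have reconstructed the paper's proof. As written, the proposal is missing that bridge.
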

    The proof is presented in Appendix \ref{proof:th:ltconsistency}.
    
    Next we present details of implementation of the system including the details of the astrodynamics modeling, some experimental results and synthetic dataset results for different scenarios. 
    
    \section{Results and Discussion} \label{sec:results}
    We consider four scenarios to test different aspects of the orbit determination system. The first is based on Doppler only orbit determination and the last three on direction of arrival and range information. We present results with Doppler information collected by cognitive radio based algorithms on software-defined ground stations from on-orbit transmissions of MCubed-2 and GRIFEX spacecraft testing algorithmic behaviour with high-noise low observability conditions. We then discuss the results for an example on-orbit deployment scenario of two spacecraft testing identification and orbit determination of satellites in a TLE lottery. The third scenario considers a lunar orbit where we test algorithmic behaviour with a chaotic system. In the fourth scenario, we  perform a comparison of a traditional orbit determination system based on the EKF with the proposed machine learning technique. Last, we shall discuss and provide comparisons of the different scenarios.
    
    We present results with one and two ground stations. The mathematical theory is broad enough to allow for networked ground stations with multiple types of sensors, however, we shall leave this for future work. We begin with details of the system architecture for single ground station scenarios. 
    
    \subsection{System Architecture} \label{sub:sysarch}
    The general architecture of the sampling and the orbit estimation system is shown in Figure \ref{fig:sysarch}. The prior \(P_\Gamma\), which represents the uncertainty of orbit parameters for learning, can be constructed either from launch characteristics and launch sequencing or from uncertainties in pre-launch TLEs. There are no limiting factors to \(P_\Gamma\) than those described in section \ref{subsec_mathanalysis}. It is necessary for the orbital elements used in this system to have parameters that are independent of each other in order to reduce computation requirements for training in the orbit determination step, since the kernel operator can be diagonal. The time sampling characteristics \(P_{T|z}\) and the noise characteristics \(P_{X|F}\) of the measurement system \(z\) must be estimated prior to generation of training data for the orbit determination. This estimation will depend on the deployment scenario. We provide examples of this for the Doppler only orbit determination technique in section \ref{ssub:DopplerOD}. 
    
    Sample generation is split into two subsystems: the propagator and the observer. The propagation system (or propagator) generates samples of the dynamic systems \(U\) and \(V\) at sample time points. The dynamic system must be unbiased in its generation of data and its error must be bounded. This holds as long as the errors in the spherical harmonic coefficients of the gravity model are bounded and does not have a constant bias error. We present test scenarios with two propagators: SGP4 and an analytical propagator with a numerical integration-based set up. The propagator sample time points are generated disregarding horizon \(O\) and sensitivity information of the ground station. The observer system then combines horizon information present in \(V, O\), the sensitivity information, and noise to generate samples \(X\) from \(P(X | \Gamma = \gamma)\) and the spacecraft id labeled \(1,\cdots, n_S\) as detailed in section \ref{subsec_mathanalysis}.

    The learning system depends on the number of spacecraft. In the single spacecraft scenario, this training information is then fed to the two stage sampled regression for orbit determination. In the multiple spacecraft scenario, \(N_{tl}\) orbit distributions are first used to train the marginal predictor. The rest of the samples are then classified using the marginal predictor and then used to train an \(n_S\) bank of regressors, one for each spacecraft. As shown in Section \ref{learningtheory}, this is necessary in the scenarios where identification of the spacecraft is not straightforward. Due to the sparsity and the convergence characteristics of the marginal predictor, \(N_{tl}\) is generally significantly lower than \(N_{dreg}\). Besides the serial behavior in training for the marginal predictor and the regressor, the system is entirely parallelizable. In fact, even though training for \(n_S\) regressors have to be performed, the number of kernel evaluations are equal to that of a single large regressor with all the data points. For the transfer learning system, to speed up evaluation, we used a random Fourier feature based transfer learning system proposed in \citet{blanchard2017domain}. 
    The distribution regression system consisted of 13 hyper parameters: 6 for the kernel bandwidths of the embeddings, 6 for higher RKHS \(\ck\) (as in equation \eqref{eq:kernelmatrix}) and one for the regularizer. 
    \begin{equation} \label{eq:kernelmatrix}
        \ck = 
        \begin{bmatrix}
            \ck_{\tilde{\gamma}_1} & 0 & 0 & 0 & 0 & 0 \\
            0 & \ck_{\tilde{\gamma}_2} & 0 & 0 & 0 & 0 \\
            0 & 0 & \ck_{\tilde{\gamma}_3} & 0 & 0 & 0 \\
            0 & 0 & 0 & \ck_{\tilde{\gamma}_4} & 0 & 0 \\
            0 & 0 & 0 & 0 & \ck_{\tilde{\gamma}_5} & 0 \\
            0 & 0 & 0 & 0 & 0 & \ck_{\tilde{\gamma}_6}
        \end{bmatrix}
    \end{equation}

    Grid search with 5 fold cross validation was used for training. The Michigan High-Performance Cluster was used for training and testing.
    Preprocessed orbit feature vectors can then be fed into this system to perform orbit determination of the set of spacecraft. The preprocessing steps are dependant on the type of feature vectors used for orbit determination.
    In this paper, in addition to the experimental results presented for the two spacecraft (MCubed-2 and GRIFEX), we also generate additional identically distributed sampling data to test performance of the system. For the direction of arrival and range (DOAR) systems we provide only synthetic results with data generated from analytical propagators.

    \begin{figure} [htbp] 
        \centering
        \includegraphics[width=\linewidth]{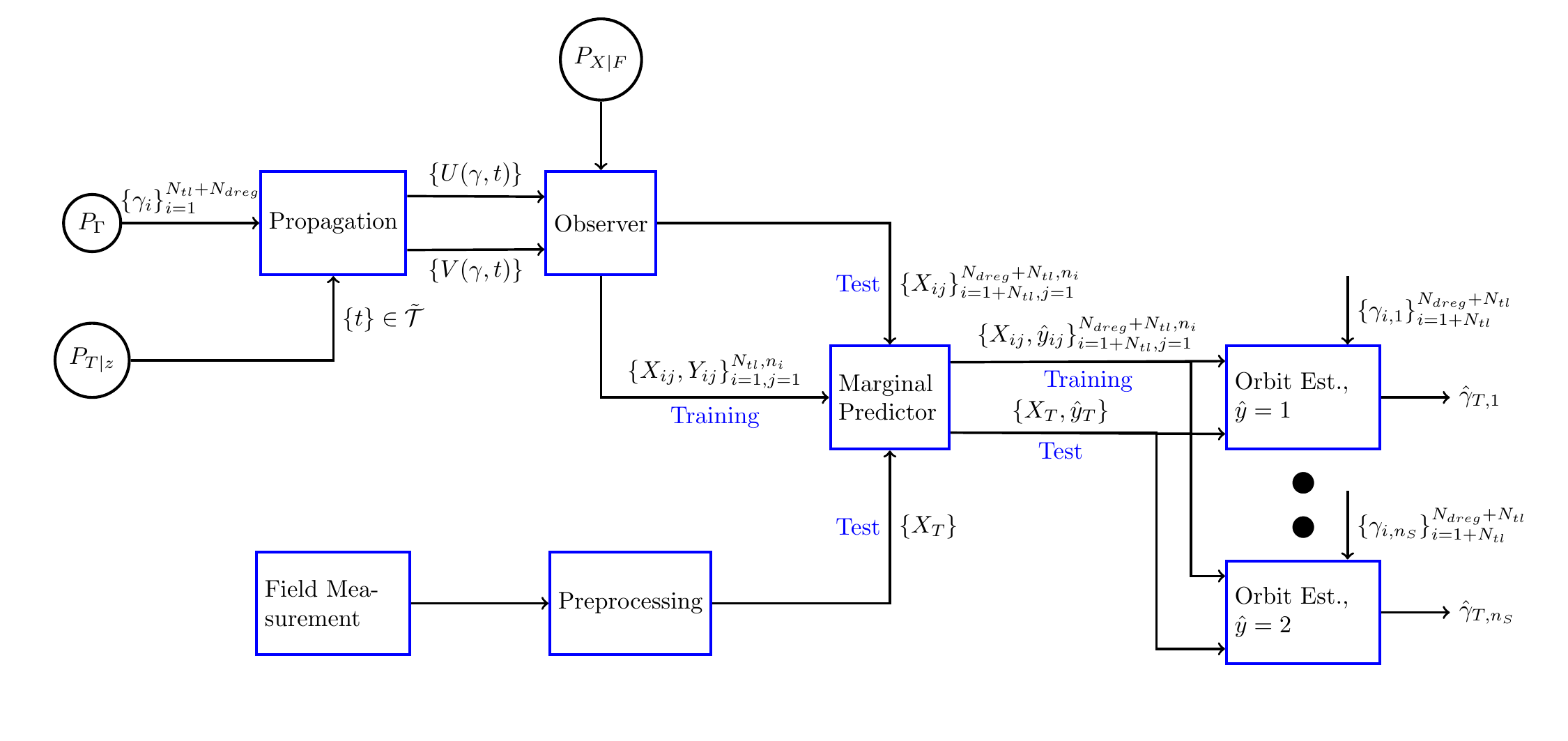}
        \caption{System Architecture}
        \label{fig:sysarch}
    \end{figure}

    \subsection{Doppler Only OD}
    \label{ssub:DopplerOD}
    The approach detailed in Section 4 states that if observability criteria are satisfied, then Doppler information alone should be sufficient to perform orbit determination of spacecraft. While analytical verification of observability for Doppler based observations is highly complex and non-trivial, the approach detailed can be used to verify observability through the performance of the learning system. 
    
    We run orbit determination for two low Earth orbit spacecraft - MCubed/COVE-2 and GRIFEX (\citep{norton2013nasa, cutler2015grfx, norton2012spaceborne}). 
    Their orbits were determined over an interval spanning 4 passes for MCubed-2 and 3 passes for GRIFEX at the Ann Arbor ground station. This analysis was performed using extracted Doppler data from actual passes. Both satellites have UHF telemetry channels at ~437.485 MHz and transmissions at 9600 bps, GMSK modulated waveforms. These transmissions will only be decodable when received energy per bit over noise crosses 13dB. However, decodability and identifiability is not a requirement for the proposed orbit determination technique.
    
    Cognitive radio approaches in blind cyclostationary feature extraction \citep{bkassiny2012blind} were used to extract Doppler, time, and data rate information. These algorithms were applied over recordings of raw, high-rate sampled data from an experimental, software-defined radio (SDR) based ground station. Complex baseband recordings were made of satellite transmissions with this SDR system for passes over Ann Arbor over a 6 hour interval starting at 23:00:00 UTC on 9 Feb. 2016. 
    The recordings were limited to predicted intervals around passes based on training data due to large file sizes of the recordings. Figure \ref{fig:rawiq} shows an example recording for an MCubed-2 pass. Note the variation in received power is due to undamped oscillations in attitude on orbit (MCubed-2 has damping only on one axis). 
    
    Appropriate FIR filter banks were used on the baseband signal to filter the software defined radio harmonics and known constant frequency out of band RF transmissions such as those seen around the 380 second marks at 437.504 MHz. Note that we do not assume the presence of prior orbit information accurate enough to use directional antennas to decode signals associated with the spacecraft. A low-gain wide-beam antenna can be used to collect raw spectral information to extract parameters associated with transmissions. The raw RF baseband signals recordings also consisted of noise due to transmissions to the spacecraft from the ground station, which were eliminated using power thresholding (RF leakage for 500 W transmissions were at least 30 dB higher than beacons due to attenuation). CubeSat modulated telemetry transmissions at 9.6 kbps which was used to isolate Doppler of the spacecrafts \citep{gardner1987spectrald}. Due to trivial classification requirements, data was manually classified before feeding into the orbit determination system.

\paragraph{Bias Correction} For the learning algorithm to operate as expected the experimental and training data offsets should be identical. However, due to implementation issues, there were specific communications system characteristics both on the spacecraft and on the ground station which resulted in bias in the recorded data. On the satellite, temperature variations and imperfect frequency calibration transmission center frequency led to frequency bias.  On the ground, there was a varying initial timing bias during the initialization of each recording (one recording per pass during the 6 hour interval).  This was due to coding inefficiencies and speed in loggine the large data (~3 gigabytes for 10 minutes) to the file system.  
    
    The frequency bias was corrected with two frequency offset corrections - one for MCubed-2 and one for GRIFEX.  The offsets were corrected by computing frequency offsets of a prior pass with similar spacecraft temperatures. 
The timing bias was corrected with 7 time offset corrections for the 7 recording intervals (4 for MCubed-2 and 3 for GRIFEX). 
    The time offsets were corrected by computing the time offset with respect to the TLE of the spacecrafts to align the points of maximum Doppler. The time offset corrections varied from 0 to 8 seconds. No other changes to the recordings were performed prior to extraction of features. Figure \ref{fig:Dopplerfeatures} shows an example of the extracted features and the JSpOC (Joint Space Operations Command) TLE post bias correction. 
    
    We do not expect to face this bias correction issue in future deployments of the orbit determination technique as center frequency behavior will be characterized prior to launch and the cognitive radio algorithms will be integrated into real-time operational software instead of being implemented over recordings in this experimental fashion.

    \begin{figure} [htbp] 
        \centering
        \includegraphics[width=0.7\linewidth]{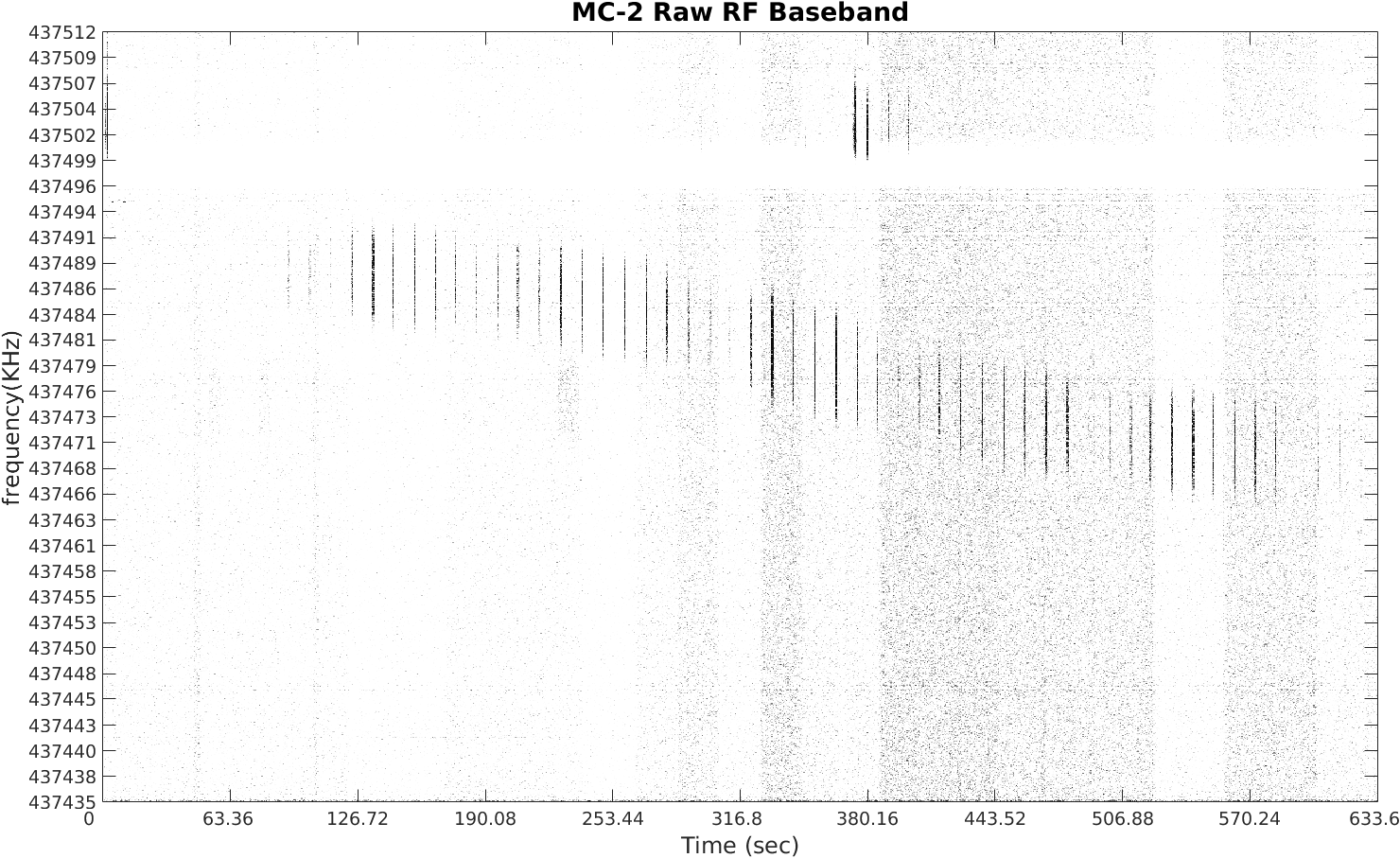}
        \caption{MCubed-2 Raw RF Baseband Recordings from 02-10-2016 at 01:20:24 UTC}
        \label{fig:rawiq}
    \end{figure}
    
    \begin{figure} [htbp] 
        \centering
        \includegraphics[width=0.6\linewidth]{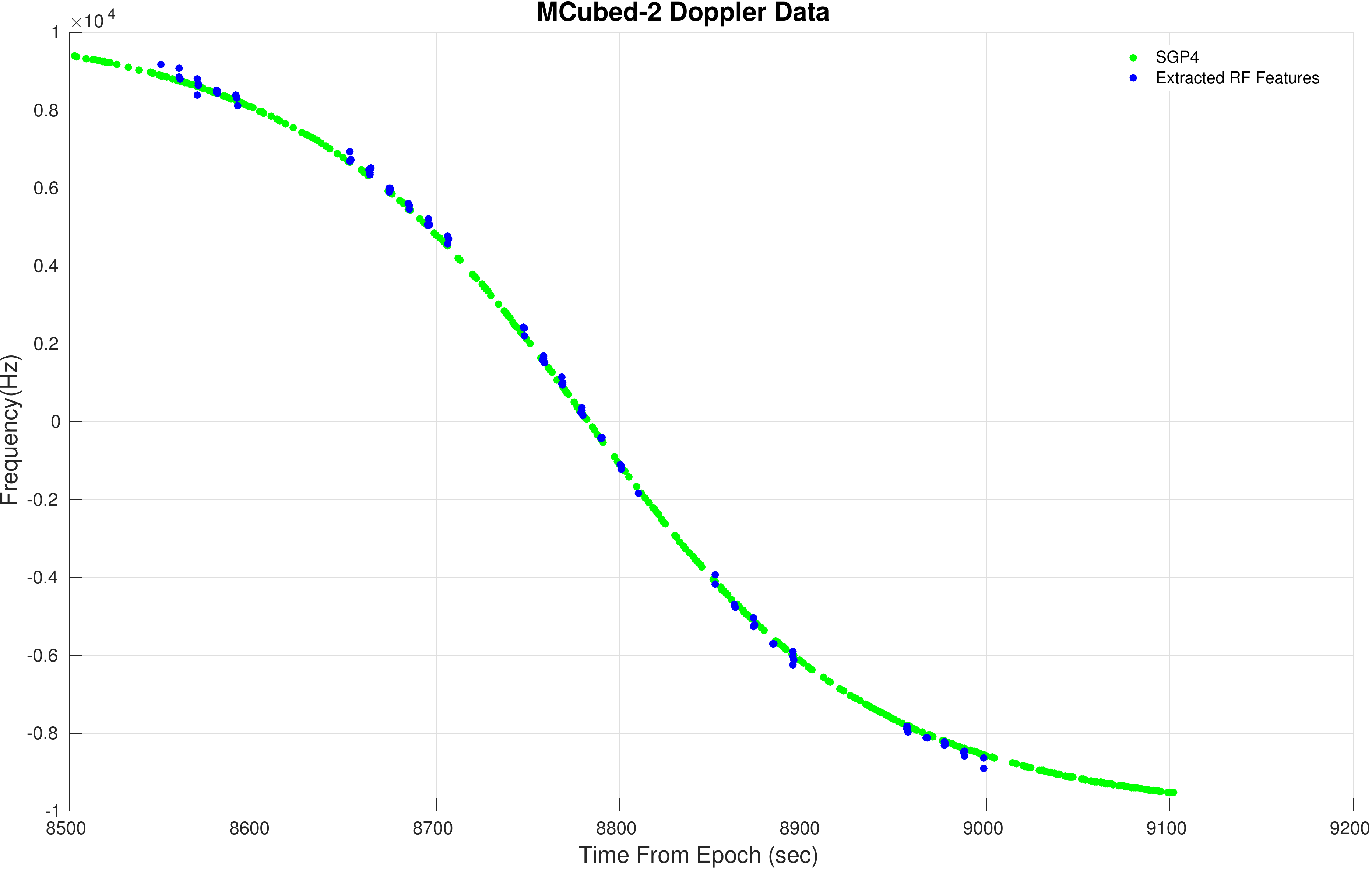}
        \caption{Doppler and Time feature vectors compared with Doppler from TLE Predicted Orbit post bias correction}
        \label{fig:Dopplerfeatures}
    \end{figure}

    The dynamic system for \(U\) and \(V\) used throughout this scenario is the SGP4 propagator. The learning sytems are trained to estimate orbital elements specifically designed for TLE generation (not classical elements) and simplified propagators. \(U\) consists only of Doppler information and \(V\) consists of horizon information for the training data.

    \subsubsection{GRIFEX Orbit Determination}
    \label{ssub:grfxod}
    GRIFEX orbit state was estimated from raw baseband RF transmission data observed over 3 passes and received during nominal operations. The priors were chosen to provide a sufficiently wide region of initial states to test orbit estimation. The prior \(P_\Gamma\) is
    
    \begin{align*}
    A &\sim R_e + U(525,555) \text{ km}, &\qquad
    e &\sim U(0.012, 0.017),\\
    \Omega &\sim U(120^\circ,130^\circ), & \qquad 
    I &\sim U(96^\circ, 101^\circ),\\
    \omega &\sim U(185^\circ, 200^\circ), &\qquad
    M &= U(35^\circ,50^\circ).
    \end{align*}
    
    \(P_\Gamma\) results in a variance in initial position of 765 km. Samples of 4000 orbits were used for training with the two stage sampled regressor (\(~1.35 \times 10^6\) feature points in total). For testing purposes, in addition to the data acquired from on orbit transmissions, additional training data was generated with 200 test orbits for evaluation of the parameters from additional i.i.d samples. The noise distribution \(P(X|F)\) was chosen to be uniform with a width of 200 Hz, similar in behavior to the noise from the Doppler observations. GRIFEX produces beacons and transmissions approximately every 10 seconds with an arbitrary initial offset (depending on operational characteristics) along with spacecraft responses due to nominal operations in between. We approximated the resulting transmission timestamp distribution with a uniform distribution over \(\tilde{\sT}\). 
    
    From raw baseband signals, 534 feature vectors were extracted over 3 passes. The relevant TLE orbital parameters for the JSpOC TLE and the estimated values are shown in Table \ref{table:grifexelements}. The 200 additional simulated test orbits were also tested for orbit determination. The normalized errors in orbital elements for the 200 simulated test orbits for GRIFEX are shown in Figure \ref{fig:grfxtestorbits} (normalized by the width of the support of the prior distribution). The Radial, along-track and cross-track (RSW) and total errors for each of the test orbits are shown in Figure \ref{fig:grfxrswerr}. Orbital elements were estimated for the epoch 01:00:00 2016/2/10 UTC. 

    \begin{table}[h]  
        \caption{Two Line Element Parameters of the GRIFEX spacecraft.}
        \centering
        \begin{tabular}{| l | c | c | c | c | c | c |}
            \hline
            & A(km) & e & I(deg) & \(\Omega\)(deg) & \(\omega\)(deg) & M(deg) \\
            \hline
            True (JSpOC Est.) & 537.663 & 0.0152 & 99.089 & 123.2705 & 194.6996 & 40.8253 \\
            \hline
            Estimated & 534.673 & 0.0167 & 98.43 & 122.709 & 191.4 & 43.7795 \\
            \hline 
        \end{tabular}
        \label{table:grifexelements}
    \end{table}
    The error in estimated initial position for the GRIFEX spacecraft is 30.05 km. The average error for the 200 test orbits was 47.24 km. The error magnitudes of the simulated test orbits are of the same order as that of the experimental data indicating that the fidelity of the training and test models mirror those of experimental data. Note that radial and cross-track errors are significantly lower for Doppler based observations. This is expected as along track information can be gained only through subtle changes in the Doppler curve when working with Doppler based observations and does not change the length of the passes or time between passes. Changes in radial information can be observed as it leads to changes in total variation of the Doppler curves and the timing between passes, resulting significantly better estimates. Changes in cross-track information leads to changes in the length of the passes and total variation of Doppler of the different passes for low Earth orbits.

    \begin{figure} [htbp] 
        \centering
        \includegraphics[width=0.8\linewidth]{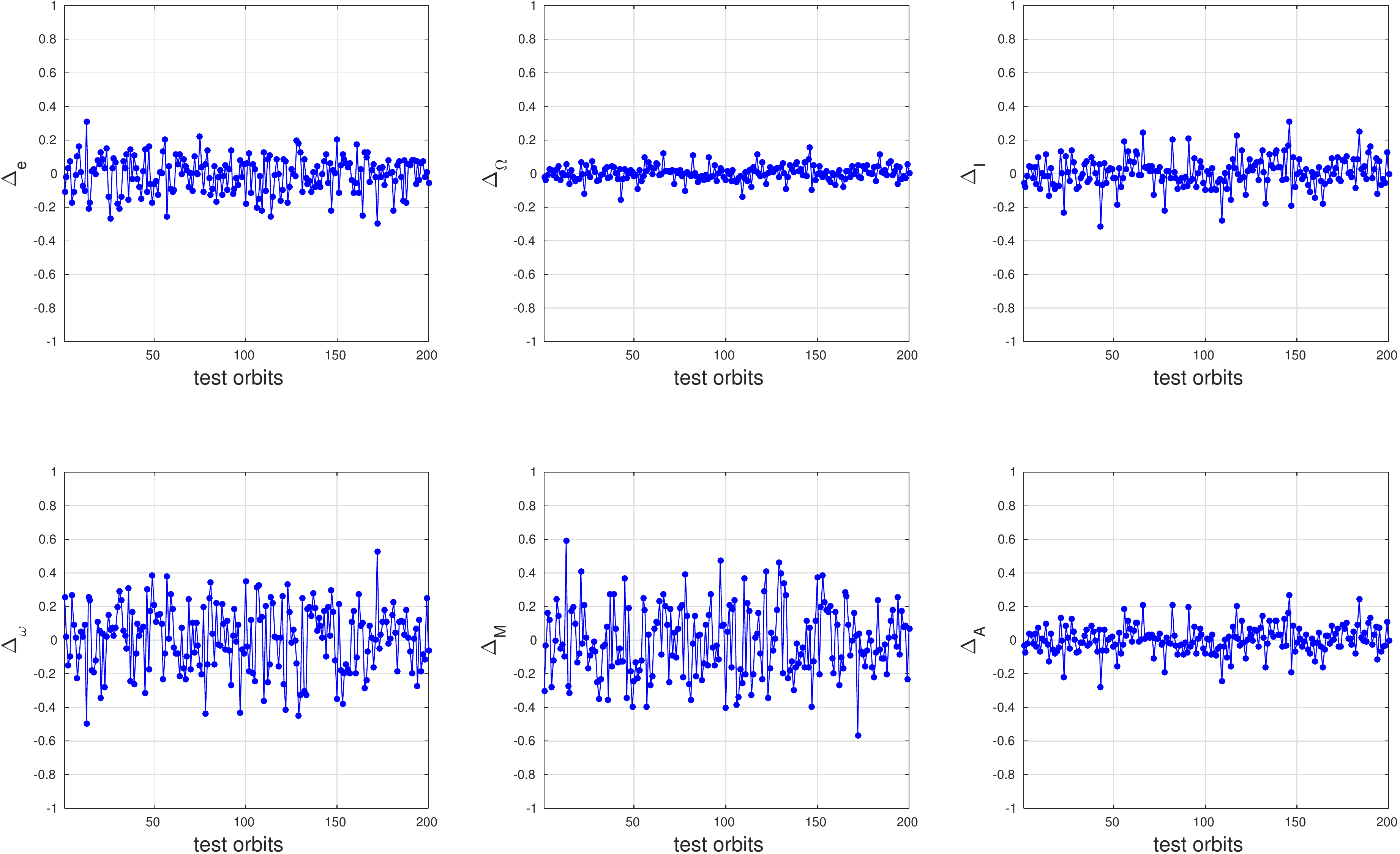}
        \caption{Normalized errors of orbital parameters of test orbits based on GRIFEX Priors}
        \label{fig:grfxtestorbits}
    \end{figure}
    
    \begin{figure} [tb] 
        \centering
        \includegraphics[width=\linewidth]{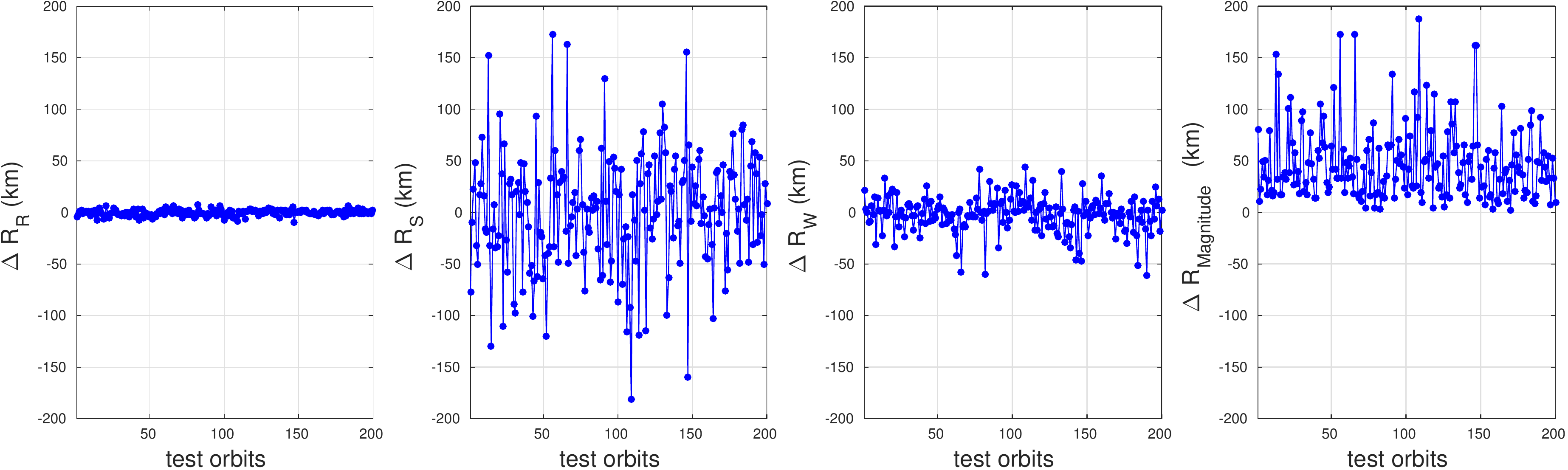}
        \caption{RSW errors of test orbits based on GRIFEX Priors}
        \label{fig:grfxrswerr}
    \end{figure}

    \subsubsection{MCubed-2 Orbit Determination}
    \label{ssub:mc2od}
    MCubed-2 orbit determination was performed with data extracted over 4 passes. The priors for MCubed-2 were chosen to have smaller widths compared to the GRIFEX scenario for variations in RAAN and the argument of perigee to test for changes to estimation behavior while keeping the number of training datapoints approximately equal. The prior \(P_\Gamma\) used is as follows.
    \begin{align*}
        A &\sim R_e + U(635, 665) \text{ km}, &\qquad
        e &\sim U(0.025, 0.03),\\
        \Omega &\sim U(200^\circ, 205^\circ), &\qquad
        I &\sim U(117^\circ, 122^\circ),\\
        \omega &\sim U(65^\circ,70^\circ), & \qquad
        M &= U(223^\circ,233^\circ)
    \end{align*}
    \(P_\Gamma\) results in a variance in initial position of 448 km. The training and testing setups were similar to GRIFEX. Samples of 4000 orbits were used for training the two stage sampled regressor (\(~1.31 \times 10^6\) feature points in total) and 200 additional orbits were sampled for testing. The noise distribution \(P(X|F)\) was chosen to be uniform with a width of 200 Hz. The probability of sampling in time were reduced corresponding to the behavior of MCubed-2.
    
    MCubed-2 produces beacons approximately every 20 seconds with an arbitrary initial offset (depending on operational characteristics). The power levels of these beacons are modulated by the relative orientation of the antennas of the spacecraft and the ground (this can be seen in Figure \ref{fig:rawiq}). We reduce the sampling complexity of this distribution for training data generation by approximating it with a uniform distribution through \(\tilde{\sT}\) which is then selected by the horizon \(O\). A total of 294 feature vectors were extracted over four passes for orbit determination. Table \ref{table:mc2elements} shows the TLE estimated elements versus the estimates from the machine learning algorithm. In addition to the data acquired on orbit, additional data was generated for 200 test orbits for evaluation of the parameters. Figure \ref{fig:mc2testorbits} shows the normalized errors for each orbital element for the test orbits for MCubed-2 (normalized by the width of the support of the prior distribution). Orbital elements were estimated for the epoch: 23:00:00 2016/2/09 UTC.
    
    \begin{table}[h]  
        \caption{Keplerian elements of the MCubed-2 spacecraft.}
        \centering
        \begin{tabular}{| l | c | c | c | c | c | c |}
            \hline
            & A(km) & e & I(deg) & \(\Omega\)(deg) & \(\omega\)(deg) & M(deg) \\
            \hline
            True (JSpOC Estimates) & 644.611 & 0.0273 & 120.493 & 201.978 & 67.501 & 225.47 \\
            \hline
            Estimated & 640.892 & 0.031 & 119.26 & 204.26 & 67.96 & 226.23 \\
            \hline 
        \end{tabular}
        \label{table:mc2elements}
    \end{table}
    The error in estimated initial position for the MCubed-2 spacecraft is 61.91 km. The average error over the test orbits was 22.76 km. The RSW and total errors for the test orbits are shown in Figure \ref{fig:mc2rswerr}. Note the improvement in estimation of the RAAN, inclination, mean anomaly and the semi-major axis and the RSW errors as compared to the estimates in GRIFEX. This is likely due to increased eccentricity, time of observation and decreased total variance in the initial position of the prior. This may also point to increased observability of parameters. Further studies on connections of observability metrics of this system to convergence bounds on learning algorithms should be explored in future work.  
    
    \begin{figure} [htbp] 
        \centering
        \includegraphics[width=0.8\linewidth]{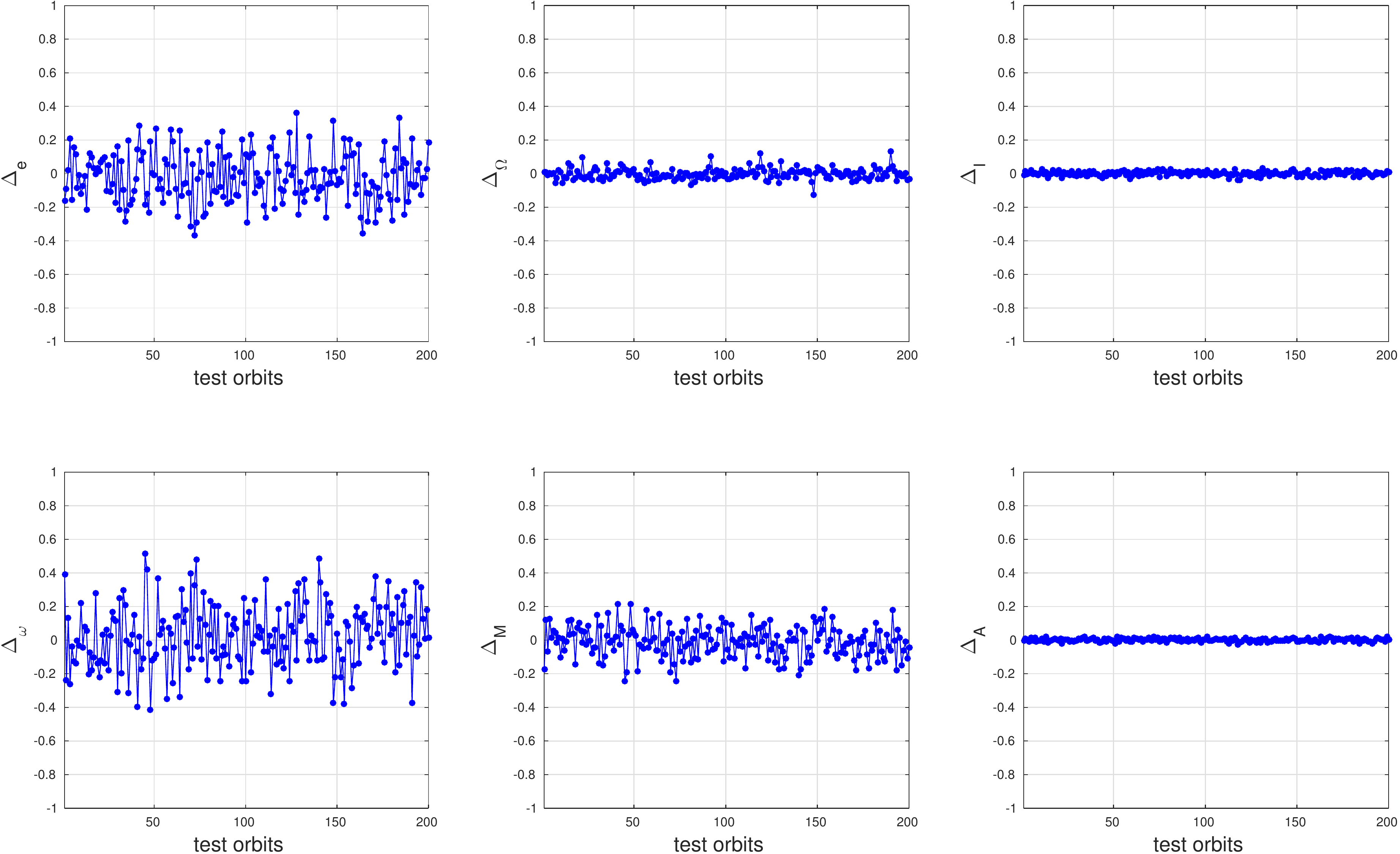}
        \caption{Normalized errors of orbital parameters of test orbits based on MCubed-2 Priors}
        \label{fig:mc2testorbits}
    \end{figure} 

    \begin{figure} [htb] 
        \centering
        \includegraphics[width=\linewidth]{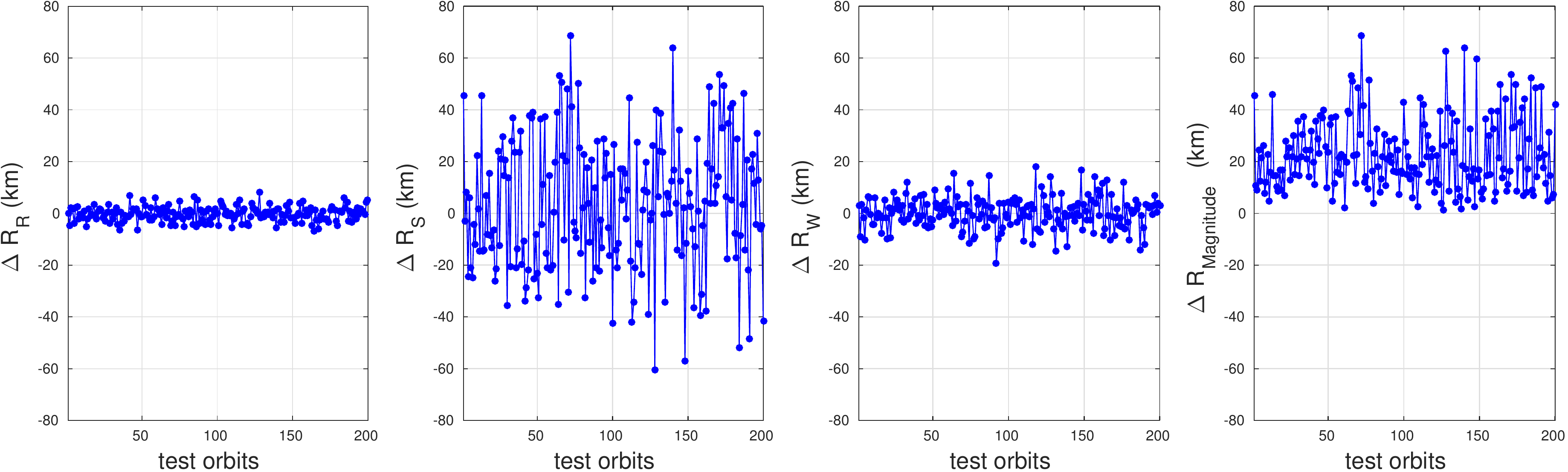}
        \caption{RSW Position Errors of test orbits based on MCubed-2 Priors}
        \label{fig:mc2rswerr}
    \end{figure} 
    
    \subsection{Position Based OD}
    \label{ssub:positionOD}
    We now present the results for a synthetic dataset simulating post deployment orbit determination of two spacecraft using position (direction of arrival and range data) from two ground stations (Ann Arbor and Chicago). This scenario simulates a TLE lottery, and shows that direction of arrival and range features from a noisy RADAR based system can be used to perform identification of the spacecraft based on orbit injection sequence in addition to orbit determination. It also demonstrates indirect generation of the priors of the spacecraft. The algorithm performs both the classification and regression tasks for orbit determination of both spacecraft. 
    The sequence of deployment results in sample information for the classification algorithm to identify the different spacecraft. We shall first describe the details for generation of \(P_\Gamma\) and the propagators used, then describe the learning system and provide results.

    \subsubsection{Sampled Data Generation}
    The priors \(P_\Gamma\) for this scenario are not directly generated and require simulation of deployment scenarios. First, samples are drawn for a deployer spacecraft with the following distribution on orbital parameters:
    
    \begin{align*}
        A &\sim R_e + U(650, 750) \text{ km}, &\qquad
        e &\sim U(0.03, 0.04),\\
        \Omega &\sim U(0^\circ, 5^\circ), &\qquad
        I &\sim U(70^\circ, 75^\circ),\\
        \omega &\sim U(350^\circ,360^\circ), & \qquad
        M &= U(300^\circ,310^\circ)
    \end{align*}

    Two spacecraft are then deployed from the deployer spacecraft. The first spacecraft is provided with a change in velocity (\(\Delta v\)) of -0.5 m/s along the direction of velocity of the deployer. The second spacecraft is inserted 200 s after the first one and is provided with a \(\Delta v\) of +0.5 m/s along the direction of velocity of the deployer to allow the two spacecraft to separate. For both spacecraft, an additional 1.25 m/s is provided in the plane perpendicular to the velocity of the deployer in a direction drawn at random in this plane. The deployments cones of the two spacecraft from the deployer body fixed frame are as shown in Figure \ref{fig:scdeploycone}. The two spacecraft are then allowed to separate by a few km by propagation of their states for 6 hours to simulate passes of multiple small spacecraft whose positions can be resolved by a radar system. The distribution of the two spacecraft states at the 6 hour epoch is \(P_\Gamma\). The training and test distributions are generated the same way.
    
    \begin{figure}[ht] 
        \centering
        \includegraphics[trim={0cm 4cm 0cm 1cm}, width=0.7\linewidth]{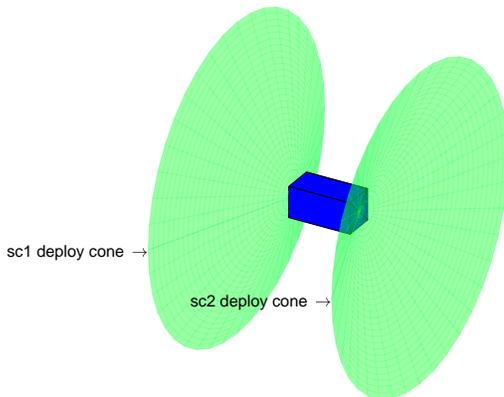}
        \caption{Deployment cones of spacecraft 1 \& 2}
        \label{fig:scdeploycone}
    \end{figure} 

    The analytical propagator used to propagate the two spacecraft worked with the EGM96 gravitational model for spherical harmonics. Coefficients up to 4th order harmonics were taken into consideration along with precession and nutation characteristics, to provide sufficient model fidelity for the synthetic data. The time synchronization errors between the two ground stations were assumed to be negligible resulting in one time-stamp per measurement. The resulting feature vectors were direction of arrival and range information from both ground stations and the time-stamps. This synthetic data generated is similar to those generated by a bi-static radar, and therefore, measurements are generated only when the two spacecraft are in the horizons of both the ground stations. Noise in measurement (\(P_{X|F}\)) of \(0.1^\circ\) was added for azimuth and elevation measurements and 1 km for range measurements at both ground stations, which is generally greater than in practical systems, to test robustness to noise. 
    Data for a total of 4700 orbits were generated. Around 70 feature vectors were generated per orbit per spacecraft for the training datasets. The total variance in initial position was 966 km for each spacecraft. The average separation of the two spacecraft at the epoch for orbit state estimation is 74.92 km.

    \subsubsection{Learning System}
    The first 500 orbit datasets were used along with identifiers for spacecraft to train the transfer learning system. A random Fourier feature approximation based transfer learning approach was applied to improve speed of training the data \citep{blanchard2017domain}. The performance of the algorithm was contrasted against its performance with the traditional SVM classifier in which the data from all the orbits were pooled before classification (pooled classification). The test system consisted of the remaining 4200 orbits whose datapoints had to be classified. 
        \begin{table}[h]
            \caption{Classification error comparison.}
            \centering
            \begin{tabular}{| c | c | c |}
                \hline
                Classification Method & \% Training Error & \% Test Error \\ \hline
                Transfer Learning & 2.01 & 2.57 \\ \hline
                Pooled Classification & 47.32 & 48.62 \\ \hline
            \end{tabular}
        \label{table:transferpooled}
        \end{table}
        Traditional pooled classification systems do not work well for direction of arrival data as the meta-distributions of the two classes can be identical between two different orbit insertion scenarios. However, if the marginal distribution of both spacecraft is known, as it is for transfer learning, the identity of the spacecrafts can be learnt. 

        The output of the classifier is then fed to the regression system for orbit determination of both spacecraft. Note that the classified outputs are used in training to maintain consistency between the training and test distributions as reasoned in section \ref{learningtheory}. Classified points from the first 4000 orbits were used to train each regression system. The orbital elements used were the position and velocity vectors at the epoch instead of traditional Keplerian elements as the argument of perigee and the right ascension angles were no longer compact sets (i.e., it varied as \([x_1, 360)\cup[0,x_2]\), see section \ref{subsec_mathanalysis}), even when the underlying space of probability distributions were compact.  Classified points from 200 orbits were used to test orbital parameter estimates. The average error in estimation of the position of the first spacecraft is 20.06 km. The average error in estimation of the position of the second spacecraft is 19.36 km. Note that this is less than half the average separation of the two spacecraft, so the positions are identifiable and resolvable with information from a single pass. Figures \ref{fig:sc1testorbits} and \ref{fig:sc2testorbits} show the normalized errors of the orbital elements (normalized by the width of support of the prior).

        \begin{figure} [ht] 
            \centering
            \includegraphics[width=0.8\linewidth]{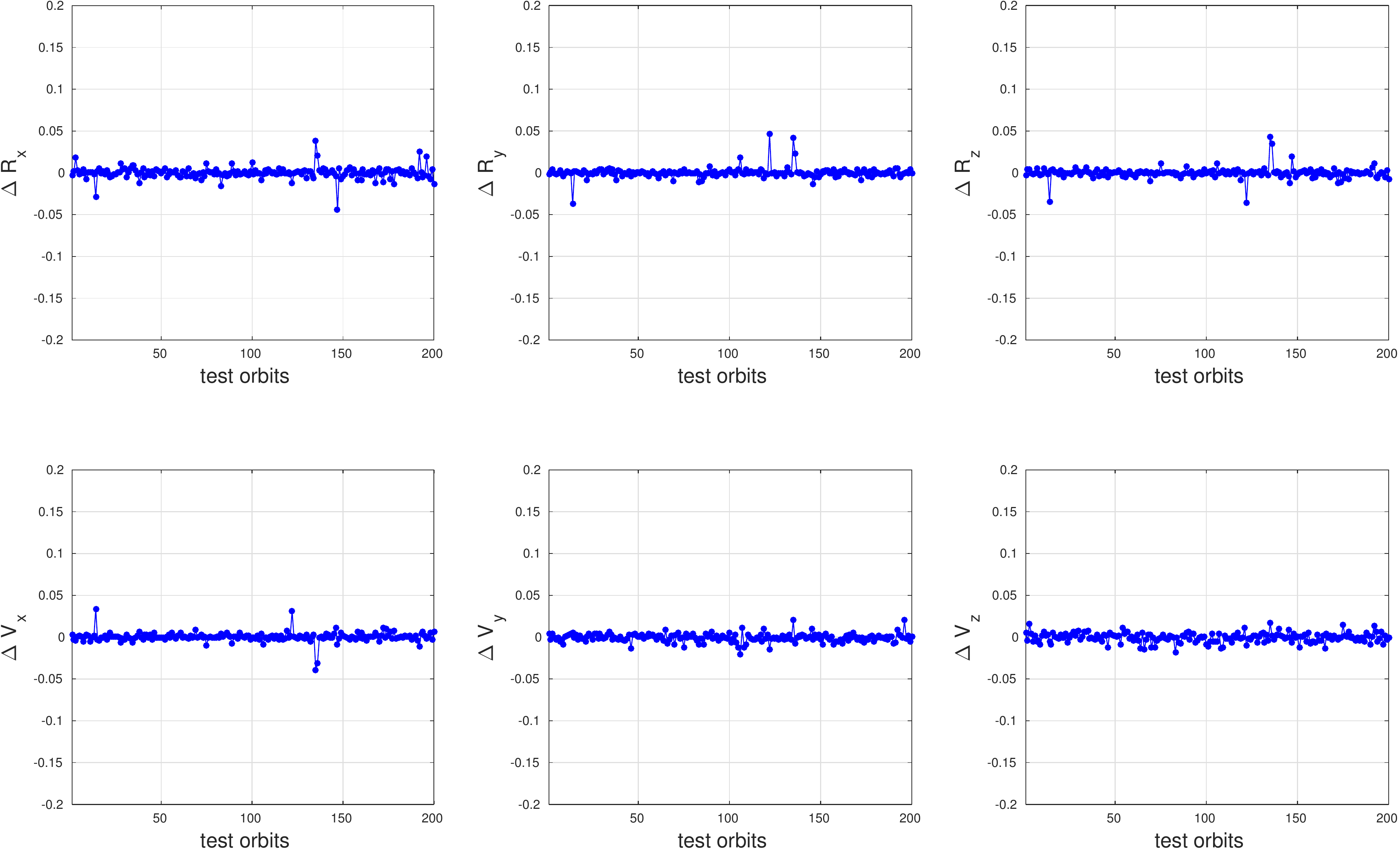}
            \caption{Normalized errors of orbital parameters of test orbits of spacecraft 1}
            \label{fig:sc1testorbits}
        \end{figure} 

        \begin{figure} [ht] 
            \centering
            \includegraphics[width=0.8\linewidth]{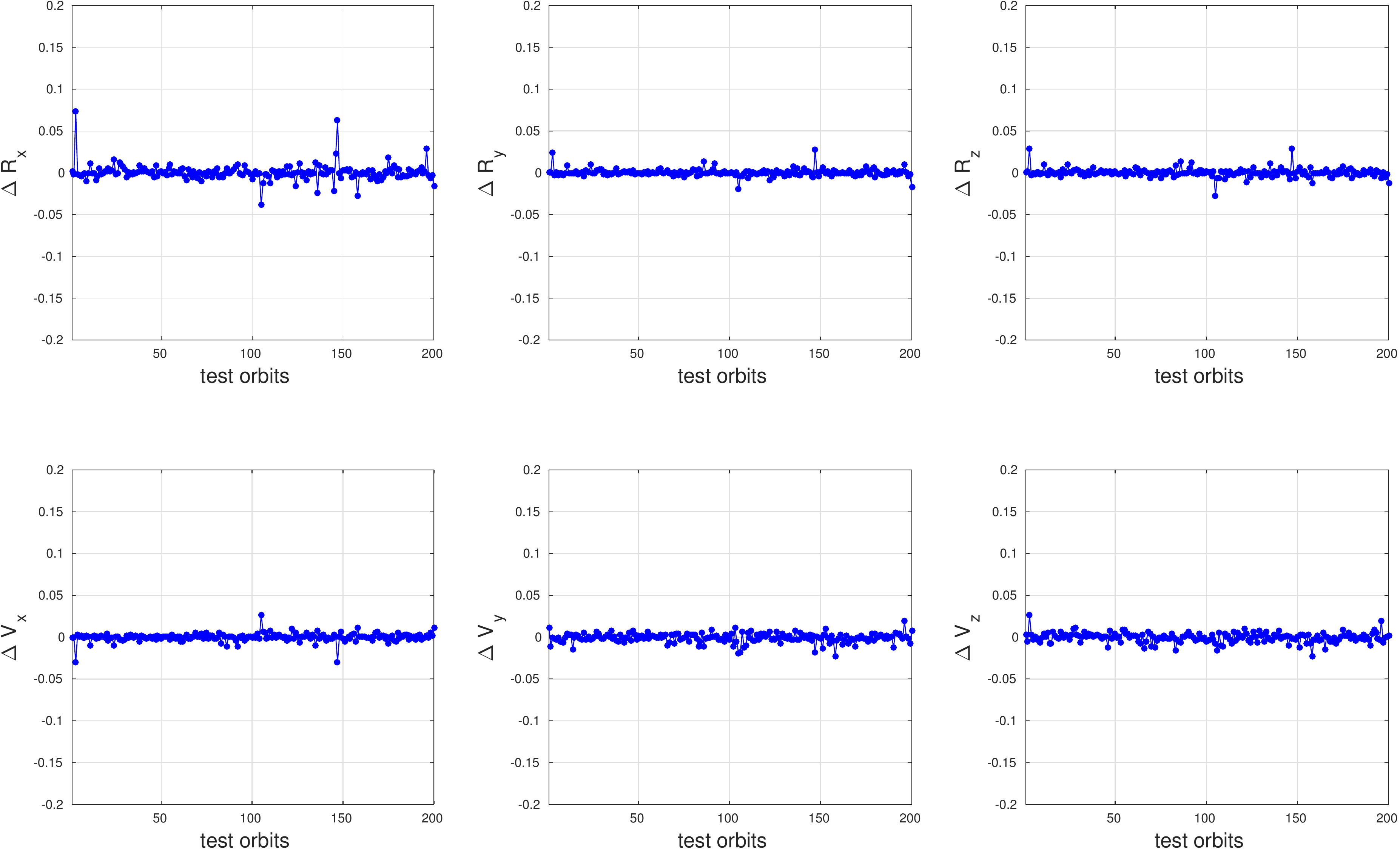}
            \caption{Normalized errors of orbital parameters of test orbits of spacecraft 2}
            \label{fig:sc2testorbits}
        \end{figure} 

    \subsection{Position Based OD - Lunar Orbit}
    \label{ssub:lunarOD}
    The characteristics of the system described in section \ref{sec:mlod} are also satisfied for an \(N\)-body problem. In fact, if the data generation system was constructed with a general celestial dynamical system, no changes will be required to the orbit determination system even with \(N\) bodies. To test the empirical behavior of the algorithms we consider a lunar orbit transfer scenario. The three body problem was the first described example of a chaotic system \citep{musielak2014three, poincare1893methodes, belbruno2004capture} where small changes in the initial orbital parameters lead to large changes in distributions associated with the data. We consider orbit estimation of a 4 day lunar transfer orbit with direction of arrival and range observations from one ground system over one pass. We first describe details of the propagation system and the orbit and prior design and then present the results of the orbit determination scenario.

    \subsubsection{Propagation and Orbit Design}
    \label{ssub:lunorbpropagator}
    For deep space orbit propagation, the propagator used in section \ref{ssub:positionOD} was extended to include accelerations from the Moon, Sun and Jupiter. To simplify and speed up computation, positions of these celestial objects were computed using JPL Ephemerides data. Further computational simplifications were performed by limiting sampling time to 300 seconds over a period of 4 days and interpolating for positions in between (Gaussian splines were used). Epoch for orbit insertion in this hypothetical scenario was chosen to be 18:00:00 1/1/2016 UTC.

    The prior \(P_\Gamma\) was designed as follows. First, a 4 day direct lunar orbit \citep{parker2014low} was designed to obtain a specific trans-lunar injection state. This state was then perturbed in position and velocity from samples drawn from a given set of distributions. An initial orbit using a circular restricted 3 body problem was constructed in a synodic frame. The synodic frame was transfered to a 3 dimensional system with the appropriate transformation. The circular lunar orbit was then replaced with JPL Ephemerides and the initial states were perturbed to obtain a transfer orbit. Lunar spherical harmonic coefficients were not taken into consideration and the moon was treated as a sphere due to negligible perturbatory effects during the test period. The orbits designed were similar to the 4 day injection orbits described in \citet{parker2014low}. The initial state of this orbit was used as initial input parameters to the distribution \(P_\Gamma\). The perturbations for position and velocity were designed as compact sets of conic sections with the following distributions (States are in spherical coordinates):
    \begin{align*}
        R_r &\sim R_{r,init} + U(0, 0.05R_{r,init}) \text{ km}, &\qquad
        V_r &\sim V_{r,init} + U(-0.02V_{r,init}, 0.02V_{r,init}), \\
        R_\theta &\sim U(R_{\theta, init} - 2^\circ, R_{\theta, init} + 2^\circ), &\qquad
        V_\theta &\sim U(V_{\theta, init} - 1^\circ, V_{\theta, init} + 1^\circ), \\
        R_\phi &\sim U(R_{\phi, init} - 2^\circ , R_{\phi, init} + 2^\circ), &\qquad
        V_\phi &\sim U(V_{\phi, init} - 1^\circ , V_{\phi, init} + 1^\circ).
    \end{align*}

    This prior results in an effective variance of 203 km in initial position. The samples drawn from the above distributions are then used to evaluate the orbit determination system. Figure \ref{fig:lunarorbitplot} shows the paths generated by the analytical propagator in the ECI frame for 20 sample states drawn from \(P_\Gamma\). A uniform distribution over time was used to generate observation vectors from one ground station. The number of samples were chosen to result in approximately one sample every 3 minutes over one pass (<12 hour period). Note that this 12 hour period does not begin during orbit insertion but 6 hours after insertion (This is due to the fact that the spacecraft is not in view of the chosen ground station during orbit insertion. Besides interpolation errors generated by the propagator, no additional noise was added to the system to generate observations. 
    
        \begin{figure} [ht] 
            \centering
            \includegraphics[width=0.9\linewidth]{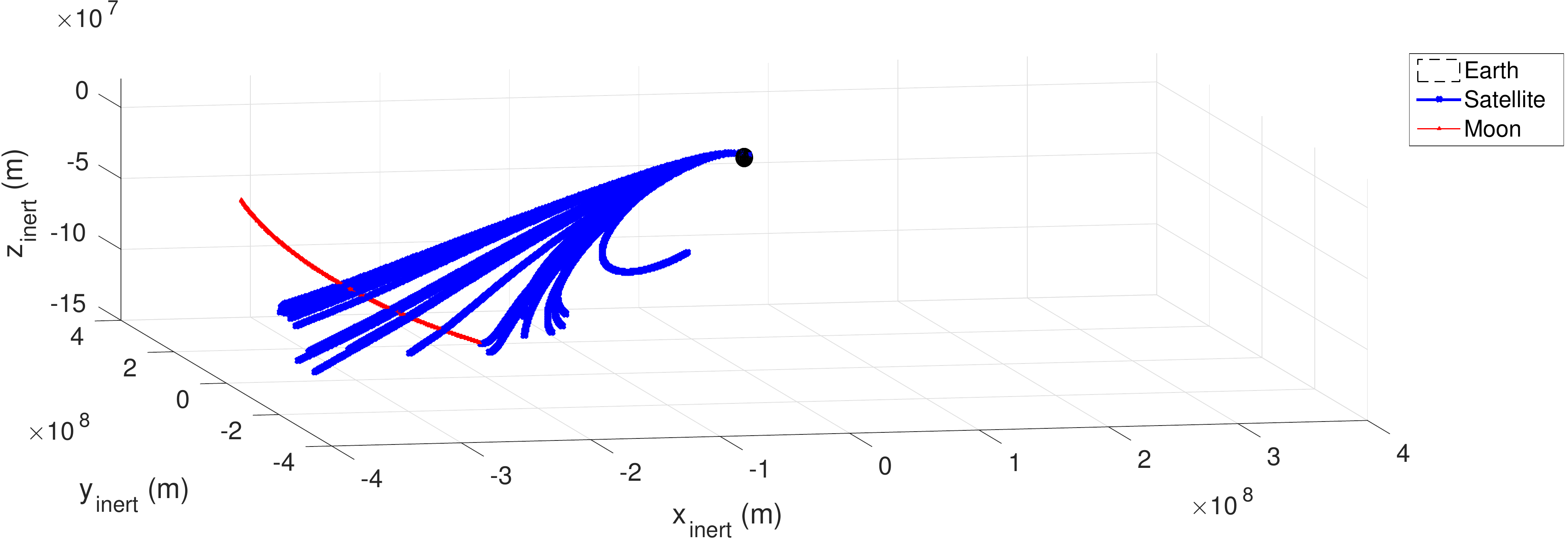}
            \caption{Example lunar transfer orbits drawn from prior distribution}
            \label{fig:lunarorbitplot}
        \end{figure} 

        \begin{figure} [ht] 
            \centering
            \includegraphics[width=0.8\linewidth]{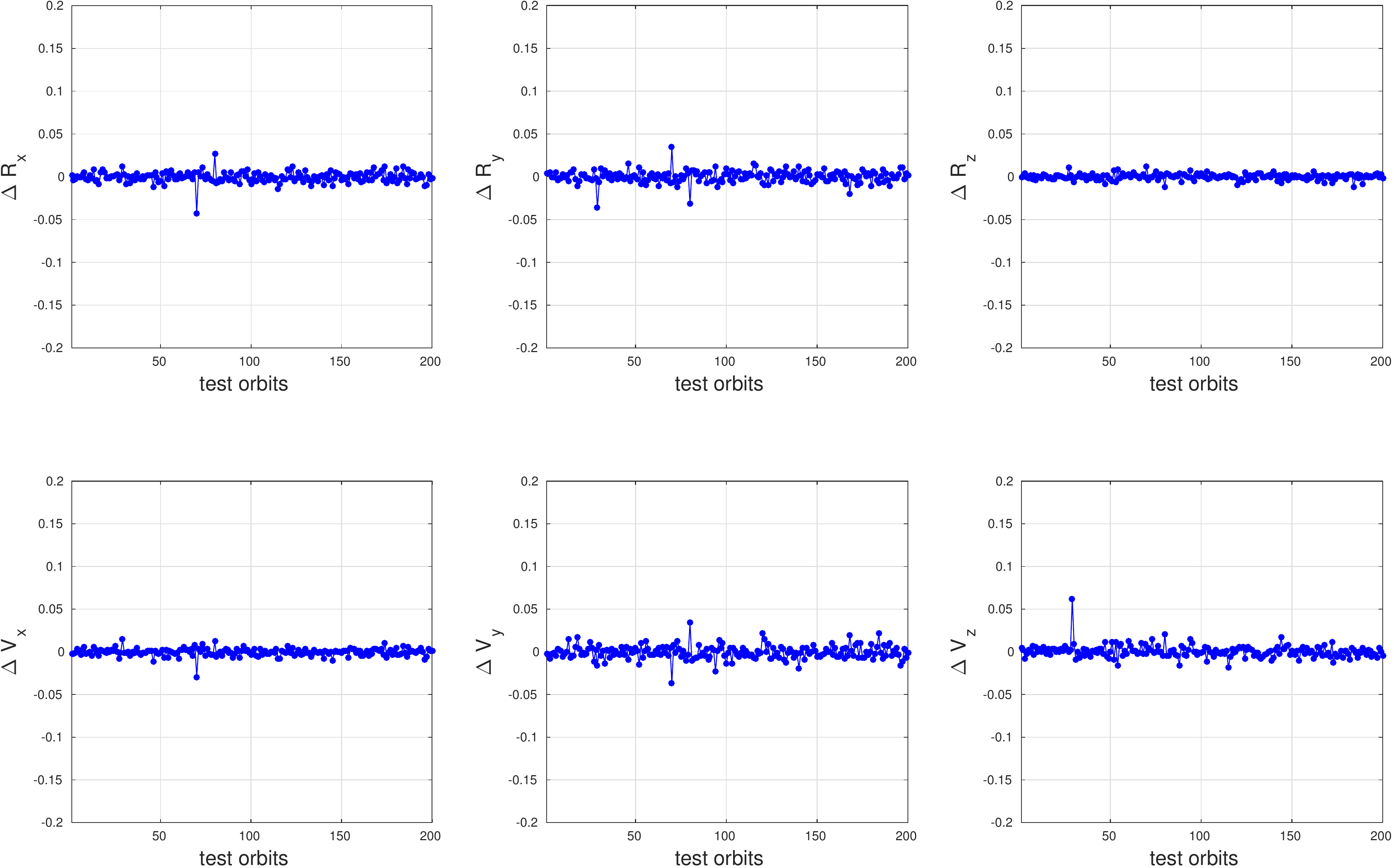}
            \caption{Normalized errors of orbital parameters of test orbits for trans-lunar scenario}
            \label{fig:lunartestorbits}
        \end{figure} 

    \subsubsection{Learning System}
    \label{ssub:learningsys_lunarOD}
    Similar to the other scenarios, 4000 orbits were generated for training and 200 for testing. The preprocessing step was modified to normalize time period and range variations, besides this no changes were performed to the learning algorithm. Figure \ref{fig:lunartestorbits} shows the normalized errors in the estimates (normalized by the width of the support of the prior of each element). The average error in position estimation was 4km. This error is lower in comparison to similar scenarios which use direction and range information for orbit estimation. Note that due to finite sampling and chaotic nature of the orbits, outliers will exist with very low probability. 

    \subsection{Position based OD - Comparison with EKF}
    We compare the performance of the learning technique proposed with a traditional orbit determination system based on the EKF. The propagators used were the same as described in section \ref{ssub:positionOD}. We shall first describe the exact characteristics of the orbit determination system used to compare against and then provide details of synthetic data generated for comparison.
    
    \subsubsection{Sample Data Generation}
    \label{ssub:sampledataEKFOD}
    The probability distribution over orbital parameters chosen for this scenario was designed from two perspectives. First the preliminary orbit determination system's performance with large noise added to the observations should be sufficient to force the EKF out of its linear region. Second, the noise added should be admitted by the data preprocessing and \(6 \sigma\) editing filters used in state of the art EKF based orbit determination algorithms \citep{wright2013odtk}. Orbits with relatively high eccentricity were chosen with the following priors:

    \begin{align*}
        A &\sim R_e + U(5000, 5400) \text{ km}, &\qquad
        e &\sim U(0.4, 0.35),\\
        \Omega &\sim U(350^\circ, 5^\circ), &\qquad
        I &\sim U(70^\circ, 75^\circ),\\
        \omega &\sim U(0^\circ,10^\circ), & \qquad
        M &= U(300^\circ,320^\circ)
    \end{align*}

    The propagator described in section \ref{ssub:positionOD} was used to generate data over one pass. Uniform noise with width of \(0.2^{\circ}, 0.2^{\circ}, 2 \text{ km}\) is added to the azimuth, elevation and range measurements respectively. A total of 4200 random orbits were generated.
    
    \subsubsection{EKF Based OD}
    \label{ssub:EKFOD}
    This orbit determination system consisted of a preliminary orbit determination system for initialization followed by the EKF. The preliminary orbit determination system used was Herrick-Gibbs \citep{vallado2001fundamentals}. The preliminary orbit determination was conducted on points on a section of the orbit near the perigee and the points were chosen such that the time period between the points was about 10 minutes, based on the results of the performance with the ascending Molniya scenarios in \citet{schaeperkoetter2011comprehensive}. The preliminary OD system was followed by an EKF with \(6 \sigma\) data editing (see \citet{wright2013odtk}). The dynamic system used for propagation of the EKF is identical to the propagator used for generation of the observations. This was done to compare the performance of the EKF in scenarios with significantly noisy observations. 
    
    For a parity in comparison of the two techniques the same set of orbits were used for parameter selection for the EKF and training of the machine learning based OD system. The first 4000 orbits were used to generate the error covariance matrices for the EKF. The 4000 orbits were also used to train the machine learning algorithm (5 fold cross validation). No changes to the learning algorithm were made from the previous sections. Both the EKF and the learning based OD system were tested on the data points generated from the last 200 orbits. Figure \ref{fig:ekfcomparison} shows the initial position errors for the 200 orbits under test. Note that if the EKF diverges to a point where no observations lie in the \(6 \sigma\) range the measurement editor will edit out all further observations limiting further updates. As can be seen, the learning based orbit determination system has significant performance advantage over the EKF, albeit under significantly larger computational requirements. The few outliers for the learning based system will converge to zero in probability with increase in training data as detailed in section \ref{learningtheory}.
    \begin{figure} [ht] 
        \centering
        \includegraphics[width=0.9\linewidth]{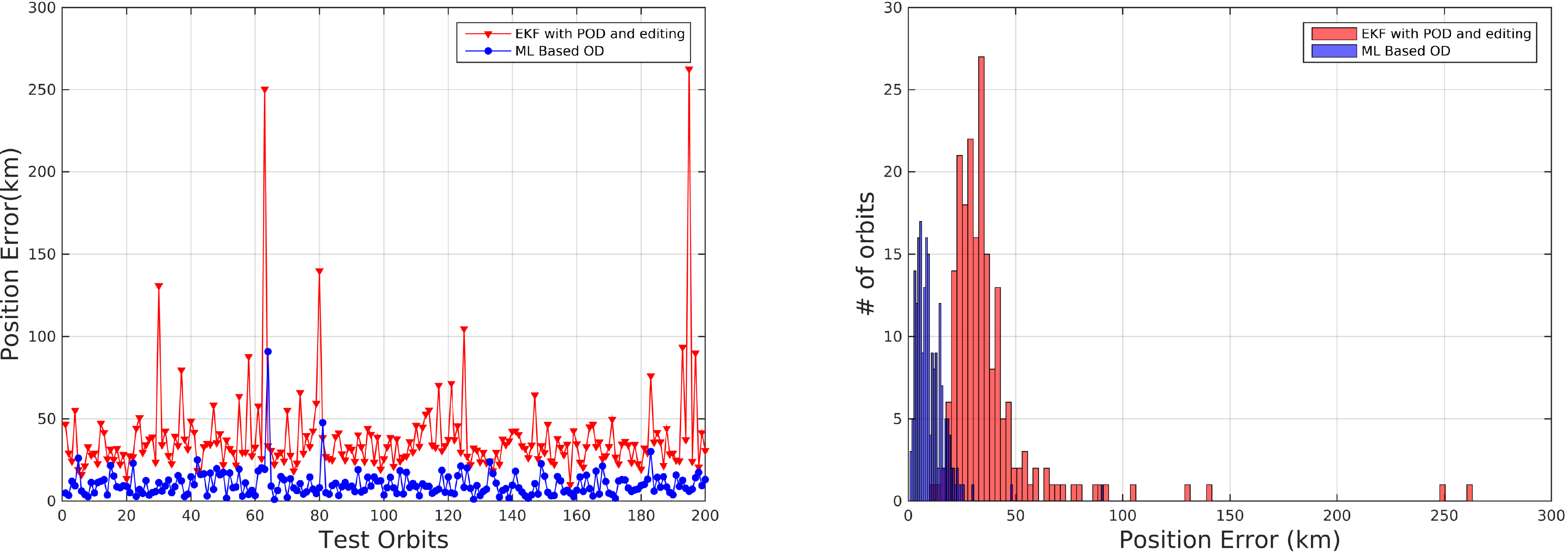}
        \caption{Comparison of EKF and Learning Based OD}
        \label{fig:ekfcomparison}
    \end{figure}

    \subsection{Discussion}
    \label{sub:discussion}
    A summary of the position error results is shown in Table \ref{table:AllTestErrors}. For Doppler based orbit determination, the Along-track and Cross-track errors are larger. This is a direct consequence of the fact that each individual point contains very little actual position information and position can only be gained from the changes in the probability distribution that generate the points. Radial Errors as low as radial information can be gained from zero Doppler cross-over points. The synthetic test data error magnitudes are of the same order as the errors produced by the experimental datasets. If the data generation systems are not sufficiently realistic, there can be descripancies in the test errors and the errors produced by the experimental datasets, as the learnt system will not directly correspond to the experimental data. This requirement also applies to the noise modeling of the Doppler measurements. While the MC2 orbit determination position errors are larger in comparison to GRIFEX, the equivalent comparison in terms of the orbital elements themselves produces the opposite result. This is a consequence of the fact that the optimization to compute orbital elements does not directly correspond to reducing position errors as the transformation between the two is not linear.

    With a constant number of training orbits (4000), decreasing uncertainty and improving observability improves accuracy. This behaviour can be seen in two scenarios. The accuracy of Doppler-only OD is lower in comparison to Position based orbit determination due to differences in observability and noise effects. The accuracy is highest for chaotic orbits with small initial spaces, where small changes in the initial condition produce very large changes in the orbit. Note here that while the lunar orbit scenario observation intervals were for 10 hours, the average transmissions characteristics produced equivalently lower number of transmissions per orbit such that the datasets of the position based orbit determination systems had the same order of training and test datapoints per orbit as in the LEO case.
    
    For position based orbit determination, the along-track errors are larger. This behaviour is expected as velocity information cannnot be directly gained from the features. The sum of average and RMS errors for the two satellites is less than the average separation between the satellites, and the spacecraft can be resolved in the orbit insertion scenario. Note that for the position based OD scenarios, noise of \( (0.1^\circ, 0.1^\circ, 1 \text{km}) \) were added to the (Azimuth, Elevation, Range) measurements.

        \begin{table}[h]
            \centering
            \caption{Comparison of test errors}
            \begin{tabular}{@{}*{7}{ c }@{}}
                \toprule
                & \thead{Prior Position \\ Std. Dev (km)} 
                & \thead{\(|\tilde{\sT}|\) \\ (hr)}
                & \thead{Radial Error \\ (km)}
                & \thead{Along-track \\ Error (km)}
                & \thead{Cross-track \\ Error (km)}
                & \thead{Total Error \\ (km)} \\

                \midrule
                \thead{GRIFEX \\ Synthetic} & 765 & 4.5 & 2.9581 & 56.5984 & 17.4887 & 59.3126 \\
                \thead{GRIFEX \\ Experimental} & 765 & 4.5  & 6.92 & 7.33 & -28.3050 & 30.05 \\
                \thead{MC2 \\ Synthetic} & 448 & 7  & 2.859 & 25.8533 & 6.1914 & 26.73 \\
                \thead{MC2 \\ Experimental} & 448 & 7 & 13.4617 & 28.9652 & -53.0385 & 61.91 \\
                \thead{Position \\ Synthetic (LEO) 1} & 966 & 1 & 15.7595 & 38.8545 & 10.7878  & 43.29  \\
                \thead{Position \\ Synthetic (LEO) 2} & 966 & 1 & 10.5967 & 25.3397 & 2.7114 & 27.5996 \\
                \thead{Positon \\ Synthetic (Lunar)} & 203 & 10 & 1.3158 & 5.2577 & 1.4032 & 5.5986 \\
                \bottomrule
            \end{tabular}
            \label{table:AllTestErrors}
        \end{table}

    \section{Conclusions and Future Work} \label{sec:conclusions}
    We presented the orbit determination problem of multiple spacecraft from a learning theoretic perspective. The learning system allows for estimation of spacecraft orbits over a very broad set of conditions. The learning algorithm requires only bounded and compact space specifications without the need for initialization in the linear region of the estimator, unlike traditional non-linear estimators. We showed that the combined algorithm is consistent when the mapping is continuous and the classifiers are well defined. We presented experimental results for Doppler-only orbit determination scenarios with operational spacecraft and synthetic deep space orbit scenarios. We also provide comparisons with the EKF in a synthetic scenario with large measurement noise, where the proposed approach overcomes the divergence limitations of the EKF. The learning approach can also be used to perform state estimation in weakly observable, unactuated dynamic systems with random and noisy observations over finite time periods.

    Future work will provide extensions and implementation architectures to integrate the proposed approach to federated ground station networks with low-gain antenna systems, advancing autonomy in spacecraft operations and orbit monitoring, supplementing existing orbit determination capabilities. We also intend to provide approaches to integrate multiple independent sources of observations with different types of sensors to perform orbit determination.
    
    \section*{Acknowledgments}
    The research described in this publication was carried out with the support of the Strategic University Research Partnership (SURP), 2015, under contract with the Jet Propulsion Laboratory, California Institute of Technology, under a contract with the National Aeronautics and Space Administration (NASA). We also thank Jon Hamkins and Sam Dolinar for their support and encouragement in this effort.

    \small{
        \bibliographystyle{ieeetr}
        \bibliography{Mainref.bib}
    }
    
    \section*{Appendix}
    \renewcommand{\thesubsection}{\Alph{subsection}}
    
    \subsection{Proof for theorem \ref{th:lt1}} \label{proof:th:lt1}
    \begin{proof}
        Let \(\sZ = \phi(\sB_\sX) \times \tsJ\). Let us denote by \(l_{01}\) the Bayes loss. We have
        \begin{equation*}
            \begin{aligned}
                \n{(A_{\hat{h}} - A_{h^\ast})r}_\rho^2 &= \int_\sZ \n{(\ck_{\hat{h}(\phi_p)}^\ast - \ck_{h^\ast(\phi_p)}^\ast) r}_\tsJ^2 d\rho(\phi_p,\gamma) \\
                & \leq \n{r}_{\sH_{\bar{k}}}^2 \int_{\sZ} \n{(\ck_{\hat{h}(\phi_p)}^\ast - \ck_{h^\ast(\phi_p)}^\ast)}_{\sL(\sH_\ck,\tsJ)}^2 d\rho \\
                & = \n{r}_{\sH_{\bar{k}}}^2 \int_{\sZ} \n{(\ck_{\hat{h}(\phi_p)} - \ck_{h^\ast(\phi_p)})^\ast}_{\sL(\sH_\ck,\tsJ)}^2 d\rho.
            \end{aligned}
        \end{equation*}
        Using \(\n{\cdot}_\sL \leq \n{\cdot}_{\sL_2}\) and the H\"{o}lder condition,
        \begin{equation*}
            \begin{aligned}
                \n{(A_{\hat{h}} - A_{h^\ast})r}_\rho^2 & \leq \n{r}_{\sH_{\bar{k}}}^2 \int_{\sZ} \n{(\ck_{\hat{h}(\phi_p)} - \ck_{h^\ast(\phi_p)})}_{\sL_2(\tsJ,\sH_\ck)}^2 d\rho \\
                & \leq \n{r}_{\sH_{\bar{k}}}^2 L_\ck \int_{\phi(\sB_\sX)} \n{\hat{h}(\phi_P) - h^\ast(\phi_P)}_{\sH_{\bar{k}}}^{2\beta} d\rho(\phi_P) \\
                & = \n{r}_{\sH_{\bar{k}}}^2 L_\ck \int_{\sB_\sX} \nbig{ \int_\sX \bar{k}(x,\cdot) \onev_{\set{\sgn(\hat{g}(\phi_p,x)) = i}} dP_{X} - \int_\sX \bar{k}(x,\cdot) \onev_{\set{\sgn(g^\ast(\phi_p,x)) = i}} dP_{X}}_{\sH_{\bar{k}}}^{2\beta} d\rho_P(P_X) \\
                & = \n{r}_{\sH_{\bar{k}}}^2 L_\ck \int_{\sB_\sX} \nbig{ \int_\sX \bar{k}(x,\cdot) \BgB{ \onev_{\set{\sgn(\hat{g}(\phi_p,x)) = i}} - \onev_{\set{\sgn(g^\ast(\phi_p,x)) = i}} } dP_{X}}_{\sH_{\bar{k}}}^{2\beta} d\rho_P(P_X). 
            \end{aligned}
        \end{equation*}
        From convexity of \(\n{\cdot}_{\sH_{\bar{k}}}\) and Jensen's inequality,
        \begin{equation*}
            \begin{aligned}
                \n{(A_{\hat{h}} - A_{h^\ast})r}_\rho^2 & \leq \n{r}_{\sH_{\bar{k}}}^2 L_\ck \int_{\sB_\sX} \BggB{ \int_\sX \n{\bar{k}(x,\cdot)}_{\sH_{\bar{k}}} \big| \onev_{\set{\sgn(\hat{g}(\phi_p,x)) = i}} - \onev_{\set{\sgn(g^\ast(\phi_p,x)) = i}} \big| dP_{X}}^{2\beta} d\rho_P(P_X) \\
                & \leq \n{r}_{\sH_{\bar{k}}}^2 L_\ck B_{\bar{k}}^{2\beta} \int_{\sB_\sX} \BggB{ \int_\sX \big| \onev_{\set{\sgn(\hat{g}(\phi_p,x)) = i}} - \onev_{\set{\sgn(g^\ast(\phi_p,x)) = i}} \big| dP_{X}}^{2\beta} d\rho_P(P_X) \\
                & \leq \n{r}_{\sH_{\bar{k}}}^2 L_\ck B_{\bar{k}}^{2\beta} \int_{\sB_\sX} \BggB{ \int_\sX \onev_{\set{\sgn(\hat{g}(\phi_p,x)) \neq \sgn{g^\ast}(\phi_p, x}} dP_{X}}^{2\beta} d\rho_P(P_X) \\
                & \leq \n{r}_{\sH_{\bar{k}}}^2 L_\ck B_{\bar{k}}^{2\beta} \int_{\sB_\sX} H^{2\beta} d\rho_P(P_X)
            \end{aligned}
        \end{equation*}
        We will first provide some observations and simplify \(H\). Let \(\eta(X) = P(Y=i | X=x) \). Fix \(\epsilon > 0\). We can say the following about \(H\):
        \begin{equation*}
            \begin{aligned}
                H & \leq \frac{1}{\epsilon} \int_\sX \big| \eta(X)-\frac{1}{2} \big| \onev_{\set{\sgn(\hat{g}(\phi_p,x)) \neq \sgn{g^\ast}(\phi_p, x)}} dP_{X} 
                + \int_{X:|\eta(X)-0.5|<\epsilon} \onev_{\set{\sgn(\hat{g}(\phi_p,x)) \neq \sgn{g^\ast}(\phi_p, x)}} dP_X \\
                & \leq \frac{1}{\epsilon} \int_\sX \big| \eta(X)-\frac{1}{2} \big| \onev_{\set{\sgn(\hat{g}(\phi_p,x)) \neq \sgn{g^\ast}(\phi_p, x)}} dP_{X}
                        + \int_X \onev_{\set{|\eta(X)-0.5|<\epsilon}} dP_X \\
                  & = \frac{1}{2 \epsilon} (\sR(\hat{g}, P_{XY}, l_{01}) - \sR(g^\ast, P_{XY}, l_{01})) + P_X(\set{|\eta(X)-0.5|<\epsilon}) \\
                  & = H_1 + H_2
            \end{aligned}
        \end{equation*} 
        Also,
        \begin{equation*}
                H^{2\beta} = (H_1 + H_2)^{2\beta} \leq C_{2\beta}(H_1^{2\beta} + H_2^{2\beta})
        \end{equation*}
        Where \(C_{2\beta} = 1\) when \(\beta \in (0,0.5)\) (from subadditivity property), \(C_{2\beta}=1\) when \(\beta = 0.5\) (all inner terms are positive) and \(C_{2\beta} = 2^{2\beta}\) when \(\beta \in (0.5,1]\). The argument for \(\beta \in (0.5,1]\) is as follows:

        \begin{equation*}
            (H_1 + H_2)^{2\beta} \leq (2 \max \set{H_1, H_2})^{2\beta} \leq 2^{2\beta} (\max \set{H_1, H_2})^{2\beta} \leq 2^{2\beta} (H_1^{2\beta} + H_2^{2\beta})
        \end{equation*}
        Simplifying,
        \begin{equation*}
            H^{2\beta} \leq \frac{1}{\epsilon^{2\beta}} (\sR(\hat{g}, P_{XY}, l_{01}) - \sR(g^\ast, P_{XY}, l_{01}))^{2\beta} + (P_X(\set{|\eta(X)-0.5|<\epsilon}))^{2\beta}
        \end{equation*}

        Therefore we have 
        \begin{align*}
            \n{(A_{\hat{h}} - A_{h^\ast})r}_\rho^2
            & \leq \begin{aligned} \n{r}_{\sH_{\bar{k}}}^2 L_\ck B_{\bar{k}} 
                    & \bigg[ \BgB{\frac{1}{\epsilon}}^{2\beta} \E_{P_{XY}} \big[ (\sR(\hat{g}, P_{XY}, l_{01}) - \sR(g^\ast, P_{XY}, l_{01}))^{2\beta} \big] \\
                    & + \E_{P_X} \big[ P_X(\set{|\eta(X)-0.5|<\epsilon})^{2\beta} \big] \bigg]\end{aligned} \\
            & \leq \begin{aligned} \n{r}_{\sH_{\bar{k}}}^2 L_\ck B_{\bar{k}} 
                    & \bigg[ \BgB{\frac{1}{\epsilon}}^{2\beta} \E_{P_{XY}} \big[\sR(\hat{g}, P_{XY}, l_{01}) - \sR(g^\ast, P_{XY}, l_{01}) \big]^{\max \set{1, 2\beta}} \\
                    & + \E_{P_X \sim \rho, X \sim P_X} \big[\onev_{\set{|\eta(X)-0.5|<\epsilon}}\big]^{\max \set{1, 2\beta}} \bigg] \end{aligned}
        \end{align*}

        where \(\sR\) is the Bayes error of the classifier for a given distribution \(p\). The last inequality follows from Jensen's inequality when \(0 < \beta \leq 0.5\) and from the fact that \((\sR(\hat{g}(\phi_p)) - \sR^\ast)^{2\beta} < (\sR(\hat{g}(\phi_p)) - \sR^\ast)\) and \((P_X(\set{|\eta(X)-0.5|<\epsilon}))^{2\beta} \leq P_X(\set{|\eta(X)-0.5|<\epsilon})\) when \(0.5 < \beta \leq 1\). Let \(P_X(\set{|\eta(X)-0.5|<\epsilon}) = \tilde{\epsilon}\). Switching to the surrogate loss with classification calibration and calibration function \(\varphi\) \citep{bartlett2006convexity}
        \begin{equation*}
        \begin{aligned}
            \n{(A_{\hat{h}} - A_{h^\ast})r}_\rho^2 & \leq \n{r}_{\sH_{\bar{k}}}^2 L_\ck B_{\bar{k}}^{2\beta} \bigg[ \BgB{\frac{1}{\epsilon}}^{2\beta} \big[\varphi^{-1} \BgB{(I(\hat{g},\infty) - I(g^\ast,\infty)} \big]^{\min(1,2\beta)} + \tilde{\epsilon}^{\min(1,2\beta)}\bigg]
        \end{aligned}
        \end{equation*}
        Next, we shall use universal consistency arguments presented for corollary 5.4 in \citet{blanchard2011generalizing} (see \citet{blanchardsupplementary} for arguments) and evaluate for \(\n{r}_{\sH_{\bar{k}}} = 1\). Assume that the number of datasets for the marginal predictor \(N_{trans}\) and the number of training points per dataset generated \(n_{trans}\) are such that \(N_{trans} = \mathcal{O}(n_{trans}^q)\) for some \(q > 0\). Then, if there is a sequence of \(\epsilon_j\) and a sequence of regularization parameter \(\xi_{1,j}\) is such that \(\epsilon_j \rightarrow 0, \xi_{1,j} \rightarrow 0\) and \(\xi_{1,j} \epsilon_j^{2\beta} \sqrt{\frac{j}{\log j}} \rightarrow \infty \)
        \begin{equation*}
            \n{(A_{\hat{h}} - A_{h^\ast})} \leq \lim_{\epsilon_j \rightarrow 0} \tilde{\epsilon}^{\min(0.5,\beta)}
        \end{equation*}
        in probability. From the above equation, if 
        \begin{equation*}
            \lim_{\epsilon_j \rightarrow c} \E_{P_X \sim \rho, X \sim P_X} \big[\onev_{\set{|\eta(X)-0.5| < \epsilon_j}}\big] = 0
        \end{equation*}
        then
        \begin{equation*}
            \n{(A_{\hat{h}} - A_{h^\ast})} \rightarrow 0
        \end{equation*}
        in probability. The question now remains whether it is always possible to construct such a sequence. When \(c > 0\), the solution is trivial and is a straightforward extension of the sequence that can be constructed for universal consistency in \citet{blanchard2011generalizing}. When \(c=0\), we first construct a sequence \(\xi^\prime_{1,j}\) which converges when \(I(\hat{g},\infty) \rightarrow \inf I(g,\infty)\) and split the sequence as \(\xi_{1,j} = (\xi_{1,j}^\prime )^{0.5}\) and \(\epsilon_j = (\xi_{1,j}^\prime)^{1/(4\beta)}\).



    \end{proof}

    \subsection{Proof for Lemma \ref{lem:lt2}} \label{proof:lem:lt2}

    \begin{proof}
        The proof is similar to that of theorem \ref{th:lt1}. The training dataset consists of points drawn from probability distributions \(\{P_i\}_{i=1}^{N_{dr}}\) resulting in embeddings \(\{\phi_i\}_{i=1}^{N_{dr}}\). We have 
        \begin{equation*}
            \begin{aligned}
                \n{(\hat{T}_{\hat{h}} - \hat{T}_{h^\ast})r}^2
                & \leq \frac{N_{dr}}{N^2_{dr}} \sum_{i=1}^{N_{dr}} \n{(\ck_{\hat{h}(\phi_i)}\ck_{\hat{h}(\phi_i)}^\ast - \ck_{h^\ast(\phi_i)}\ck_{h^\ast(\phi_i)}^\ast)r}^2 \\
                & \leq \frac{ \n{r}^2 }{N_{dr}} \sum_{i=1}^{N_{dr}} \n{(\ck_{\hat{h}(\phi_i)}\ck_{\hat{h}(\phi_i)}^\ast
                    - \ck_{\hat{h}(\phi_i)}\ck_{h^\ast(\phi_i)}^\ast
                    + \ck_{\hat{h}(\phi_i)}\ck_{h^\ast(\phi_i)}^\ast
                    - \ck_{h^\ast(\phi_i)}\ck_{h^\ast(\phi_i)}^\ast)}^2 \\
                & \leq \frac{ 2 \n{r}^2 }{N_{dr}} \sum_{i=1}^{N_{dr}} \n{ (\ck_{\hat{h}(\phi_i)}\ck_{\hat{h}(\phi_i)}^\ast
                    - \ck_{\hat{h}(\phi_i)}\ck_{h^\ast(\phi_i)}^\ast }^2
                    + \n{ \ck_{\hat{h}(\phi_i)}\ck_{h^\ast(\phi_i)}^\ast
                    - \ck_{h^\ast(\phi_i)}\ck_{h^\ast(\phi_i)}^\ast) }^2 \\
                & \leq \frac{ 4 \n{r}^2 B_\ck^2 }{N_{dr}} \sum_{i=1}^{N_{dr}} \n{ \ck_{\hat{h}(\phi_i)} - \ck_{h^\ast(\phi_i)} }^2 \\
                & \leq \frac{ 4 \n{r}^2 B_\ck^2 L_\ck }{N_{dr}} \sum_{i=1}^{N_{dr}} \n{\hat{h}(\phi_i) - h^\ast(\phi_i)}^{2\beta}
            \end{aligned}
        \end{equation*}
        Following similar arguments as those presented for theorem \ref{th:lt1} we have 
        \begin{equation*}
            \begin{aligned}
                \n{(\hat{T}_{\hat{h}} - \hat{T}_{h^\ast})r}^2
                & \leq \frac{ 4 \n{r}^2 B_\ck^2 L_\ck }{N_{dr} \epsilon^{2\beta}} \sum_{i=1}^{N_{dr}} \big[ (\sR(\hat{g}, P_{i, XY}, l_{01}) - \sR(g^\ast, P_{XY}, l_{01}))^{2\beta} + P_i(\set{|\eta(X)-0.5|<\epsilon})^{2\beta} \big]
            \end{aligned}
        \end{equation*}
        Denoting the term inside the summation as \(\tilde{H}\) we have, using Azuma-Mcdiarmid's inequality, with probability \(1-\delta\) 
        \begin{equation*}
            \begin{aligned}
                \n{(\hat{T}_{\hat{h}} - \hat{T}_{h^\ast})r}^2
                & \leq \frac{ 4 \n{r}^2 B_\ck^2 L_\ck }{\epsilon^{2\beta}} \big[ \sqrt{ \frac{\log \delta^{-1}}{N_{dr}}} + \E_{P_{XY} \sim \rho}[\tilde{H}] \big]
            \end{aligned}
        \end{equation*}
        Now, for a sequence \(\epsilon_j\) such that \(\epsilon_j^{2\beta} \sqrt{N_{dr}} \rightarrow \infty\) then the first term in the above equation converges in probability. Using the same arguments used at the end of theorem \ref{th:lt1} the second term converges as well.
    \end{proof}

    \subsection{Proof for theorem \ref{th:ltconsistency}} \label{proof:th:ltconsistency}

    \begin{proof}

        \begin{equation}
            \begin{aligned}
                \sE(\hat{r} \circ \hat{h}) - \sE(r_\sH \circ h^\ast) & = \n{A_{\hat{h}} \hat{r} - \gamma}_\rho^2 - \n{A_{h^\ast} r_{\sH_\ck} - \gamma}_{\rho}^2 \\
                    & = \n{A_{\hat{h}} \hat{r} - A_{h^\ast} r_{\sH_\ck}}_{\rho}^2 + 2 \ip{A_{\hat{h}} \hat{r} - A_{h^\ast} r_{\sH_\ck}, A_{h^\ast} r_{\sH_\ck} -\gamma}_\rho \\
                    & = \n{A_{\hat{h}} \hat{r} - A_{h^\ast} \hat{r} + A_{h^\ast} \hat{r} - A_{h^\ast} r_\sH}_\rho^2 + 2 \ip{A_{\hat{h}} \hat{r} - A_{h^\ast} r_{\sH_\ck}, A_{h^\ast} r_{\sH_\ck} -\gamma}_\rho \\
                    & \leq 2 (\n{A_{\hat{h}} \hat{r} - A_{h^\ast} \hat{r}}_{\rho}^2 + \n{A_{h^\ast} \hat{r} - A_{h^\ast} r_{\sH_\ck}}_{\rho}^2 + \ip{A_{\hat{h}} \hat{r} - A_{h^\ast} r_{\sH_\ck}, A_{h^\ast} r_{\sH_\ck} -\gamma}_\rho )\\
                    & = (I) + (II) + (III),
            \end{aligned}
        \end{equation}
        with the inequality coming from the fact that \(\n{\sum_{i=1}^N f_i}^2 \leq N \sum_{i=1}^N \n{f_i}^2\). We now provide universal consistency of \((I), (II)\) and \((III)\) using theorems \ref{th:defAT}, \ref{th:mincrit} and \ref{th:lt1}

    \textbf{Bound on (I)} We have from theorem \ref{th:lt1},

    \begin{equation}
        \begin{aligned}
            (I) & \leq \n{\hat{r}}_{\sH_{\bar{k}}}^2 \n{(A_{\hat{h}} - A_{h^\ast})}_{\sL(L^2)}^2
        \end{aligned}
    \end{equation}

    We have (similar to derivation of equation (33) in \citet{szabo2014two})
    \begin{equation}
        \begin{aligned}
            \label{eq:nrbound}
            \n{\hat{r}}_{\sH_\ck} & \leq \n{(A_h^\ast A_h + \xi_2)^{-1}}_{\mathcal{L}(\sH_\ck)} \n{\f{1}{N} \sum_{l = 1}^n \ck(\cdot,h_y(\phi_{p_l}))\gamma_{y,l}} \\
            & \leq \f{1}{\xi_2^2} C_\gamma^2 B_\ck,
        \end{aligned}
    \end{equation}
    If, in addition to the requirements of theorem \ref{th:lt1}, we have a sequence \(\xi_{2,j}\) such that \(\xi_{2,j} \rightarrow 0\) and \\ \(\xi_{2,j} \xi_{1,j} \epsilon_j^{2\beta} \sqrt{\frac{j}{\log j}} \rightarrow \infty\) then \((I) \rightarrow 0\) in probability
    

    \textbf{Bound on (III)}:
    
    \begin{equation}
        \begin{aligned}
            \label{eq:lthmtwotermsetup}
            (III) & = \ip{A_{\hat{h}} \hat{r} - A_{h^\ast} \hat{r}, A_{h^\ast} r_{\sH_\ck} -\gamma}_\rho + \ip{ A_{h^\ast} \hat{r} - A_{h^\ast} r_{\sH_\ck}, A_{h^\ast} r_{\sH_\ck} -\gamma}_\rho \\
            & = \ip{A_{\hat{h}} \hat{r} - A_{h^\ast} \hat{r}, A_{h^\ast} r_{\sH_\ck} -\gamma}_\rho \\
            &\leq \n{A_{\hat{h}} - A_{h^\ast}}_{\mathcal{L}(L^2)} \n{\hat{r}}_{\sH_\ck} \n{A_{h^\ast} r_{\sH_\ck} - \gamma}_{\sJ}
        \end{aligned}
    \end{equation}

    In addition, we can bound \(\n{A_{h^\ast} r_{\sH_\ck} - \gamma}_{\sJ}\) using the union bound and \ref{eq:nrbound} as 

    \begin{equation}
        \begin{aligned}
            \label{eq:Ahgammabound}
            \n{A_{h^\ast} r_{\sH_\ck} - \gamma}_{\sJ} & \leq \n{A_{h^\ast} r_{\sH_\ck}} + \n{\gamma} \\
            & \leq C_\gamma \bigg( 1 + \f{B_{\ck}^2}{\lambda_2} \bigg)
        \end{aligned}
    \end{equation}

    We now have,
    \begin{equation}
        (III) \leq \frac{C_\gamma^2 \sqrt{B_{\ck}}}{\xi_2} \bigg( 1 + \f{B_{\ck}^2}{\xi_2} \bigg) \n{A_{\hat{h}} - A_{h^\ast}}_{\mathcal{L}(L^2)} 
    \end{equation}

    Using equation \ref{eq:nrbound}, \ref{eq:Ahgammabound} and the sequence \(\xi_{2,j}\) constructed in term \((I)\), \((III) \rightarrow 0\).

    Finally, we have term \((II)\)
    \begin{equation*}
        \begin{aligned}
            \label{eq:lthmtwotermtwo}
            (II) & = \n{\sqrt{T_{h^\ast}}(\hat{r} - r_{\sH_\ck})}_{\sH_\ck}^2 \\
        \end{aligned}
    \end{equation*}
    Note that \(\hat{r}\) is trained with two stage sampled data which has been classified by \(\hat{h}\) and not by \(h^\ast\). To distinguish between the two we shall make a change to the notation: \(\hat{r}_{\hat{h}} = \hat{r}\) and we shall denote by \(\hat{r}_{h^\ast}\) as the empirical two stage regressor trained using the optimal marginal predictor. We have
    \begin{equation*}
        \begin{aligned}
            (II) &= \n{\sqrt{T_{h^\ast}}(\hat{r}_{\hat{h}} - r_{\sH_\ck})}_{\sH_\ck}^2 \\
                 &= \n{\sqrt{T_{h^\ast}} (\hat{r}_{\hat{h}} - \hat{r}_{h^\ast} + \hat{r}_{h^\ast} - r_{\sH})}_{\sH_\ck}^2 \\
                 &\leq 2\n{\sqrt{T_{h^\ast}} (\hat{r}_{\hat{h}} - \hat{r}_{h^\ast})}_{\sH_\ck}^2 + \n{ \sqrt{T_{h^\ast}}(\hat{r}_{h^\ast} - r_{\sH})}_{\sH_\ck}^2 \\
                 &= (IIa) + (IIb).
        \end{aligned}
    \end{equation*}
    Working with the first term, and using the operator identity \(T_1^{-1} - T_2^{-1} = T_1^{-1}(T_1 - T_2)T_2^{-1}\) we have 
    \begin{equation*}
        \begin{aligned}
            \hat{r}_{\hat{h}} - \hat{r}_{h^\ast} &= (\hat{T}_{\hat{h}} + \xi_2)^{-1} \hat{s}_{\hat{h}} - (\hat{T}_{h^\ast} + \xi_2)^{-1} \hat{s}_{h^\ast} \\
                                                 &= (\hat{T}_{\hat{h}} + \xi_2)^{-1} (\hat{s}_{\hat{h}} - \hat{s}_{h^\ast}) + ((\hat{T}_{\hat{h}} + \xi_2)^{-1} - (\hat{T}_{h^\ast} + \xi_2)^{-1}) \hat{s}_{h^\ast} \\
                                                 &= (\hat{T}_{\hat{h}} + \xi_2)^{-1} (\hat{s}_{\hat{h}} - \hat{s}_{h^\ast}) + (\hat{T}_{\hat{h}} + \xi_2)^{-1} (\hat{T}_{\hat{h}} - \hat{T}_{h^\ast}) (\hat{T}_{h^\ast} + \xi_2)^{-1} \hat{s}_{h^\ast} \\
                                                 &= (\hat{T}_{\hat{h}} + \xi_2)^{-1} (\hat{s}_{\hat{h}} - \hat{s}_{h^\ast}) + (\hat{T}_{\hat{h}} + \xi_2)^{-1} (\hat{T}_{\hat{h}} - \hat{T}_{h^\ast}) \hat{r}_{h^\ast}
        \end{aligned}
    \end{equation*}
    Using the above simplification in term (IIa)
    \begin{equation*}
        \begin{aligned}
            (IIa) &\leq 2 \n{\sqrt{T_{h^\ast}}(\hat{T}_{\hat{h}} + \xi_2)^{-1}}_{\sL(\sH_\ck)}^2 (\n{\hat{s}_{\hat{h}} - \hat{s}_{h^\ast}}^2 + \n{\hat{T}_{\hat{h}} - \hat{T}_{h^\ast}}_{\sL(\sH_\ck)}^2 \n{\hat{r}_{h^\ast}}_{\sH_\ck}^2 )
        \end{aligned}
    \end{equation*}
    The term \(\n{\sqrt{T_{h^\ast}}(\hat{T}_{\hat{h}} + \xi_2)^{-1}}_{\sL(\sH_\ck)}^2 \leq \frac{4}{\xi_2} \) with probability \(1 - \delta_2/3\) using the bounds in \citet{szabo2014two} (equation 32).
    Here we have, using the characteristics of the training data
    \begin{equation*}
        \begin{aligned}
            \n{\hat{s}_{\hat{h}} - \hat{s}_{h^\ast}}^2 &\leq \frac{1}{N_{dr}} \sum_i \n{ (\ck_{(\hat{h}(\phi_i)} - \ck_{h^\ast(\phi_i)}) \gamma_i}^2 \\
                                                       &\leq \frac{ C^2_\gamma }{ N_{dr} } \sum_i \n{ \ck_{(\hat{h}(\phi_i)} - \ck_{h^\ast(\phi_i)}}^2 \\
                                                       &\leq \frac{ C^2_\gamma L_\ck }{ N_{dr} }\sum_i \n{\hat{h}(\phi_i) - h^\ast(\phi_i)}^{2\beta} 
        \end{aligned}
    \end{equation*}
    Following arguments similar to those used in lemma \ref{lem:lt2}, when \(\epsilon_j\) \(\xi_{2,j}\) and are such that \(\epsilon_j \rightarrow 0, \xi_{2,j} \rightarrow 0\) and \(\epsilon_j^{2\beta} \xi_{2,j} \sqrt{j} \rightarrow \infty \) then \(\n{\sqrt{T_{h^\ast}}(\hat{T}_{\hat{h}} + \xi_2)^{-1}}_{\sL(\sH_\ck)}^2 \n{\hat{s}_{\hat{h}} - \hat{s}_{h^\ast}}^2 \rightarrow 0\) in probability for \(N_{dr} \geq j\). With the same sequence, using the properties of lemma \ref{lem:lt2} we have convergence of \( \n{\sqrt{T_{h^\ast}}(\hat{T}_{\hat{h}} + \xi_2)^{-1}}_{\sL(\sH_\ck)}^2 \n{\hat{T}_{\hat{h}} - \hat{T}_{h^\ast}}_{\sL(\sH_\ck)}^2 \).

    The convergence characteristics for term (IIb) are governed by theorem 2 in \citet{szabo2016learning} and we refer the reader to it for convergence. Note that if the convergence requirements for term (IIb) as stated in \citet{szabo2016learning} are satisfied then a sequence can be constructed for convergence of term (IIa). Therefore, if the convergence criteria for term (IIb) are met for universal consistency and the sequence \(\epsilon_j\) exists then \(\sE(\hat{r} \circ \hat{h}) \rightarrow \sE(r_\sH \circ h^\ast)\) and the entire system is consistent.

    \end{proof}

\end{document}